\def\eqref#1{equation~\ref{#1}}
\def\1{\bm{1}}
\DeclareMathAlphabet{\mathsfit}{\encodingdefault}{\sfdefault}{m}{sl}
\SetMathAlphabet{\mathsfit}{bold}{\encodingdefault}{\sfdefault}{bx}{n}
\title{Sub-Task Decomposition Enables Learning \\ in Sequence to Sequence Tasks}
\author{Noam Wies, Yoav Levine \& Amnon Shashua  \\
	The Hebrew University of Jerusalem\\
	\texttt{\{noam.wies,yoav.levine,shashua\}@cs.huji.ac.il} \\
}
\newcommand{\PPP}{{\mathbf P}}
\newcommand{\e}{{\mathbf e}}
\newcommand{\s}{{\mathbf s}}
\newcommand{\x}{{\mathbf x}}
\newcommand{\y}{{\mathbf y}}
\newcommand{\w}{{\mathbf w}}
\newcommand{\z}{{\mathbf z}}
\newcommand{\ie}{\emph{i.e.}}
\newcommand{\eg}{\emph{e.g.}}
\newtheorem{lemma}{Lemma}
\newtheorem{corollary}{Corollary}
\newtheorem{theorem}{Theorem}
\begin{document}

	\maketitle
	\RestyleAlgo{boxruled}
	
	\begin{abstract}
	The field of Natural Language Processing (NLP) has experienced a dramatic leap in capabilities with the recent introduction of huge Language Models (LMs).
	Despite this success, natural language problems that involve several compounded steps are still practically unlearnable, even by the largest LMs.
	This complies with experimental failures for end-to-end learning of composite problems that were demonstrated in a variety of domains.
	An effective mitigation is to introduce intermediate supervision for solving sub-tasks of the compounded problem.
	Recently, several works have demonstrated high gains by taking a straightforward approach for incorporating intermediate supervision in compounded natural language problems: the sequence-to-sequence LM is fed with an augmented input, in which the decomposed tasks' labels are simply concatenated to the original input (see figure~\ref{fig:math_example}).
	In this paper, we prove a positive learning result that motivates these recent efforts.
	We show that when concatenating intermediate supervision to the input and training a sequence-to-sequence model on this modified input, unlearnable composite problems can become learnable. We show that this is true for any family of tasks which on the one hand, are unlearnable, and on the other hand, can be decomposed into a polynomial number of simple sub-tasks, each of which depends only on $O(1)$ previous sub-task results.
	Beyond motivating contemporary empirical efforts for incorporating intermediate supervision in sequence-to-sequence language models, our positive theoretical result is the first of its kind in the landscape of results on the benefits of intermediate supervision for neural-network learning:
	Until now, all theoretical results on the subject are \textit{negative}, \ie, show cases where learning is impossible without intermediate supervision, while our result is \textit{positive}, showing that learning is facilitated in the presence of intermediate supervision.
	\end{abstract}

	\section{Introduction}
	
    Large-scale language models such as BERT~\citep{devlin2018bert}, T5~\citep{raffel2019exploring}, and GPT-3~\citep{GPT3} have recently pushed the envelope in many NLP tasks. Nevertheless, there are some problem-families that even the largest models do not seem to be capable of solving. One such family is that of ``multi-hop" reasoning problems (see, e.g.,~\cite{geva2021strategyqa,kalyan-etal-2021-much,pressmeasuring}) that require compounding operations in order to produce an answer. For example, Gopher~\citep{rae2021gopher}, one of the largest available language models, achieves $61\%$ accuracy in the StrategyQA benchmark~\citep{geva2021strategyqa} that requires implicit decomposition into reasoning steps, while human level performance is around $87\%$ accuracy. 
    
    The limitations of learning compounded tasks with neural networks in an end-to-end manner have been observed in a variety of non-linguistic domains.
    A leading experimental approach for addressing these is to first explicitly break the compounded operations into more basic “single-hop” operations and then combine the results. 
    ~\cite{gulchere2016Knowledge}, one of the earliest works on this subject,  propose that supervision for the single-hop intermediate steps is crucial for avoiding bad local minima in the optimization of neural networks. Afterward,~\cite{glasmachers2017limits} demonstrated that gradient-based end-to-end multi-hop learning is inefficient for solving complex problems that are easily solved by a divide-and-conquer strategy. Beyond position papers, specific examples were shown, \textit{e.g.},~\cite{chang2020AssessingSATNet} showed that SATNet~\citep{wang2019satnet} could not solve visual Sudoku without using intermediate labels to identify individual Sudoku digit images.

    
    Similar limitations were observed in language related compounded tasks, including commonsense reasoning~\citep{liu2021generated,wei2022chain,zelikman2022star}, math word problems ~\citep{piekos-etal-2021-measuring,wei2022chain}, and programs execution~\citep{nye2022show}. 
    The go-to architectures in this domain are powerful language models, which are trained as sequence-to-sequence models over text. 
    In this setting, a particular form of introducing intermediate supervision for compounded tasks has emerged: intermediate sub-tasks and their labels are concatenated to the original task's input to form a new input sequence, on which the sequence-to-sequence LM is trained. This approach has recently been widely adopted, \eg, by \citet{rajani-etal-2019-explain,cobbe2021gsm8k,piekos-etal-2021-measuring,recchia2021teaching,nye2022show,wei2022chain,zelikman2022star}. Figure~\ref{fig:math_example} illustrates this approach for math problems, as done in~\cite{ling-etal-2017-program,cobbe2021gsm8k}.
    These works show that training sequence-to-sequence models with concatenated sub-task decomposition supervision  significantly improves the results when compared to training the same model without the intermediate supervision. For example,~\cite{nye2022show} show $>99\%$ accuracy for $8$ digits addition when concatenating intermediate calculations to the input, while the vanilla accuracy without intermediate supervision is around $\sim35\%$.
	
	\begin{figure}[t]
		\begin{center}
			\centerline{\includegraphics[width=\columnwidth]{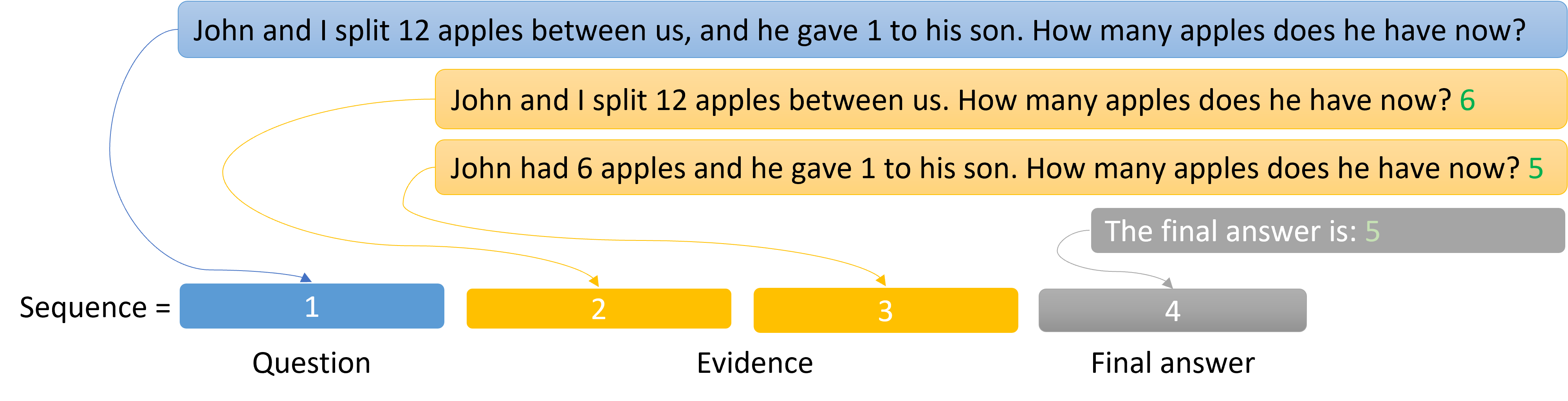}}
			\caption{
			An illustrative example of the prominent method for introducing sub-task decomposition and intermediate supervision for math word problems~\citep{ling-etal-2017-program,cobbe2021gsm8k}. Intermediate sub-tasks and their labels are concatenated to the original task’s input to form a new input sequence. At training time, the likelihood of the entire sequence following the original input is maximized conditioned on the input, and at test time only the original input is given to the model. 
			}
			\label{fig:math_example}
		\end{center}
		\vspace{-5mm}
	\end{figure}
	
    While such decomposition based approaches are intuitive, we are not aware of theoretical results that motivate and formulate their benefits for learning composite problems with neural-networks. 
    In this paper, we provide positive theoretical results in this domain, which are in fact the first of their kind (see related work in section~\ref{section:related_work}).
    We show our results for sequential models, integrating the intermediate supervision in a manner that mimics the above cited successful empirical approaches in the language domain. In this formulation, a learner learns to predict a sequence composed of the task inputs $\x$, followed by the single-hop reasoning steps referred to as the \emph{evidence}, and finally, the final answer $y$.
    We extend provable guarantees for the convergence of overparameterized recurrent neural networks~\citep{wang2021provable} and prove that with intermediate sub-task supervision, even a simple sequence-to-sequence model provably learns any task that obeys an efficient decomposition into simpler subtasks that depend only on a small fraction of the input. Importantly, both the sample complexity and the required number of gradient updates are polynomial.
    In contrast, we rely on existing works~\citep{valiant1984theory,goldreich1986construct,pmlr-v49-daniely16} to show that in the absence of intermediate supervision, there exist efficiently decomposable tasks that are unlearnable with polynomial time learning algorithms.

	Our results apply to a broad family of tasks. 
    As a first exemplifying step, we show a positive result for learning bit subset parity, a setting that is notoriously not amenable to gradient-based algorithms in an efficient way without intermediate supervision~\citep{kearns1998efficient,shalev2017failures,abbe2020universality,abbe2021power}. In this setting, the family of target functions consists of parities over subsets of unknown input bits. Specifically, the input is $d$ bits and the task is to predict whether the number of $1$’s in certain unknown subset of $\nicefrac{d}{2}$ bits is odd or even. 
    The corresponding sub-tasks we consider are the parities of subsets of the unknown input subset. We prove a theorem guaranteeing that, when intermediate supervision is available, efficient neural network learning is made possible. As a result, we show an exponential gap between the end-to-end and decomposition-based neural network learnability of the bit subset parity problem. 
    
    Next, we generalize the above result, and show that when sufficient intermediate supervision is available, any family of functions with a polynomial time complexity, \ie, functions that belong to the $\PPP$ time complexity class, are efficiently learnable by neural networks. Accordingly, based on either standard cryptographic assumptions~\citep{valiant1984theory,goldreich1986construct} or computational complexity hardness assumptions~\citep{pmlr-v49-daniely16} we prove that there exist tasks that, on the one hand, cannot be learned by any polynomial time algorithm and, on the other hand, can be efficiently learned by neural networks when intermediate supervision is present.
    

    Our main result can be stated as follows:
    
    \begin{theorem}\label{theorem:informal_whole_story}
    	(Informal) 
    	There exists a binary classification problem parameterized by size $d$, such that the following holds:
    	\begin{itemize}
    		\item On one hand, when equipped with sub-task decomposition supervision, a simple sequence-to-sequence model can get arbitrarily low $\epsilon>0$~zero-one loss with number of gradient updates that is \textbf{polynomial} in $d,\epsilon^{-1}$.
    		\item On the other hand, when supervision regarding sub-task is missing, then for \textbf{any} polynomial time learning algorithm and (constant) $\epsilon>0$, the zero-one loss will be \textbf{higher} than $\nicefrac{1}{2}-\epsilon$.
    	\end{itemize}  
	
    \end{theorem}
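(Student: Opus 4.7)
The plan is to establish the two halves of the theorem essentially independently, since they are separation statements. I would first pin down a single concrete task family on which both bounds can be proved: the most natural candidate is a ``pseudorandom'' construction whose input is $d$ bits and whose label is the output of a polynomial-time circuit $C_d$ that can be expressed as a sequence of $\mathrm{poly}(d)$ gates, each depending on $O(1)$ earlier gates. Using a one-way-function based pseudorandom function family (\emph{à la} Goldreich--Goldwasser--Micali, invoked through Valiant's cryptographic hardness-of-learning argument) or the Daniely--Shalev-Shwartz hardness reduction gives such a family. The $O(1)$-fan-in property is the crucial structural assumption that will drive the positive direction, so I would choose the construction so that this is explicit (e.g.\ NAND circuits of bounded fan-in, which can evaluate any $\PPP$-computable function).

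For the negative half, I would argue contrapositively: any polynomial-time learner that achieved error $\tfrac{1}{2}-\epsilon$ on $x \mapsto C_d(x)$ when only the final label is revealed could be converted into a polynomial-time distinguisher / weak learner for the underlying PRF / hard-to-learn class, contradicting the cryptographic (resp.\ average-case complexity) assumption. This step is essentially a direct citation of the pre-existing hardness results, so no new ideas are required; the only care is to make sure the reduction respects the ``(constant) $\epsilon > 0$'' quantification in the theorem statement, which follows from boosting.

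For the positive half I would construct the sub-task decomposition as follows: the intermediate ``evidence'' sequence is exactly the ordered list of gate outputs of $C_d$. By the $O(1)$-fan-in property, the $t$-th evidence token is a fixed (small) Boolean function of $O(1)$ earlier evidence tokens (and/or input bits). I would then invoke the overparameterized RNN convergence theorem of Wang et al.\ (2021), suitably extended to the sequence-to-sequence setting where the model is trained to predict each next token in the concatenated input$\|$evidence$\|$label sequence. The key lemma I would prove is that each individual next-token prediction is a ``simple'' target (a Boolean function of $O(1)$ positions of the preceding context, composable with the standard RNN readout), so that Wang et al.-style NTK / coupling arguments give $\mathrm{poly}(d,\epsilon^{-1})$ width, sample, and iteration complexity uniformly over the $\mathrm{poly}(d)$ positions. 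A union bound over positions then yields an overall $\mathrm{poly}(d,\epsilon^{-1})$ bound, and a telescoping argument converts low per-token cross-entropy error into low $0/1$ error on the final label.

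The main obstacle is the positive direction, specifically making the overparameterized RNN result work in the sequence-to-sequence teacher-forcing regime while preserving polynomial dependence in $d$. The original Wang et al.\ result bounds the NTK approximation of a single target function; here one must argue that the same network simultaneously learns $\mathrm{poly}(d)$ distinct next-token predictors, each depending only on a constant-sized subset of hidden-state coordinates. I expect this to require (i) identifying an NTK-expressible representation for any $O(1)$-fan-in Boolean gate applied to a bounded-length context window, and (ii) showing that the optimization couples to the NTK uniformly across sequence positions. Once these technical pieces are in place, combining them with the cryptographic lower bound yields the claimed exponential separation and proves the theorem.
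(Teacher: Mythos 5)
Your proposal follows essentially the same path as the paper: express a $\PPP$-computable target as a bounded fan-in circuit, use the gate outputs (in topological order) as the intermediate-supervision sequence, show each gate is an NTK-expressible function of $O(1)$ context positions (the paper does this via polynomial interpolation in its Lemma 4), invoke Wang et al.\ (2021) for the positive learning guarantee, and cite the cryptographic / complexity-theoretic hardness of Valiant, Goldreich, and Daniely for the negative half. The one piece you gesture at somewhat loosely --- converting teacher-forced per-position training error into autoregressive test-time error on the final label --- is exactly the paper's Lemma 1, which is a union bound over sub-task mistakes rather than a telescoping argument, but the idea is the same.
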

    To summarize, the main contributions of this paper are:
    \begin{enumerate}
    	\item We show the first \textit{positive result} that guarantees neural networks learnability in the presence of intermediate supervision for a problem that is unlearnable without it.  
    	\item We do so in the sequence-to-sequence setting that is currently used for applying state-of-the-art language models on complex multi-hop tasks in NLP.
    	\item We show that with sufficient intermediate supervision this sequence-to-sequence setting allows learning any function in the $\PPP$ time complexity class.
    \end{enumerate}
    
    The remainder of this paper is organized as follows. Section~\ref{section:analyzed_seq_to_seq_model} presents the sequence to sequence model we analyzed.
    Section~\ref{section:multi_hop_seq_to_seq_learnability} presents a hypothesis class and proves that it can be learned with our sequence to sequence model. Section~\ref{section:learning_parities_with_sqq_to_seq} presents a concrete example, and demonstrates that the task of learning bit-subset parity with sequence-to-sequence models can be learned with sub-task decomposition and the corresponding intermediate supervision. Finally, in section~\ref{section:universality} we generalize the positive results to any function in the $\PPP$ time complexity class, thus establishing our main result.
	
	\vspace{-1mm}
	\section{Related Work}\label{section:related_work}
	\vspace{-1mm}
	
    The concept of how learning can be enhanced by guiding a learner through intermediate tasks is an old one, dating back to animal training by shaping~\citep{skinner1958reinforcement,peterson2004day,krueger2009flexible}. 
    Since then, a large body of work has shown its practical benefits for various machine learning tasks. For example, there exists a rich line of work on the importance of shaping rewards and adding sub-goals in reinforcement learning tasks.~\cite{karlsson1994task} introduced the methodology of using knowledge in the reward function, in order to decompose a holistic task into several sub-tasks.~\cite{ng1999policy} established necessary and sufficient conditions for reward shaping to reserved optimal policies.~\cite{marthi2007automatic} investigate the problem of automatically learning a decomposition of the reward function. All these work intuitively rely on benefits of adding intermediate supervision. Recently,~\cite{zhai2022computational} showed that adding sub-goal rewards provably reduces the complexity of the synchronous value iteration algorithm. However, this reduction is linear in the number of the sub-goals, unlike our work that proves exponential gap in the supervised learning setting. Moreover, several of the notions in their analysis are unique to the reinforcement leaning setup and cannot be easily translated into the supervised learning setting (\eg, One-Way Intermediate States). 
    
    Negative theoretical results exist, showing that end-to-end learning of multi-hop problems is unfeasible without decomposition.   ~\cite{shalev2017failures} explored the theoretical limitations of end-to-end gradients based learning, studying learnability of tasks that are composed of classification and parity tasks, proving that the end-to-end approach does not converge in a polynomial number of gradient updates. They do show that when intermediate supervision is provided, the gradients have a much higher signal-to-noise ratio. However, they provide no guarantees that in this case learning is possible in a polynomial number of gradient updates. In addition,~\cite{shalev2016sample} proved an exponential gap between end-to-end-based verification sample complexity and the decomposition-based verification sample complexity.  
    However, again, an explicit setting in which providing intermediate supervision for training actually improves the situation to a point that learning is feasible, is lacking. 
    We provide the first theoretical result proving that neural networks also benefit from sub-task decomposition, while earlier theoretical works in this space only prove that end-to-end learning is unfeasible in some compounded cases.
    

	\vspace{-1mm}
	\section{The analyzed sequence-to-sequence learning algorithm}\label{section:analyzed_seq_to_seq_model}
	\vspace{-1mm}
	
    A recent successful empirical approach for solving compounded natural language problems~\citep{ling-etal-2017-program,rajani-etal-2019-explain,piekos-etal-2021-measuring,recchia2021teaching,cobbe2021gsm8k,nye2022show,wei2022chain,zelikman2022star} is to concatenate intermediate supervision labels to the input. This way, the language model receives a sequence composed of the input followed by the labels of the intermediate tasks, before emitting the final compounded answer. 
    For a compounded binary classification task which consists of a $d$-bit input string $\x$, with $\mathcal{S}$ denoting the string of intermediate step results, we denote the combined input sequence as $\z = \textrm{Concat}\{\x;\mathcal{S}\}$, and the combined output sequence as $\y$, defined in a standard autoregressive fashion by\footnote{For clarity, we wrote $y_t=z_{t+1}$ although the inputs domain in our model was $\left\{0,1\right\}$ while the output domain was $\left\{-1,1\right\}$ and hence $y_t=2\cdot \left(z_{t+1}-\nicefrac{1}{2}\right)$} $y_t=z_{t+1}$ (see figure~\ref{fig:analyzed_language_model} for a $d=8$ example).
    Training and testing follow conventional sequence-to-sequence model protocol:
    At training time, $z_t$ for $t>d$ will be the ground-truth sub-task result $y_{t-1}$ (a practice sometimes referred to as ``teacher forcing" \citep{williams1989learning}), and at test time, $z_t$ for $t>d$ will be the model's prediction at time $t-1$. 
    
    We analyze the classical Elman recurrent neural networks~\citep{elman1990finding} with
    ReLU activations as our sequence-to-sequence model. Given an input sequence $\z$ of length $T=d+\textrm{len}\left(\mathcal{S}\right)$~as defined above,
    the architecture $f^\text{RNN}$ computes: 
    \begin{align}
    	&\forall t\in\left[T\right]\quad h_{t}\left(\z\right)=\text{ReLU}\left(Wh_{t-1}+A\e_{z_{t}}\right)\\&\forall t\in\left[T\right]\quad f^{\text{RNN}}_{t}\left(\z\right)=B^{T}h_{t}\left(\z\right)\\
    	&h_{0}\left(\z\right)=\text{ReLU}\left( M_{0}\right)
    \end{align} 
    where $\e_0,\e_1\in\mathbb{R}^2$ are one-hot vectors, $A\in \mathbb{R}^{m\times 2}$ translates the input to the hidden dimension $m$, $ W\in\mathbb{R}^{m\times m}$ is the learned hidden weights matrix,  $B\in \mathbb{R}^{ m}$ is the output weights vector and  $M_0 \in \mathbb{R}^{ m}$ is the initialization of the hidden state. 
	
    We will use the binary cross-entropy loss over output locations for $t\ge d$, \ie,~our loss ignores the architecture's prediction of $\x$ and depends on its prediction of intermediate labels and final outcome:
		\vspace{-3mm}
	\begin{align}\label{equation:loss_function}
    	l\left(\y,\s\right)=\left(\frac{1}{T-d}\right)\sum_{t=d}^{T}\log\left(1+e^{-y_{t}\cdot s_{t}}\right)
	\end{align} 
		\vspace{-3mm}

    Algorithm~\ref{algorithm:sgd} below describes the analyzed training procedure of our sequence-to-sequence model.
    This algorithm describes a straightforward SGD training procedure where, for simplicity, we analyze a variant that updates only the hidden $W$ weights while keeping $A,B,M_0$ frozen at initialization. 
    This amounts to keeping the input, output and the $t=0$ hidden state untrained, and training only the core recurrence operation to perform the task while complying with these frozen components. 
    \begin{algorithm}
    	\caption{Training $f^{\text{RNN}}$ with SGD}\label{algorithm:sgd}
    	\KwData{Data set $\mathcal{D}$, learning rate $\eta$.}
    	\textbf{Initialization:} The entries of $W^{\left(0\right)},A,M_0$ are i.i.d. generated from $N(0,\frac{2}{m})$. The entries of $B$ are i.i.d. generated from $N(0,\frac{1}{m})$.\\
    	\For {$i=1,2,3...n$}{
    		Randomly sample $(\x_i, \y_i)$ from the data set $\mathcal{D}$.\\
    		$W^{\left(i\right)}=W^{\left(i-1\right)}-\eta\nabla_{W^{\left(i-1\right)}}\ell(\y_i\,,\, f^{\text{RNN},W^{\left(i-1\right)}}(\z_i))$.
    	}
    \end{algorithm}

	\vspace{-1mm}
	\section{Compounded Sequence to Sequence Learnability}\label{section:multi_hop_seq_to_seq_learnability}
	\vspace{-1mm}
	
    In this section, we present a hypothesis class and prove for it that the above described ``teacher forcing"~\citep{williams1989learning} of intermediate supervision at training time with algorithm~\ref{algorithm:sgd} provably leads to generalization in polynomial sample complexity and gradient updates. This guarantee will allow us to prove our positive results in the following sections, as we will show that interesting function families belong to this hypothesis class.
    
    In order to analyze the teacher forcing technique, we begin with an important observation. Essentially, we show that when the zero-one loss of all the single-hop sub-tasks is low, then
    it implies that also at test time, when the model does not have the ground truth results of the previous sub-tasks and the errors might accumulate, the zero-one loss on the final answer is still low:
    \begin{lemma}\label{lemma:sequence_labeling_epsilon_to_end_to_end_epsilon}
    	Denote by $z_{t}^{\text{train}}\coloneqq y_{t-1}$ the ground truth input at training time, and by $z_{t}^{\text{test}}\coloneqq f_{t-1}^{\text{RNN},W}\left(\z^{\text{test}}\right)$ the iteratively predicted input at test time. Then, for any $W$ the following holds:
    	\begin{equation}\label{equation:sequence_labeling_epsilon_to_end_to_end_epsilon}
    		\mathbb{E}_{\x}\left[l_{0-1}\left(y,f_{T}^{\text{RNN},W}\left(\z^{\text{test}}\right)\right)\right]\le\mathbb{E}_{\x}\left[\sum_{t=d}^{T}l_{0-1}\left(y_{t},f_{t}^{\text{RNN},W}\left(\z^{\text{train}}\right)\right)\right]
    	\end{equation} 
    \end{lemma}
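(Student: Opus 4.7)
The plan is to prove the pointwise inequality
\[
l_{0-1}\bigl(y, f_T^{\text{RNN},W}(\z^{\text{test}})\bigr) \;\le\; \sum_{t=d}^{T} l_{0-1}\bigl(y_t, f_t^{\text{RNN},W}(\z^{\text{train}})\bigr)
\]
for every fixed $\x$, and then take expectations on both sides. Because $l_{0-1}$ takes values in $\{0,1\}$, it suffices to show the contrapositive: if every single-hop training-time prediction is correct, then the end-to-end test-time prediction is correct as well.

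The core step is a straightforward induction on the position $t$, comparing $\z^{\text{train}}$ and $\z^{\text{test}}$ coordinate by coordinate. Assume that for all $t\in\{d,\dots,T\}$ one has $f_t^{\text{RNN},W}(\z^{\text{train}})=y_t$. For $t\le d$ the two sequences agree by definition, since both begin with $\x$. For $t>d$, by the inductive hypothesis $z_s^{\text{test}}=z_s^{\text{train}}$ for all $s<t$, which means the hidden state $h_{t-1}$ produced by the recurrence is identical under both inputs, so
\[
z_t^{\text{test}} \;=\; f_{t-1}^{\text{RNN},W}(\z^{\text{test}}) \;=\; f_{t-1}^{\text{RNN},W}(\z^{\text{train}}) \;=\; y_{t-1} \;=\; z_t^{\text{train}},
\]
where the third equality uses the assumption that training-time predictions match their targets, and the last equality is the teacher-forcing definition of $\z^{\text{train}}$. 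Hence $\z^{\text{test}}=\z^{\text{train}}$ on the entire prefix up to $T$, and in particular $f_T^{\text{RNN},W}(\z^{\text{test}})=f_T^{\text{RNN},W}(\z^{\text{train}})=y_T=y$, so the zero-one loss on the left-hand side is zero.

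Taking the contrapositive gives: whenever the left-hand side equals $1$, at least one summand on the right equals $1$, which yields the desired pointwise inequality. Linearity of expectation then produces the claimed bound.

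I do not expect a genuine obstacle here; the argument is essentially a bookkeeping/induction exercise enabled by the fact that teacher forcing aligns $z_t^{\text{train}}$ with the ground-truth $y_{t-1}$ while autoregressive decoding sets $z_t^{\text{test}}$ to the model's own output. The only subtlety worth stating carefully is the initial segment $t\le d$, where $\z^{\text{train}}$ and $\z^{\text{test}}$ coincide with $\x$ regardless of $W$, so the induction base case is immediate and the sum on the right can legitimately be restricted to $t\ge d$.
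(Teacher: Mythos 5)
Your proof is correct and takes essentially the same route as the paper: the core observation in both is that if every training-time sub-task prediction is correct then $\z^{\text{test}}=\z^{\text{train}}$ (which the paper asserts as ``clearly'' and you justify by a short induction), followed by a union-bound-style argument. The only stylistic difference is that you establish the inequality pointwise in $\x$ and then take expectations, whereas the paper phrases the union bound directly at the level of error probabilities; the two packagings are equivalent here because the zero-one loss is an indicator.
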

    \begin{proof} 
    	Clearly, for any $\x$ when $f^{\text{RNN},W}\left(\z^{\text{train}}\right)$ solves all of the sub-tasks correctly we have that $\z^{\text{test}}=\z^{\text{train}}$ and therefore they have the same zero zero-one loss. So it is enough to upper bound the probability that $f^{\text{RNN},W}\left(\z^{\text{train}}\right)$ is erroneous in any sub-task. 
    	Now by definition, for any $t$ the zero-one loss at the $t$'th location is equal to the probability of wrong prediction at this location. Therefore, by the union bound, we get that the sum of the zero-one loss over all the locations is upper bounding the probability of $f^{\text{RNN},W}\left(\z^{\text{train}}\right)$ make an error in any sub-task. See full details in section~\ref{section:compounded_sequence_to_sequence_learnability_details} at the appendix.
    \end{proof}
    As expected, due to a union bound, when the model does not have the ground truth results of the previous sub-task the error can increase by a factor of $T-d$ but this increase is relatively modest as long as $T$ is polynomial in $d$.
    
    Lemma~\ref{lemma:sequence_labeling_epsilon_to_end_to_end_epsilon} above assures us that it is enough to find an hypothesis class for which algorithm~\ref{algorithm:sgd} converges and generalizes when we do have the ground truth results of the previous sub-tasks, in order to prove that the teacher forcing technique works. As a candidate for such a hypothesis class, we consider tasks for which the output at each location $d\le t\le T$ can be written as sign of composition of linear functions (represented by $\w$ below) of at most $N<T$ input locations $j_1,\dots,j_N\le t$, with polynomials activations $\psi_{t}\left(x\right)=\sum_{i=0}^{\deg\left(\psi_{t}\right)}a_{t,i}x^{i}$:
    \begin{align}\label{equation:leranable_by_rnn}
    	\forall d\le t\le T\quad h_{t}\left(\z\right)=\text{sign}\left(\psi_{t}\left(\left\langle \frac{\w^{\left(t\right)}}{\left\Vert \w^{\left(t\right)}\right\Vert},\left(\begin{array}{c}
    		\e_{z_{j_{1}}}\\
    		\vdots\\
    		\e_{z_{j_{N}}}
    	\end{array}\right)\right\rangle \right)\right)
    \end{align}
    In order to prove convergence and generalization results, we will measure the complexity of functions in the above hypothesis class by a function $\phi\left(T,\psi,N\right)$, described formally in appendix~\ref{section:compounded_sequence_to_sequence_learnability_details}. Importantly, $\phi\left(T,\psi,N\right)$ is polynomial in both $T$ and $\max_{t,i}\left|a_{t,i}\right|$, while exponential in both $\max_{t}\deg\left(\psi_{t}\right)$ and $N$. We will denote by $\mathcal{H}_{\phi\left(T,\psi,N\right)}$ the hypothesis class described in eq~\ref{equation:leranable_by_rnn}. 
    
    Now, with this hypothesis class, we can combine lemma~\ref{lemma:sequence_labeling_epsilon_to_end_to_end_epsilon} with theorem 2 in~\cite{wang2021provable}. They study the learnability of RNNs for binary classification tasks\footnote{They also discussed the case where $y$ is a sequence, however, in their case it was considered part of the task, unlike our case where we add intermediate steps as a method of supervision.} without intermediate supervision, and prove that algorithm~\ref{algorithm:sgd} is capable of learning function where the final answer $y$ have low complexity $\phi\left(T,\psi,N\right)$.
    \begin{theorem}\label{theorem:multi_hop_seq_to_seq_learnability}
    	Denote by $z_{t}^{\text{test}}\coloneqq f_{t-1}^{\text{RNN},W}\left(\z^{\text{test}}\right)$ the iteratively predicted input at test time, and let $\epsilon,\delta>0$. Assume we run Algorithm~\ref{algorithm:sgd} for $       n>\tilde{O}\left(\left(\frac{\phi\left(T,\psi,N\right)+\log\left(\frac{1}{\delta}\right)}{\epsilon}\right)^{2}\right)$
    	iterations with learning rate $\eta=\frac{1}{m\sqrt{n}}$, 
    	then there exists $m^{\star}=\text{poly}\left(n,\delta^{-1},T\right)$ such that if $m>m^{\star}$ then for any $h\in\mathcal{H}_{\phi\left(T,\psi,N\right)}$ with probability at least $1-\delta$ over the randomness in Algorithm~\ref{algorithm:sgd}, the following holds:
    	\begin{align}\label{equation:zero_one_loss}
    		\frac{1}{n}\sum_{i=1}^{n}\mathbb{E}_{\x}\left[l_{0-1}\left(y,f_{T}^{\text{RNN},W^{\left(i\right)}}\left(\z^{test}\right)\right)\right]<\epsilon
    	\end{align}
    	where $W^{\left(i\right)}$ denotes the output of Algorithm~\ref{algorithm:sgd} at the $i$'th iteration and $l_{0-1}$ is the zero-one loss.
    \end{theorem}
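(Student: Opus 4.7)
My plan is to chain Lemma~\ref{lemma:sequence_labeling_epsilon_to_end_to_end_epsilon} with a multi-output extension of the overparameterized-RNN guarantee of~\cite{wang2021provable}. First I would apply Lemma~\ref{lemma:sequence_labeling_epsilon_to_end_to_end_epsilon} to each iterate $W^{(i)}$, which reduces the test-time quantity appearing in equation~\ref{equation:zero_one_loss} (where the hidden state is driven by the model's own iterative predictions) to the teacher-forced sum
\[
\frac{1}{n}\sum_{i=1}^{n}\mathbb{E}_{\x}\!\left[\sum_{t=d}^{T} l_{0-1}\!\left(y_{t},\, f_{t}^{\text{RNN},W^{(i)}}(\z^{\text{train}})\right)\right],
\]
so it is enough to control the sum over positions of the teacher-forced zero-one losses. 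Because we are in teacher-forcing mode, the input sequence $\z^{\text{train}}$ is a fixed function of $\x$ (the ground-truth evidence is spliced in), and for every position $t\in\{d,\dots,T\}$ the target $y_t$ is, by the definition of $\mathcal{H}_{\phi(T,\psi,N)}$, a sign of a polynomial $\psi_t$ composed with a linear functional over at most $N$ one-hot embeddings of coordinates $j_1,\dots,j_N\le t$. This places each position's target individually in the single-output hypothesis class covered by~\cite{wang2021provable}.

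Next I would extend their Theorem~2 from a single binary output to the sequence output produced by $f^{\text{RNN}}$. The loss in equation~\ref{equation:loss_function} is an average of $T-d+1$ per-position logistic losses, and the gradient with respect to $W$ is the corresponding average of per-position gradients. At the NTK scale used by~\cite{wang2021provable}, the RNN at initialization induces a feature map at each time step $t$ that is rich enough to realize any single target of complexity $\phi(T,\psi,N)$ with norm bounded by $\phi(T,\psi,N)$; this is exactly the content of their ``pseudo network'' construction. The key step is to exhibit a single reference weight $W^\star$, close to $W^{(0)}$, that simultaneously realizes \emph{all} $T-d+1$ sub-tasks in the linearized regime. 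I would build $W^\star$ by summing the per-position reference perturbations given by~\cite{wang2021provable}; because each perturbation has Frobenius norm at most $\phi(T,\psi,N)/\sqrt{m}$ and there are at most $T$ positions, the combined $\|W^\star-W^{(0)}\|_F$ stays within the radius for which the NTK approximation is valid, provided the overparameterization $m$ is taken large enough, yielding the $m^\star=\text{poly}(n,\delta^{-1},T)$ threshold in the statement. With $W^\star$ in hand, the standard online convex optimization / convexity-at-the-NTK regret argument of~\cite{wang2021provable} applied to the averaged logistic loss gives the usual $O(1/\sqrt{n})$ bound, which after conversion to zero-one loss and summation across positions yields a bound of $\epsilon$ on the teacher-forced quantity above whenever $n\gtrsim \left(\phi(T,\psi,N)+\log(1/\delta)\right)^{2}/\epsilon^{2}$, up to logarithmic factors.

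The main obstacle I anticipate is precisely this simultaneous-realizability step: Wang et al.'s argument centers on a single target, while our loss is an average over positions, and one must verify that (i) the per-position features are jointly expressive at the same random initialization, and (ii) the accumulated norm of the joint reference $W^\star-W^{(0)}$ remains sub-threshold so that the first-order NTK coupling holds uniformly across positions and iterates. Once this is established, the remaining pieces are bookkeeping: the logistic-to-zero-one conversion, the factor $T-d+1$ absorbed into the complexity measure $\phi(T,\psi,N)$ (which the paper defines to be polynomial in $T$), and finally feeding the resulting teacher-forced bound into Lemma~\ref{lemma:sequence_labeling_epsilon_to_end_to_end_epsilon} to obtain the test-time statement of equation~\ref{equation:zero_one_loss}. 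Full details of the per-position norm accounting and the exact definition of $\phi(T,\psi,N)$ would be deferred to the appendix referenced as section~\ref{section:compounded_sequence_to_sequence_learnability_details}.
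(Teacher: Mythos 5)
Your overall chain of reasoning --- reduce test-time loss to teacher-forced per-position loss via Lemma~\ref{lemma:sequence_labeling_epsilon_to_end_to_end_epsilon}, then invoke the overparameterized-RNN guarantee of \cite{wang2021provable}, then convert logistic loss to zero-one loss and absorb the $T-d$ factor into $\phi(T,\psi,N)$ --- is exactly what the paper does. Where you diverge is in treating the multi-output (sequence-labeling) case as an extension that must be re-derived. The paper's Theorem~\ref{theorem:provably_learenable_in_overparametrized_rnn_wang} (restated from Theorem~2 of \cite{wang2021provable} together with their Remark~H.1) already bounds the averaged teacher-forced zero-one loss $\mathbb{E}_{\x}\bigl[\frac{1}{n(T-d)}\sum_{t=d}^{T}\sum_{i=1}^{n}l_{0-1}(y_{t},f_{t}^{\text{RNN},W^{(i)}}(\z))\bigr]$, i.e.\ it is stated directly for the sequence of outputs; the footnote attached to the theorem in the main text makes exactly this point. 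So no new simultaneous-realizability argument is needed: the paper just multiplies by $T-d$, chains with Lemma~\ref{lemma:sequence_labeling_epsilon_to_end_to_end_epsilon} per iterate, and chooses $n$ to meet the $\epsilon$ budget.

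Moreover, the re-derivation you sketch is not quite airtight as stated. You propose building the joint reference $W^\star$ by summing per-position perturbations $\Delta_t$ and checking only that the accumulated Frobenius norm stays inside the NTK radius. But staying in the linearization ball does not by itself give simultaneous realizability: for position $t'$ the linearized output picks up cross terms $\sum_{t\neq t'}\langle\nabla_{W^{(0)}}f_{t'},\Delta_{t}\rangle$, and one must argue these are negligible, which requires a near-orthogonality or concentration argument over the random initialization rather than mere norm bookkeeping. The paper avoids this entirely by citing \cite{wang2021provable}'s existing sequence-level analysis (their per-position kernel construction and the radius bound in Lemma~\ref{lemma:wang_low_loss_near_initialization}, where $R$ scales as a sum over positions of $\sqrt{(\tilde y^{(t)})^T(\mathbf{H}^{t,\infty})^{-1}\tilde y^{(t)}}$), so if you do want to give a self-contained argument you would need to supply the cross-term control explicitly.
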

    Note that sections~\ref{section:extension_for_sgd_with_finite_precision},\ref{section:extension_for_gd_with_finite_precision}~of the appendix extends theorem~\ref{theorem:multi_hop_seq_to_seq_learnability} for both SGD and GD with finite precision.
    
    In the next sections, we will prove our positive results by showing the intermediate single-hop sub-tasks of the analyzed tasks belong to $\mathcal{H}_{\phi\left(T,\psi,N\right)}$ with low complexity ${\phi\left(T,\psi,N\right)}$.
	\vspace{-1mm}
	\section{Learning Bit-Subset Parity With Sequence to Sequence Models}\label{section:learning_parities_with_sqq_to_seq}
	\vspace{-1mm}
    As a concrete demonstration, in this section we show that unlike the end-to-end case, bit-subset parity \textit{can be learned} with neural networks when intermediate supervision is provided. 
    We begin by defining the challenging task of learning parities over unknown subsets of input bits. Specifically, for a $d$-bit string with a subset of $\nicefrac{d}{2}$ randomly predefined unique indices $i_1,\dots,i_{\nicefrac{d}{2}}$, our goal is to train a predictor mapping $\x\in\left\{0,1\right\}^d$ to $y=\left(-1\right)^{\sum_{j=1}^{\nicefrac{d}{2}}x_{i_j}}$ where $\x$ is uniformly distributed. In words, $y$ indicates whether the number of $1$’s in the given subset of coordinates of $\x$ is odd or even.
    
    We analyze this task as a ``multi-hop" task by decomposing it into natural intermediate sub-tasks: parities of subsets of the predefined input subset $x_{i_1},...,x_{i_{\nicefrac{d}{2}}}$. Concretely, assuming for simplicity that $\nicefrac{d}{2}$ is a power of $2$, and beginning with only two adjacently indexed input bits at a time, we recursively treat the parity of every two adjacently indexed subgroups as an intermediate task.
    Figure~\ref{fig:analyzed_language_model}(a) illustrates this binary tree sub-task decomposition of our parity problem. The leaves of the tree of intermediate labels $\mathcal{T}$ are ${-1}^{x_{i_1}+x_{i_2}},\dots,{-1}^{x_{i_{\nicefrac{d}{2}-1}}+x_{i_{\nicefrac{d}{2}}}}$ and each node in the tree represents the sub-task of calculating the parity function over its descendants. 
    
    In order to fit into the sequence-to-sequence setting of section~\ref{section:analyzed_seq_to_seq_model}, we translate our imaginary tree of intermediate labels $\mathcal{T}$ into a sequence of intermediate labels $\mathcal{S}$ by inverse-BFS like tree traversal, and then concatenate the sequence $\mathcal{S}$ after the input $\x$. An exact formulation of the mapping from tree $\mathcal{T}$ to sequence of intermediate labels $\mathcal{S}$ is given in appendix~\ref{section:sub_tasks_proofs}. 
    
    \begin{figure}[t]
    	\begin{center}
    		\centerline{\includegraphics[width=\columnwidth]{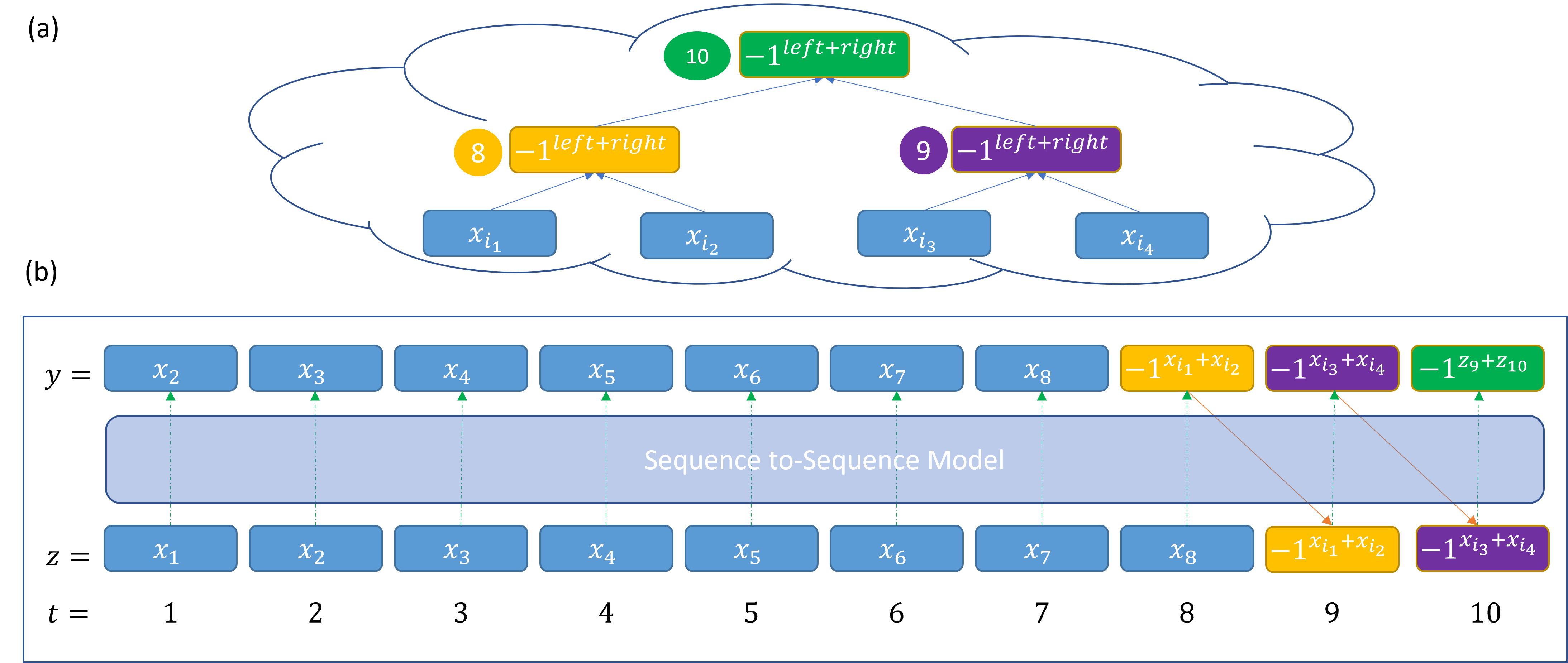}}
    		\caption{
    			Illustration of the proposed input and output for learning the $d=8$ bit-subset parity problem with sequence-to-sequence models.
    		}
    		\label{fig:analyzed_language_model}
    	\end{center}
    	\vspace{-5mm}
    \end{figure}
    \vspace{-1mm}
	\subsection{Learnability of Bit-Subset Parity With and Without Intermediate Supervision}\label{section:learning_parities_result} In this section we show that the sub-tasks of learning bit-subset parity are simple enough to be covered by the result in theorem~\ref{theorem:multi_hop_seq_to_seq_learnability}, \ie, we prove that our sequence-to-sequence formulation of the bit-subset parity target function, which includes the intermediate labels sequence $\mathcal{S}$ defined above, can be written as a multivariate function where each of its outputs is a simple low degree polynomial of at most $O\left(1\right)$ inputs bits. 
We show that this is indeed the case, \ie,~that all the parity target functions comprising the intermediate supervision to our problem belong to $\mathcal{H}_{\phi\left(T,\psi,N\right)}$ (see section~\ref{section:multi_hop_seq_to_seq_learnability}) with $N$, $\max_{t}\deg\left(\psi_{t}\right),\,\max_{t,i}\left|a_{t,i}\right|$ that do not grow
with $d$. Importantly, when defining the input sequence to be only the original input, without the concatenated sub-task decomposition labels, then the function $h_T\left(\x\right)$ clearly depends on $\nicefrac{d}{2}$ bits, and therefore will require $N=\nicefrac{d}{2}$, that leads to exponential complexity $\phi\left(T,\psi,N\right)=\Omega\left(e^d\right)$. Thus, no efficient learning is guaranteed for the original compounded task.

We begin by showing that our single-hop tasks of parities over two bits (see illustration in figure~\ref{fig:analyzed_language_model}(a)) are simple degree-$2$ polynomials:
\begin{lemma}\label{lemma:length_2_parity_low_complexity}
	There exists degree two polynomial $\psi\left(x\right)=a_2 x^2+a_1 x+a_0$ with bounded coefficients $\forall i\quad\left|a_{i}\right|<10$ as well as $\w\in\mathbb{R}^4$ such that:
	\begin{equation}
		\forall z_{1}, z_{2}\in\left\{ 0,1\right\} \quad\psi\left(\left\langle \frac{\w}{\left\Vert \w\right\Vert },\left(\begin{array}{c}
			\e_{z_{1}}\\
			\e_{z_{2}}
		\end{array}\right)\right\rangle \right)=\begin{cases}
			1 & z_{1}=z_{2}\\
			-1 & z_{1}\neq z_{2}
		\end{cases}
	\end{equation}
\end{lemma}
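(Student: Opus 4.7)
The plan is to give an explicit construction, since the lemma only asserts existence and the domain of interest has only four points. Writing $\w=(w_1,w_2,w_3,w_4)^T$ and recalling that $\e_0=(1,0)^T$ and $\e_1=(0,1)^T$, the concatenation $(\e_{z_1};\e_{z_2})\in\mathbb{R}^4$ is one-hot in each pair of coordinates, so $\langle \w,(\e_{z_1};\e_{z_2})\rangle$ equals $w_{z_1+1}+w_{z_2+3}$. The key observation is that we do not need $\psi$ to interpolate four distinct values: because the target is symmetric under swapping $z_1$ and $z_2$, it suffices to collapse the four inputs onto three pre-activation values on which a quadratic can realize the desired $\pm 1$ pattern.

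First I would choose $\w=(1,-1,1,-1)^T$, so that $\|\w\|=2$. Enumerating the four cases, the normalized inner product $\langle \w/\|\w\|,(\e_{z_1};\e_{z_2})\rangle$ equals $+1$ when $(z_1,z_2)=(0,0)$, equals $-1$ when $(z_1,z_2)=(1,1)$, and equals $0$ for both mixed cases $(0,1)$ and $(1,0)$. Thus the four points are mapped onto $\{-1,0,+1\}$, with the equal-bit cases sitting at $\pm 1$ and the unequal-bit cases at $0$.

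Next I would take $\psi(x)=2x^2-1$, whose coefficients $(a_2,a_1,a_0)=(2,0,-1)$ clearly satisfy the bound $|a_i|<10$. A one-line check gives $\psi(\pm 1)=1$ and $\psi(0)=-1$, matching the target $+1$ on equality and $-1$ on inequality, which completes the verification required by the lemma.

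The only obstacle here is conceptual rather than technical: one must recognize that although the equal-vs-unequal pattern is not a linear function of the raw bits, the one-hot encoding embeds the four inputs into $\mathbb{R}^4$ with enough geometric room that a symmetric, sign-alternating choice of $\w$ reduces the pre-activation to a single scalar whose absolute value distinguishes equality from inequality, so that a single quadratic nonlinearity suffices to recover the $\pm 1$ outputs. Once this is noticed, the construction is essentially forced up to sign and scale, and the coefficient bound of $10$ is extremely loose.
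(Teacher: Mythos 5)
Your proof is correct. You take the same overall strategy as the paper — choose a linear functional $\w$ so that the four inputs collapse onto three pre-activation values, then pick a degree-two polynomial matching the desired $\pm 1$ outputs on those three values — but your specific choices are noticeably cleaner. The paper uses $\w \propto (1,0,1,0)^T$, which effectively counts the number of $z_i$ equal to $0$ and yields the asymmetric pre-activation values $\sqrt{2},\,\tfrac{1}{\sqrt{2}},\,0$; it then runs Lagrange interpolation to obtain $\psi(z)=\tfrac{9}{2}z^2-\tfrac{7}{\sqrt{2}}z+1$. Your antisymmetric choice $\w=(1,-1,1,-1)^T$ instead effectively maps each bit to $(-1)^{z_i}$ and averages, producing the symmetric values $\{-1,0,+1\}$, under which the interpolating quadratic is simply $\psi(x)=2x^2-1$ with no linear term and integer coefficients. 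Both satisfy the loose $|a_i|<10$ bound; yours makes the verification a one-line computation and exposes the symmetry of the XOR target more transparently, which is a small but genuine improvement in readability.
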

\begin{proof} We will use $\w$ to sum the first coordinates of $\e_{z_1},\e_{z_2}$, and use polynomial interpolation to find a  degree two polynomial $\psi$ that interpolates the $z_1=z_2=0\,,\,z_1\neq z_2\,,\,z_1=z_2=1$ points, see full details in appendix~\ref{section:sub_tasks_proofs}.
\end{proof}
The above lemma implies that all of the target functions in our defined intermediate supervision belong to $\mathcal{H}_{\phi\left(T,\psi,N\right)}$ for $\phi\left(T,\psi,N\right)=O\left(d\right)$.
Therefore, together with theorem~\ref{theorem:multi_hop_seq_to_seq_learnability}, it assures us that when intermediate supervision is available, Algorithm~\ref{algorithm:sgd} can learn bit-subset parities with polynomial network size, sample complexity and number of gradient updates. 

Now, after we showed that when incorporating intermediate supervision bit-subset parities can be learned by a neural network, we will use the results of~\cite{shalev2017failures} to establish an exponential gap between the end-to-end and decomposition-based neural network learnability\footnote{\label{footnote:negatives_only_for_gd}Note that the negative result holds only for full gradient descent and does not hold for {stochastic}  gradient descent, for which~\cite{abbe2020universality,abbe2021power} show that parities are efficiently learnable when using complex non-random initialization.}:
\begin{corollary}\label{corollary:gap_in_parities}
	When learning bit-subset parities using neural networks, the following holds:
	\begin{itemize}
		\item On one hand, when equipped with sub-task decomposition supervision, a simple sequence-to-sequence model can get arbitrarily low $\epsilon>0$~zero-one loss with number of gradient updates that is \textbf{polynomial} in $d,\epsilon^{-1}$.
		\item On the other hand, when supervision regarding sub-task is missing, then for any (constant) $\epsilon>0$ with high probability over the target parity, the zero-one loss will be \textbf{higher} than $\nicefrac{1}{2}-\epsilon$ unless the number of gradient updates is \textbf{exponential} in $d$.
	\end{itemize}
\end{corollary}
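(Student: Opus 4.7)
The plan is to establish the two claims separately: the positive claim is a direct assembly of Lemma~\ref{lemma:length_2_parity_low_complexity} and Theorem~\ref{theorem:multi_hop_seq_to_seq_learnability}, while the negative claim is obtained by invoking the gradient-based hardness result of \citet{shalev2017failures} for the original, undecomposed end-to-end task.

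For the positive direction, I would first observe that every node in the binary sub-task tree $\mathcal{T}$ of Figure~\ref{fig:analyzed_language_model}(a) computes the parity of exactly two previously-emitted bits. By Lemma~\ref{lemma:length_2_parity_low_complexity}, each such node is representable as $\mathrm{sign}\bigl(\psi(\langle \w/\|\w\|,(\e_{z_{j_1}},\e_{z_{j_2}})^{\top}\rangle)\bigr)$ with a universal degree-$2$ polynomial $\psi$ whose coefficients are bounded by $10$ and a fixed $\w\in\mathbb{R}^{4}$, both independent of $d$. Because the inverse-BFS ordering of $\mathcal{T}$ into $\mathcal{S}$ (see appendix~\ref{section:sub_tasks_proofs}) places each node strictly after its two children, the constraint $j_{1},j_{2}\le t$ in the definition of $\mathcal{H}_{\phi(T,\psi,N)}$ is satisfied at every location $t\ge d$. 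Since the decomposition tree has $T-d = O(d)$ nodes, $N=2$, $\max_{t}\deg(\psi_{t})=2$, and coefficient magnitudes are $O(1)$, the complexity $\phi(T,\psi,N)$ is polynomial in $d$. Plugging this into Theorem~\ref{theorem:multi_hop_seq_to_seq_learnability} yields a polynomial bound on the number of gradient updates and samples required to drive the zero-one loss at every intermediate location below $\epsilon/(T-d)$; Lemma~\ref{lemma:sequence_labeling_epsilon_to_end_to_end_epsilon} then lifts this to a zero-one loss at most $\epsilon$ on the final answer at test time, paying only the modest multiplicative factor $T-d$.

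For the negative direction, the strategy is to appeal to \citet{shalev2017failures}, who show that when the target is drawn uniformly from the family of parities on subsets of $\nicefrac{d}{2}$ out of $d$ input bits, the population gradient of the squared or cross-entropy loss has variance exponentially small in $d$ at every parameter configuration of any predictor architecture from a broad class that includes feed-forward and recurrent networks. Consequently, with high probability over the random parity, full gradient descent with polynomial-precision arithmetic follows essentially the same trajectory regardless of the target, so any predictor it produces in a polynomial number of steps is statistically almost independent of the target bits, and its zero-one loss on the end-to-end task $\x\mapsto y$ is within $\epsilon$ of $\nicefrac{1}{2}$. The restriction to full gradient descent (footnote~\ref{footnote:negatives_only_for_gd}) is preserved.

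The main obstacle will be matching the quantitative statement of the Shalev-Shwartz bound to the exact predictor class and loss used here: one must check that Algorithm~\ref{algorithm:sgd} applied to the sequence-to-sequence RNN of Section~\ref{section:analyzed_seq_to_seq_model}, with $A$, $B$, $M_{0}$ frozen at initialization and the cross-entropy loss in eq.~\ref{equation:loss_function} restricted to the final location $t=T$ (the end-to-end setting), still falls within their gradient-orthogonality framework, and that the exponential smallness of the gradient variance survives this restriction. The positive side, by contrast, is essentially bookkeeping: confirming that $\phi(T,\psi,N)$ from appendix~\ref{section:compounded_sequence_to_sequence_learnability_details}, which is exponential in $N$ and $\deg(\psi)$ but polynomial in $T$ and coefficient magnitudes, stays polynomial in $d$ at the values $N=2$, $\deg(\psi)=2$, $T=O(d)$, $\max|a_{t,i}|=O(1)$ produced by Lemma~\ref{lemma:length_2_parity_low_complexity}.
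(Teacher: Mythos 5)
Your proposal follows the paper's own route on both directions: the positive claim via Lemma~\ref{lemma:length_2_parity_low_complexity} plugged into Theorem~\ref{theorem:multi_hop_seq_to_seq_learnability} with $N=2$, $\deg(\psi_t)=2$, $T=O(d)$, and the negative claim via the low-gradient-variance argument of~\citet{shalev2017failures} combined with the orthogonality of parity functions. One small redundancy: Theorem~\ref{theorem:multi_hop_seq_to_seq_learnability} already internally invokes Lemma~\ref{lemma:sequence_labeling_epsilon_to_end_to_end_epsilon} to yield a bound on the test-time end-to-end zero-one loss (its conclusion is stated in terms of $\z^{\text{test}}$), so you need not apply the lemma a second time. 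The concern you flag at the end --- whether the RNN with frozen $A,B,M_0$ and the cross-entropy loss restricted to $t=T$ really falls under the Shalev-Shwartz/Shamir gradient-orthogonality framework, and whether the positive and negative results hold for the same algorithm --- is precisely what the paper resolves in the appendix: Lemma~\ref{lemma:full_gradients_bounds} verifies that $f^{\text{RNN}}$ satisfies the polynomially-bounded-gradients hypothesis of Theorem~\ref{theorem:end_to_end_exponential_steps_main_text}, and Section~\ref{section:extension_for_gd_with_finite_precision} extends the positive learnability guarantee to finite-precision full gradient descent so that both sides of the corollary refer to the same training procedure, with Lemma~\ref{lemma:parities_constant_error} supplying the remaining ingredient (that a target-independent predictor must have zero-one loss close to $\nicefrac{1}{2}$).
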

\begin{proof} 
	Follows directly by combining theorem~\ref{theorem:multi_hop_seq_to_seq_learnability} and lemma~\ref{lemma:length_2_parity_low_complexity} with the the negative results in \cite{shalev2017failures}. See full details in section~\ref{section:negatives_resutls_proofs} at the appendix.
\end{proof}

	\subsection{Bit-Subset Parity Experiments}\label{section:learning_parities_experiments} In section~\ref{section:learning_parities_result} we proved an exponential gap when using Elman RNNs~\citep{elman1990finding} to learn bit-subset parity with and without sub-task decomposition. This section empirically demonstrates that the same gap exists with the commonly used Transformer~\citep{vaswani2017attention} architecture. We trained a series of models while varying the input sizes from $8$ bits to $256$ bits. For each input size, we trained a BERT-base sized Transformer model for $100k$ iterations\footnote{With the exception of the $16$ and $18$ bits tasks without intermediate supervision, for which we increase the number of iterations to $300$K and $2$M respectably in order to try and successfully learn the task. } with and without intermediate supervision. The intermediate supervision was introduced exactly as described in the previous subsection, see Figure~\ref{fig:analyzed_language_model} for an illustration. See full technical details of the training apparatus in appendix~\ref{section:learning_parities_experiments_appendix}.

    Figure~\ref{figure:iteration_for_non_random_accuracy} clearly shows that in a practical setting, using common Transformer networks, a very large gap quickly opens between the settings with and without intermediate supervision.
    The employed BERT base sized Transformer architecture is a strong network that pushed the envelope on very challenging NLP tasks, and is much stronger than the theoretically analyzed RNN. Still, learning even the 32 bit subset parity task without supervision proved to be too challenging even for this network (no learning after over $2$M steps), while it easily learned the task in the presence of intermediate supervision.     Overall this experiment, performed on the same task on which we prove our theoretical results, reinforces their relevance to common Transformer architectures.    
	
	\vspace{-3mm}
	\section{Universality of Decomposition Based Sequence-To-Sequence Learning}\label{section:universality}
	\vspace{-3mm}
    In this section, we prove our main result (outlined in the introductory theorem~\ref{theorem:informal_whole_story}). On the one hand, we generalize the positive results of section~\ref{section:learning_parities_result} by showing that when sufficient intermediate supervision is available, a neural network can efficiently learn any function in the $\PPP$ time complexity class. On the other hand, we rely on existing works~\citep{valiant1984theory,goldreich1986construct,pmlr-v49-daniely16} to show that under either standard cryptographic assumptions or computational complexity hardness assumptions, there exist functions in the $\PPP$ time complexity class that cannot be learned by any polynomial time learning algorithm without intermediate supervision.
    
    We begin by defining the decomposition of any function $f$ in the $\PPP$ time complexity class into sub tasks. For that, we will use the fact that any such $f$ has polynomial circuit complexity (see for example theorem 9.30 in~\cite{sipser2013introduction}), and therefore can be computed by a boolean circuit with polynomial size. We will denote by $G=\left(V,E\right)$ the 
    directed acyclic graph associated with such a circuit, and by $l_v$ the logic gates of each vertex $v$. Furthermore, since both the``AND" and ``OR" logical gates can be decomposed into a boolean circuit with binary-tree like structure, we may assume that the input degree of each vertex is $O\left(1\right)$.

	\begin{wrapfigure}{r}{0.5\textwidth}
		\begin{center}
			\vspace{-1mm}
			\includegraphics[scale=0.175,clip=false,trim=107 120 100  52]{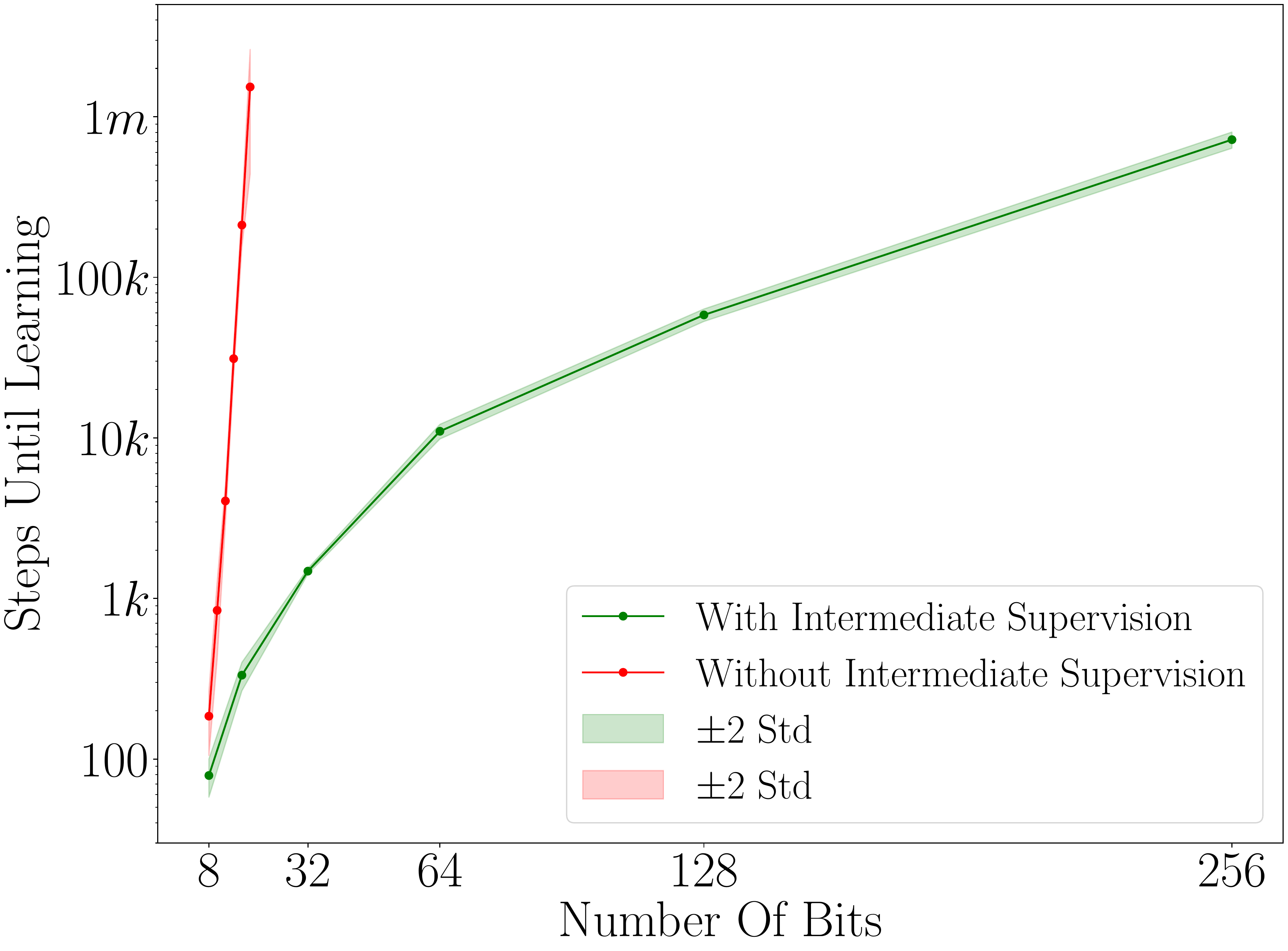}
		\end{center}
		\vspace{3mm}
		\caption{
        	The number of steps until a BERT-base sized Transformer learns bit-subset parities with and without intermediate supervision. By learning we mean validation accuracy higher than $60\%$. While this definition is somehow arbitrary, in practice we observed a grokking phenomenon~\citep{power2021grokking} where very soon after the accuracy became higher than random level it also became almost perfect (accuracy $>95\%$). 
			\label{figure:iteration_for_non_random_accuracy}
		}\vspace{-2.5mm}
	\end{wrapfigure}
    Now, in order to fit into the sequence-to-sequence setting of section~\ref{section:analyzed_seq_to_seq_model}, we define the intermediate labels sequence $\mathcal{S}$ for any $f$. Basically, each non-leaf vertex $v\in V$ will represent an intermediate task with its ground-truth label determined by $l_{v}$, and we will use a topological sorting of $G$ in order to translate $G$ into a sequence of intermediate labels $\mathcal{S}$ with length $T\coloneqq\left|V\right|$ (see figure~\ref{fig:analyzed_language_model} for a concrete example of this abstract construction strategy). Importantly, as in the bit-subsets parity task, $T$ is polynomial in $d$.
	
	In order to show our generalized positive result, theorem~\ref{theorem:multi_hop_seq_to_seq_learnability} motivates us to prove that
    our sequence-to-sequence formulation of any function $f$ in the $\PPP$ time complexity class, which includes the
    intermediate labels sequence $\mathcal{S}$ defined above, can be written as a multivariate function where
    each of its outputs is a simple low degree polynomial of at most $O\left(1\right)$ input bits. Lemma~\ref{lemma:f_v_low_complexity} below shows that this is indeed the case, \ie,~that all the target functions comprising the intermediate supervision to our problem belong to $\mathcal{H}_{\phi\left(T,\psi,N\right)}$ (see section~\ref{section:analyzed_seq_to_seq_model}) with  $N$, $\max_{t}\deg\left(\psi_{t}\right),\,\max_{t,i}\left|a_{t,i}\right|$ that do not grow
    with $d$. 
    \begin{lemma}\label{lemma:f_v_low_complexity}
    	For any logical gate $l_{v}:\left\{ 0,1\right\} ^{N}\rightarrow\left\{ 0,1\right\} $ with $N=O\left(1\right)$, there exists $O\left(1\right)$ degree polynomial $\psi\left(x\right)=\sum_{i=0}^{\text{deg}\left(\psi\right)}a_{i}x^{i}$ with bounded coefficients $\max_{i}\left|a_{i}\right|=O\left(1\right)$ as well as $\w\in\mathbb{R}^{2N}$ such that:
    	 	\vspace{-2mm}
    	\begin{equation}
    		\forall z_{1},\dots,z_{N}\in\left\{ 0,1\right\} \quad\psi\left(\left\langle \frac{\w}{\left\Vert \w\right\Vert },\left(\begin{array}{c}
    			\e_{z_{1}}\\
    			\vdots\\
    			\e_{z_{N}}
    		\end{array}\right)\right\rangle \right)=l_{v}\left(z_{1},\dots,z_{N}\right)
    	\end{equation}
    	     	\vspace{-3mm}
    \end{lemma}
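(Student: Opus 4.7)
The plan is to reduce the statement to univariate polynomial interpolation, mimicking the strategy that was used for the $N=2$ case in lemma~\ref{lemma:length_2_parity_low_complexity}. First I would choose $\w\in\mathbb{R}^{2N}$ so that the inner product $\langle\w,(\e_{z_{1}}^{\top},\dots,\e_{z_{N}}^{\top})^{\top}\rangle$ is an injective function of $(z_{1},\dots,z_{N})\in\{0,1\}^{N}$. A concrete and convenient choice is to set the pair of coordinates of $\w$ corresponding to position $i$ equal to $(0,2^{i-1})$. Since $\e_{z_{i}}$ is the one-hot encoding of $z_{i}\in\{0,1\}$, this choice yields
\[
\left\langle \w,\left(\begin{array}{c}\e_{z_{1}}\\ \vdots\\ \e_{z_{N}}\end{array}\right)\right\rangle =\sum_{i=1}^{N}z_{i}\,2^{i-1},
\]
which is exactly the binary encoding of $(z_{1},\dots,z_{N})$ as an integer in $\{0,1,\dots,2^{N}-1\}$. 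Because $N=O(1)$, we also have $\|\w\|=O(1)$, and after normalization by $\|\w\|$ the $2^{N}$ resulting inputs to $\psi$ are distinct points lying in an $O(1)$-length interval with pairwise gaps $\Omega(1)$.

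Next, I would define $\psi$ to be the Lagrange interpolating polynomial through the $2^{N}$ points $(x_{k},l_{v}(\mathrm{bin}(k)))$ for $k=0,1,\dots,2^{N}-1$, where $x_{k}:=k/\|\w\|$ and $\mathrm{bin}(k)\in\{0,1\}^{N}$ denotes the binary expansion of $k$. The construction immediately gives the desired pointwise equality with $l_{v}$, and the degree of $\psi$ is at most $2^{N}-1=O(1)$.

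It remains to verify that the coefficients $a_{i}$ of $\psi$ are uniformly bounded by an absolute constant. Each Lagrange basis polynomial
\[
L_{k}(x)=\prod_{j\neq k}\frac{x-x_{j}}{x_{k}-x_{j}}
\]
is a product of $O(1)$ linear factors whose numerator has $O(1)$-sized constants and whose denominator is a product of $O(1)$ nonzero differences of magnitude $\Omega(1)$, so the expanded coefficients of $L_{k}$ are all $O(1)$ in absolute value. Since $\psi(x)=\sum_{k=0}^{2^{N}-1}l_{v}(\mathrm{bin}(k))\,L_{k}(x)$ is a sum of at most $2^{N}=O(1)$ such polynomials weighted by values in $\{0,1\}$, the triangle inequality gives $\max_{i}|a_{i}|=O(1)$, completing the proof.

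The main obstacle is not conceptual but bookkeeping around the coefficient bound. The key observation that makes this bookkeeping trivial is that the assumption $N=O(1)$ forces every relevant quantity (number of interpolation nodes, length of the interval they live in, minimum spacing between them, and the magnitude of the target values) to be $O(1)$, so no delicate estimates are needed. The only design choice that requires a bit of care is the selection of $\w$: one must ensure the $2^{N}$ possible values of the inner product are all distinct, which is why the powers-of-two construction above is convenient.
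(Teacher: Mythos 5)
Your proposal is correct and follows essentially the same approach as the paper's proof: choose $\w$ with powers-of-two weights so that the normalized inner product injectively encodes $(z_1,\dots,z_N)$ as an integer in $\{0,\dots,2^N-1\}$, then apply Lagrange interpolation and use $N=O(1)$ to bound the degree and coefficients. The only cosmetic differences are that you encode via the second one-hot coordinate (giving $\sum_i z_i 2^{i-1}$) rather than the first (the paper uses $\sum_i 2^i\,\mathbf{1}_{z_i=0}$), and your coefficient bound is argued directly from the Lagrange formula whereas the paper simply notes finiteness of the set of Boolean functions on $N=O(1)$ inputs; neither change is substantive.
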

         	\vspace{-3mm}
    \begin{proof} 
    	We will use $\w$ to uniquely represent each possible combination of $z_0,\dots,z_N$ as an $N$ bit real value number, and use polynomial interpolation to find a $2^N$-degree polynomial $\psi$ that interpolates the $z_{1}=\dots=z_{N}=0\,,\dots,z_{1}=\dots=z_{N}=1$ points. Finally the $O\left(1\right)$ coefficients boundedness follow from taking maximum\footnote{This maximum exists since there is a finite number of possible functions from $\left\{ 0,1\right\} ^{N}$ into $\left\{ 0,1\right\} $.} over all possible $l_v$ logical gates. see full details in appendix~\ref{section:sub_tasks_proofs}.
    \end{proof} 
   	\vspace{-1mm}
   	
    The above lemma implies that all of the target functions in our defined intermediate supervision belong to $\mathcal{H}_{\phi\left(T,\psi,N\right)}$ for $\phi\left(T,\psi,N\right)=O\left(d\right)$.
    Therefore, together with theorem~\ref{theorem:multi_hop_seq_to_seq_learnability}, it assures us that when intermediate supervision is available, Algorithm~\ref{algorithm:sgd} can learn any function in the $\PPP$ time complexity class with polynomial network size, sample complexity and number of gradient updates. 
    
    Now, after we showed that when incorporating intermediate supervision any function in the $\PPP$ time complexity class can be learned by a neural network, our main results is a simple corollary of the above results:
    \begin{corollary}\label{corollary:whole_story}
    	Under either standard cryptographic assumptions or computational complexity hardness assumptions,
    	there exists a binary classification problem parameterized by size $d$, such that the following holds:
    	\begin{itemize}
    		\item On one hand, when equipped with sub-task decomposition supervision, a simple sequence-to-sequence model can get arbitrarily low $\epsilon>0$~zero-one loss with number of gradient updates that is \textbf{polynomial} in $d,\epsilon^{-1}$.
    		\item On the other hand, when supervision regarding sub-task is missing, then for \textbf{any} polynomial time learning algorithm and (constant) $\epsilon>0$, the zero-one loss will be \textbf{higher} than $\nicefrac{1}{2}-\epsilon$.
    	\end{itemize}
    \end{corollary}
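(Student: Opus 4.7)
The plan is to prove the corollary by exhibiting a single witness binary classification problem that simultaneously enjoys the efficient decomposition-based learnability set up earlier in this section and inherits an established polynomial-time unlearnability result. Because both bullets refer to the \emph{same} task family, the strategy reduces to picking a function family that lies in $\PPP$ — which grants the first bullet essentially for free via Lemma~\ref{lemma:f_v_low_complexity} and Theorem~\ref{theorem:multi_hop_seq_to_seq_learnability} — and whose unlearnability without intermediate supervision has already been established under standard cryptographic or complexity-theoretic assumptions.

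For the positive bullet, given any candidate $f \in \PPP$, I would invoke the polynomial-circuit characterization (Theorem 9.30 in~\cite{sipser2013introduction}) to obtain a Boolean circuit of polynomial size whose internal gates all have fan-in $O(1)$. The topological ordering described immediately above the corollary turns this circuit into an intermediate label sequence $\mathcal{S}$ of length $T = |V| = \mathrm{poly}(d)$, and Lemma~\ref{lemma:f_v_low_complexity} places every resulting sub-task inside $\mathcal{H}_{\phi(T,\psi,N)}$ with $N$, $\deg(\psi_t)$, and $\max_{t,i}|a_{t,i}|$ all bounded by absolute constants. Since $\phi$ is polynomial in $T$ and in the coefficient bound while only exponential in the two constant-valued quantities, we obtain $\phi(T,\psi,N) = \mathrm{poly}(d)$. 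Plugging into Theorem~\ref{theorem:multi_hop_seq_to_seq_learnability} yields an $\epsilon$-small average sub-task zero-one loss after $\mathrm{poly}(d,\epsilon^{-1})$ gradient updates, and Lemma~\ref{lemma:sequence_labeling_epsilon_to_end_to_end_epsilon} converts this into an $\epsilon$-small end-to-end zero-one loss at test time.

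For the negative bullet, I would pick a concrete family inside $\PPP$ for which no polynomial-time learner can do better than $\nicefrac{1}{2}-\epsilon$ from labeled $(\x,y)$ samples alone. Two standard sources suffice. Under cryptographic assumptions, the pseudorandom function families of~\cite{valiant1984theory,goldreich1986construct} provide a polynomial-time-computable (hence $\PPP$) target whose input-output behavior is computationally indistinguishable from a random function, immediately ruling out any non-trivial polynomial-time learner; alternatively, the complexity-based results of~\cite{pmlr-v49-daniely16} produce a polynomial-time-computable hypothesis class whose PAC-learning is hard under refutation-type assumptions. Either witness is simultaneously in $\PPP$, so the first bullet applies, and unlearnable in polynomial time without intermediate labels, so the second bullet applies.

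The main obstacle is conceptual rather than technical: one must verify that the chosen hard family really lies in $\PPP$ so that the positive machinery of this section actually applies, and one must verify that the negative results are stated against a class of learners broad enough to include gradient-trained neural networks of polynomial size and budget. Both checks are routine — cryptographic hardness is defined against all polynomial-time distinguishers, which subsumes any polynomial-time SGD procedure, and polynomial-circuit computability is built into the definition of pseudorandom function families — but they are the steps I would state explicitly in a careful write-up.
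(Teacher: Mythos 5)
Your proposal is correct and follows exactly the paper's route: you derive the positive bullet by combining the polynomial-circuit decomposition of any $f\in\PPP$ with Lemma~\ref{lemma:f_v_low_complexity} and Theorem~\ref{theorem:multi_hop_seq_to_seq_learnability}, and you derive the negative bullet by instantiating with a polynomial-time-computable family that is hard to learn under cryptographic~\citep{valiant1984theory,goldreich1986construct} or complexity-theoretic~\citep{pmlr-v49-daniely16} assumptions. Your explicit sanity checks (that the hard family is genuinely in $\PPP$, and that the hardness holds against any polynomial-time learner including gradient-based ones) are exactly the right things to verify and are the content behind the paper's terse ``follows directly.''
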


    \begin{proof} 
    	Follows directly by combining theorem~\ref{theorem:multi_hop_seq_to_seq_learnability} and lemma~\ref{lemma:f_v_low_complexity} with either the negative results in ~\cite{valiant1984theory,goldreich1986construct} or in~\cite{pmlr-v49-daniely16}.
    \end{proof} 
	
	\vspace{-6mm}
	\section{Discussion}\label{section:discussion}
	\vspace{-3mm}
	In this paper, we show for a broad family of functions an exponential gap between learning algorithms that rely on intermediate supervision and algorithms that do not rely on intermediate supervision. Across domains and architectures, there has been a wide range of proposed methods for introducing intermediate supervision. Some design specialized architectures, some add relevant loss terms, etc. The method that is taking over in the NLP domain is straightforward, and is particularly natural for this domain in which the core architectures are strong sequence-to-sequence Language Models: Concatenate the intermediate supervision to the input, and thus jointly train the model to maximize the likelihood of all the intermediate labels as well as the overall output. Our analysis is framed in this space, and motivates this intuitive incorporation of intermediate supervision in the framework of sequence-to-sequence models. We show that even with a simple sequence-to-sequence architecture it is feasible to expect such simultaneous compounded learning to be useful. In this regard, we view our work as providing timely theoretical feedback to the rapid empirical advances in this field.

	\textbf{Limitations:} 
    We proved universal learnability results when sufficient intermediate supervision was provided. A fundamental question is what happens when we limit the amount of sub-task supervision. For the task of bit-subset parity, we demonstrated that supervision regarding $O\left(d\right)$ sub-tasks can yield an exponential advantage. An interesting question that we leave open for future work is whether there exists a similar advantage with only $O\left(1\right)$ sub-tasks.
    
    In addition, while our results show an exponential gain, it is still unclear which sub-tasks are solvable by end-to-end methods, and which tasks require decomposition? Interestingly, a recent study~\citep{abbe2022merged} addressed exactly this question for one-layer hidden networks in the mean-field regime. However, our understanding of this question for practical architectures is still very limited.
	
	\newpage
	
	\section*{Reproducibility Statement}
	A complete proof of all the theoretical claims was included in the appendix. We also provide the source code for the bit-subset parity experiment in~\href{https://github.com/HUJI-Deep/sub_task_decomposition}{https://github.com/HUJI-Deep/sub\_task\_decomposition}.  

\section*{Acknowledgments and Disclosure of Funding}	
We thank Eran Malach and Shai Shalev-Shwartz for a helpful discussion on our stronger negative results, as well as Lifu Wang for clarifying~\cite{wang2021provable}. This research was supported by the ERC (European Research Council) and the ISF (Israel Science Foundation). Yoav Levine was supported by the Israel Academy of Sciences Adams fellowship.

	
	\bibliography{iclr2023_conference}
	\bibliographystyle{iclr2023_conference}
	
	\newpage
	\appendix
	\section{Compounded Sequence to Sequence Learnability Details}\label{section:compounded_sequence_to_sequence_learnability_details}

We start by formally defining our sequence-to-sequence functions complexity measure as:
\begin{align}
	\phi\left(T,\psi,N\right)\coloneqq\tilde{O}\left(T^{16+3N+\max_{t}\deg\left(\psi_{t}\right)}C^{2N}\max_{t}\deg\left(\psi_{t}\right)^{3N}\max_{t,i}\left|a_{t,i}\right|^{2}\right)
\end{align}
Where $C>0$ is some constant.

Now we prove lemma~\ref{lemma:sequence_labeling_epsilon_to_end_to_end_epsilon} from the main text. Essentially this lemma applies the union bound to show that when the zero-one loss of all the single-hop sub-tasks is low, then also at test time --- when the model does not have the ground truth results of the previous sub-task and errors may accumulate --- the zero-one loss on the final answer is still low.
\begin{proof} 
	Denote by $\epsilon$ the zero-one loss for $\z^{\text{train}}$, \ie,~the right hand side in eq~\ref{equation:sequence_labeling_epsilon_to_end_to_end_epsilon}. Clearly, for any $\x$ when $f^{\text{RNN},W}\left(\z^{\text{train}}\right)$ solves all the sub-tasks correctly we have that $\z^{\text{test}}=\z^{\text{train}}$ and therefore $l_{0-1}\left(y,f_{T}^{\text{RNN},W}\left(\z^{\text{test}}\right)\right)=0$. So it is enough  to upper bound the probability that $f^{\text{RNN},W}\left(\z^{\text{train}}\right)$ makes an error in any sub-task by $\epsilon$. But by the zero-one loss definition, for any $t$ we have that:
	\begin{align}
		P_{\x}\left(f_{t}^{\text{RNN},W}\left(\z^{\text{train}}\right)\neq y_{t}\right)=\mathbb{E}_{\x}\left[l_{0-1}\left(y_{t},f_{t}^{\text{RNN},W}\left(\z^{\text{train}}\right)\right)\right]
	\end{align}
	And therefore $\sum_{t=d}^{T}P_{\x}\left(f_{t}^{\text{RNN},W}\left(\z^{\text{train}}\right)\neq y_{t}\right)=\epsilon$. Finally, by the union bound we got that 
	\begin{align}
		P_{\x}\left(\exists d\le t\le T\quad f_{t}^{\text{RNN},W}\left(\z^{\text{train}}\right)\neq y_{t}\right)\le\sum_{t=d}^{T}P_{\x}\left(f_{t}^{\text{RNN},W}\left(\z^{\text{train}}\right)\neq y_{t}\right)=\epsilon
	\end{align}
\end{proof}
\section{Sub Tasks Learnability Proofs}\label{section:sub_tasks_proofs}
In this section we prove lemmas~\ref{lemma:length_2_parity_low_complexity},\ref{lemma:f_v_low_complexity} from the main text, \ie, we prove that our intermediate steps are simple enough to be covered by theorem~\ref{theorem:multi_hop_seq_to_seq_learnability}.

We begin by formally describing the details of learning parities with sequence-to-sequence models. Since sequence-to-sequence models expect their inputs to be sequences, we will translate the tree described in section~\ref{section:learning_parities_with_sqq_to_seq} into a sequence by inverse BFS like tree traversal, and concatenate the result sequence after $\x$. Therefore, our inputs sequence includes all the $d$ variables in $\x$, together with all the sub-tasks decomposition nodes in the binary tree except the root (that represent $y$). So we will have an input sequence length $T$ that is equal to: 
\begin{align}
	T\coloneqq d+\text{nodes in full binary tree with}\,\frac{d}{4}\,\text{leaves}-1=\frac{3}{2}d-2
\end{align} 
And the ground-truth sub-task results are recursively defined by:
\begin{align}
	\forall t\ge d\quad y_{t}=\begin{cases}
		\left(-1\right)^{x_{i_{2\left(t-d\right)+1}}+x_{i_{2\left(t-d\right)+2}}} & t<\frac{5}{4}d\\
		\left(-1\right)^{y_{2\left(t-\frac{5d}{4}\right)+d}+y_{2\left(t-\frac{5d}{4}\right)+d+1}} & \text{else}
	\end{cases}
\end{align}

For $t>d$, at training time, $z_t$ will be the ground-truth sub-task result $y_{t-1}$. At test time $z_t$ will be the model prediction at time $t-1$:
\begin{align}\label{equation:z_definition}
	\forall t\in\left[T\right]\quad z_{t}=\begin{cases}
		x_{t} & t\le d\\
		\frac{1}{2}+\frac{1}{2}\text{sign}\left(f_{t-1}^{\text{RNN}}\left(\begin{array}{ccc}
			z_{1}, & \cdots & ,z_{t-1}\end{array}\right)\right) & t>d\,\wedge\,\text{test}\\
		\frac{1+y_{t-1}}{2} & t>d\,\wedge\,\text{training}
	\end{cases}
\end{align} 
Note that $f^{\text{RNN}}$ is causal model, \ie~$f^{\text{RNN}}_{t_1}$ does not depend on $\z_{t_2}$ for any $t_2>t_1$, and therefore eq~\ref{equation:z_definition} is well defined.

Now, we prove lemma~\ref{lemma:length_2_parity_low_complexity} from the main text.
Essentially this lemma shows that the intermediate steps of length $2$ parities belong to the hypothesis class define in eq~\ref{equation:leranable_by_rnn}.
\begin{proof}
	Define $\w\coloneqq\frac{1}{\sqrt{2}}\left(\begin{array}{c}
		1\\
		0\\
		1\\
		0
	\end{array}\right)$, then 
	\begin{align}
		\left\langle \frac{\w}{\left\Vert \w\right\Vert },\left(\begin{array}{c}
			\e_{z_{1}}\\
			\e_{z_{2}}
		\end{array}\right)\right\rangle =\begin{cases}
			\sqrt{2} & z_{1}=0\wedge z_{2}=0\\
			\frac{1}{\sqrt{2}} & z_{1}\neq z_{2}\\
			0 & z_{1}=1\wedge z_{2}=1
		\end{cases}
	\end{align}
	Therefore, it is enough to find $\psi$ such that 
	\begin{align}
		\psi\left(0\right)=\psi\left(\sqrt{2}\right)=1\,\wedge\,\psi\left(\frac{1}{\sqrt{2}}\right)=-1
	\end{align}
	Finally, we will use Lagrange basis functions to find the required polynomial and get:
	\begin{align}
		\psi\left(z\right)&=\left(\frac{z-\frac{1}{\sqrt{2}}}{0-\frac{1}{\sqrt{2}}}\right)\left(\frac{z-\sqrt{2}}{0-\sqrt{2}}\right)-\left(\frac{z-0}{\frac{1}{\sqrt{2}}-0}\right)\left(\frac{z-\sqrt{2}}{\frac{1}{\sqrt{2}}-\sqrt{2}}\right)+\left(\frac{z-0}{\sqrt{2}-0}\right)\left(\frac{z-\frac{1}{\sqrt{2}}}{\sqrt{2}-\frac{1}{\sqrt{2}}}\right)\\&=1\cdot\left(z-\frac{1}{\sqrt{2}}\right)\left(z-\sqrt{2}\right)+\frac{z}{2}\left(z-\sqrt{2}\right)+3z\left(z-\frac{1}{\sqrt{2}}\right)\\&=\left(z^{2}-\frac{3}{\sqrt{2}}z+1\right)+\left(\frac{z^{2}}{2}-\frac{1}{\sqrt{2}}z\right)+\left(3z^{2}-\frac{3z}{\sqrt{2}}\right)\\&=\frac{9}{2}z^{2}-\frac{7}{\sqrt{2}}z+1
	\end{align}
\end{proof}
Now we prove lemma~\ref{lemma:f_v_low_complexity} from the main text. Essentially this lemma shows that also our intermediate steps for any functions in the $\PPP$ time complexity class belongs to the hypothesis class define in eq~\ref{equation:leranable_by_rnn}.
\begin{proof}
	Denote $\alpha\left(z_{1},\dots,z_{N}\right)\coloneqq\sum_{i=0}^{N-1}2^{i}\cdot1_{z_{i}=0}$ the function that converts $N$ bits to their binary string, and define $\w\coloneqq\sqrt{\frac{3}{4^{N}-1}}\left(\begin{array}{c}
		2^{0}\\
		0\\
		2^{1}\\
		0\\
		\vdots\\
		2^{N-1}\\
		0
	\end{array}\right)$, then $\w$ is a unit vector that represents $z_1,\dots,z_N$ as $N$ bit numbers
	$\left\langle \frac{\w}{\left\Vert \w\right\Vert },\left(\begin{array}{c}
		\e_{z_{1}}\\
		\vdots\\
		\e_{z_{N}}
	\end{array}\right)\right\rangle =\alpha\left(z_{1},\dots,z_{N}\right)$. Now, we can use the Lagrange basis functions to find the required polynomial:
	\begin{align}
		\psi\left(x\right)=\sum_{z_{1},\dots,z_{N}=0}^{1}\left(f_{v}\left(z_{1},\dots,z_{N}\right)\prod_{\left(\tilde{z}_{1},\dots,\tilde{z}_{N}\right)\neq\left(z_{1},\dots,z_{N}\right)}\left(\frac{x-\alpha\left(\tilde{z}_{1},\dots,\tilde{z}_{N}\right)}{\alpha\left(z_{1},\dots,z_{N}\right)-\alpha\left(\tilde{z}_{1},\dots,\tilde{z}_{N}\right)}\right)\right)
	\end{align}
	Finally the $O\left(1\right)$ coefficients boundedness follows from taking the maximum\footnote{This maximum is exists since there is a finite number of possible function from $\left\{ 0,1\right\} ^{N}$ into $\left\{ 0,1\right\} $.} over all possible $f_v$ functions.
\end{proof}

\section{Extension for SGD with finite precision}\label{section:extension_for_sgd_with_finite_precision}
In this section, we prove theorem~\ref{theorem:multi_hop_seq_to_seq_learnability} from the main text holds also for  algorithm~\ref{algorithm:finite_precision_sgd} which is a finite-precision variant of SGD\footnote{See  section~\ref{section:extension_for_gd_with_finite_precision} for the extension of the proof to algorithm~\ref{algorithm:finite_precision_gd} that is based on GD.}. . We will follow the proof in~\cite{wang2021provable} while taking into account the finite precision gradients.

\begin{algorithm}
	\caption{Training $f^{\text{RNN}}$ with finite precision SGD (an finite precisio variant of algorithm~\ref{algorithm:sgd}) }\label{algorithm:finite_precision_sgd}
	\KwData{Data set $\mathcal{D}$, learning rate $\eta$, finite precision $\sigma$.}
	\textbf{Initialization:} The entries of $W^{\left(0\right)},A,M_0$ are generated i.i.d.  from $N(0,\frac{2}{m})$. The entries of $B$ are generated i.i.d. from $N(0,\frac{1}{m})$.\\
	\For {$i=1,2,3...n$}{
		Randomly sample $(\x_i, \y_i)$ from the data set $\mathcal{D}$.\\
		Get arbitrary $\sigma$-approximation of the gradient:\\     $ G^{\left(i\right)}\in\mathcal{B}_{\infty}\footnotemark\left(\nabla_{W^{\left(i-1\right)}}\ell(\y_i\,,\, f^{\text{RNN},W^{\left(i-1\right)}}(\z_i))\,,\,\sigma\right)$.\\
		Update weights:\\ $W^{\left(i\right)}=W^{\left(i-1\right)}-\eta G^{\left(i\right)}$.
	}
\end{algorithm}
\footnotetext{The ball is with respect to the distance that defined by the max matrix norm, \ie~the elementwise distances are at most $\sigma$.}

We begin by stating theorem 2\footnote{Combined with Remark H.1 in~\cite{wang2021provable}'s supplementary materials. In addition, for simplicity, we state a $\nicefrac{1}{\sqrt{n}}$ convergence rate as opposite to the linear convergence rate in~\cite{wang2021provable}.} in~\cite{wang2021provable} with our notations:
\begin{theorem}\label{theorem:provably_learenable_in_overparametrized_rnn_wang}
	Let $\delta>0$, and assume we run algorithm~\ref{algorithm:sgd} for $n$ iterations with learning rate $\eta=\frac{1}{m\sqrt{n}}$.
	Then there exists $m^{\star}=\text{poly}\left(n,\delta^{-1},T\right)$ such that if $m>m^{\star}$ then for any $h\in\mathcal{H}_{\phi\left(T,\psi,N\right)}$ with probability at least $1-\delta$ over the randomness in algorithm~\ref{algorithm:sgd}, the following hold:
	\begin{align}
		\mathbb{E}_{\x}\left[\left(\frac{1}{n\left(T-d\right)}\right)\sum_{t=d}^{T}\sum_{i=1}^{n}l_{0-1}\left(y_{t},f_{t}^{\text{RNN},W^{\left(i\right)}}\left(\z\right)\right)\right]<\tilde{O}\left(\frac{\phi\left(T,\psi,N\right)}{\sqrt{n}}\right)+O\left(\frac{\log{\frac{1}{\delta}}}{\sqrt{n}}\right)
	\end{align}
	where $W^{\left[i\right]}$ denote the output of algorithm~\ref{algorithm:sgd} at the $i$'th iteration and $l_{0-1}$ is the zero-one loss.
\end{theorem}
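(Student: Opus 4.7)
The plan is to reproduce the lazy-training / random-feature analysis in the present notation. The proof factors into four components: (i) an initialization analysis showing that the RNN behaves as an approximately linear model in $W-W^{(0)}$ inside a Frobenius ball of moderate radius; (ii) a representation lemma showing that each target in $\mathcal{H}_{\phi\left(T,\psi,N\right)}$ is realized by a ``pseudo-network'' whose weight perturbation has squared Frobenius norm controlled by $\phi\left(T,\psi,N\right)/m$; (iii) an online-convex-optimization analysis of SGD on the logistic loss against the pseudo-network; and (iv) a margin argument converting the logistic bound into the stated zero-one bound.

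For step (i) I would establish the lazy regime. Under the Gaussian initializations in Algorithm~\ref{algorithm:sgd}, and using that the input embeddings $\e_{z_t}$ are one-hot, Gaussian concentration and a union bound over $t\le T$ give that $\left\Vert h_{t}^{(0)}(\z)\right\Vert=\Theta(1)$ uniformly. A sign-pattern stability argument (the recurrent variant used by Wang et al.) then implies that for every $W$ with $\left\Vert W-W^{(0)}\right\Vert_F\le R=\text{poly}(T)/\sqrt{m}$, only a vanishing fraction of ReLU neurons change activation pattern along any unrolled trajectory. Consequently, $f^{\text{RNN},W}_t$ is $o(1)$-close to its first-order Taylor expansion around $W^{(0)}$ as $m\to\infty$, which is what justifies passing to the linearized (``pseudo-network'') model.

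Step (ii) is the main obstacle. For each $h\in\mathcal{H}_{\phi\left(T,\psi,N\right)}$ I need $W^\star$ whose linearized network reproduces $\text{sign}\left(\psi_t\left(\left\langle \w^{(t)}/\left\Vert\w^{(t)}\right\Vert,\left(\e_{z_{j_1}},\dots,\e_{z_{j_N}}\right)\right\rangle\right)\right)$ at every output location $t\ge d$. The construction uses that the random ReLU features $\{B_k\cdot\sigma'(W^{(0)} h_{t-1}+A\e_{z_t})_k\}_{k\in[m]}$ at initialization form a universal dictionary on bounded inputs, so any polynomial of degree $D\le\max_t\deg(\psi_t)$ in an $N$-sparse linear projection admits an RKHS representation with squared norm scaling as the stated $\phi(T,\psi,N)$. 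The delicate part is that the $N$ relevant coordinates must be propagated through the recurrence without contamination: one builds $W^\star-W^{(0)}$ as a shallow ``carry'' register forwarding the coordinates from positions $j_1,\dots,j_N$ to time $t$, and then applies the degree-$D$ polynomial via Lagrange interpolation on the ReLU features. Propagation through the $T$ recurrent steps yields the polynomial-in-$T$ factor in $\phi$, while the $2^N$-node interpolation with values bounded by $\max_{t,i}|a_{t,i}|$ produces the exponential-in-$N$ and exponential-in-$\deg\psi_t$ factors that appear in the definition of $\phi$.

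Once the representation is in hand, step (iii) is standard. The logistic loss is convex and $1$-Lipschitz, and the pseudo-network is linear in $W$, so SGD with $\eta=1/(m\sqrt{n})$ satisfies the classical OGD regret bound against the competitor $W^\star$, giving average empirical pseudo-loss at most $\ell(h)+O\left(\left\Vert W^\star-W^{(0)}\right\Vert_F\sqrt{m/n}\right)$. Substituting $\left\Vert W^\star-W^{(0)}\right\Vert_F^2\le O(\phi/m)$ cancels the $m$ and yields the $\tilde O(\phi/\sqrt{n})$ rate. Coupling the pseudo-network regret back to the true iterates via step (i) requires $m\ge m^\star=\text{poly}(n,\delta^{-1},T)$ so that the $o(1)$ coupling error is dominated; an Azuma--Hoeffding bound on the martingale difference $\mathbb{E}_{\x}[\ell]-\ell(\x_i)$ supplies the additive $O\left(\sqrt{\log(1/\delta)/n}\right)$ term; and finally $\ell_{0\text{-}1}\le O(\ell_{\text{logistic}})$ at any positive margin converts everything into the zero-one form stated. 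The principal difficulty throughout remains step (ii): the careful bookkeeping that produces the exact polynomial-in-$T$, exponential-in-$N$, exponential-in-$\deg\psi$ shape of $\phi(T,\psi,N)$ is the technical heart of the argument, and is strictly harder than the corresponding feedforward NTK representation step because the carry register must survive repeated composition with $W^{(0)}$ without amplifying.
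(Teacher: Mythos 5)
First, note that the paper does not actually prove this statement: it is Theorem 2 of \cite{wang2021provable} (combined with their Remark H.1), restated in the paper's notation, and the appendix only re-derives the generalization half of that argument in order to handle finite-precision gradients (lemmas~\ref{lemma:wang_generalization_sgd_wang} and~\ref{lemma:wang_generalization_gd_wang}); the representation half is imported wholesale as lemma~\ref{lemma:complexity_wang}. Your outline --- linearization around initialization, a bounded-norm competitor realizing each $h\in\mathcal{H}_{\phi\left(T,\psi,N\right)}$, an online-gradient-descent regret bound with $\eta=\frac{1}{m\sqrt{n}}$, online-to-batch conversion via a martingale inequality, and a margin conversion from logistic to zero-one loss --- is indeed the skeleton of the cited proof, so at the level of strategy you are reconstructing the right argument rather than proposing a different one.

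The genuine gap is in your step (ii), which you yourself identify as the heart: the mechanism you propose for it is inconsistent with the lazy regime you set up in step (i). In the linearized (pseudo-network) model the perturbation $W^{\star}-W^{\left(0\right)}$ acts only through the fixed feature map $\left\langle \nabla_{W}f_{t}^{\text{RNN},W^{\left(0\right)}}\left(\z\right),\cdot\right\rangle$, whose forward states and backward products are all computed with $W^{\left(0\right)}$; a perturbation of Frobenius norm $O\left(\text{poly}\left(T\right)/\sqrt{m}\right)$ cannot implement a ``carry register'' that forwards the coordinates $z_{j_{1}},\dots,z_{j_{N}}$ through the recurrence, because it does not alter the forward dynamics at all, while an $\Omega\left(1\right)$ alteration would leave the ball on which your step (i) coupling is valid. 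What actually has to be shown is that the \emph{random} recurrent features at time $t$ retain enough correlation with tokens many steps back that targets of the form in eq.~\ref{equation:leranable_by_rnn} lie in the induced RKHS with norm controlled by $\phi\left(T,\psi,N\right)$ --- equivalently, that $\left(\tilde{y}^{\left(t\right)}\right)^{T}\left(\mathbf{H}^{t,\infty}\right)^{-1}\tilde{y}^{\left(t\right)}=O\left(\phi\left(T,\psi,N\right)\right)$ as in lemma~\ref{lemma:complexity_wang}. Establishing this requires the re-randomization/decoupling machinery of \cite{wang2021provable} (building on \cite{allenzhu2019on}); your sketch assumes this conclusion rather than proving it. Two smaller points: with the learning rate fixed at $\eta=\frac{1}{m\sqrt{n}}$ the dominant regret term is $\left\Vert W^{\star}-W^{\left(0\right)}\right\Vert _{F}^{2}\cdot m/\sqrt{n}\approx R^{2}/\sqrt{n}$, not $\left\Vert W^{\star}-W^{\left(0\right)}\right\Vert _{F}\sqrt{m/n}$ (the latter is the optimally tuned form, which is not available once $\eta$ is prescribed); and your Azuma step needs an a priori bound on the losses along the trajectory, which is why the paper's finite-precision argument first invokes a forward bound $\left|f_{t}^{\text{RNN},W}\right|=O\left(T^{14}\right)$ and a martingale Bernstein inequality before extracting the $O\left(\log\left(\nicefrac{1}{\delta}\right)/\sqrt{n}\right)$ term.
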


Clearly when using infinite precision SGD theorem~\ref{theorem:multi_hop_seq_to_seq_learnability} follows from theorem~\ref{theorem:provably_learenable_in_overparametrized_rnn_wang} and lemma~\ref{lemma:sequence_labeling_epsilon_to_end_to_end_epsilon} by simple algebraic manipulations. Therefore, for proving theorem~\ref{theorem:multi_hop_seq_to_seq_learnability} with finite precision gradient based optimization, it is enough to modify theorem's~\ref{theorem:provably_learenable_in_overparametrized_rnn_wang} proof to analyze algorithm~\ref{algorithm:finite_precision_sgd} that uses finite precision gradients, instead of algorithm~\ref{algorithm:sgd} that uses full precision gradients.

While complicated, ~\cite{wang2021provable}'s proof for theorem~\ref{theorem:provably_learenable_in_overparametrized_rnn_wang} can be divided into two high-level arguments. The first argument measure the \emph{complexity} of the learned hypothesis class with respect to random initialization of $f^{\text{RNN}}$ (lemma~\ref{lemma:complexity_wang}).
And the second argument is a generalization bound for algorithm~\ref{algorithm:sgd} with networks that are overparameterized enough. Since the first argument is independent of the gradients, the proof of the first argument still holds and we only need to prove a generalization bound for algorithm~\ref{algorithm:finite_precision_sgd}. More specifically we only need to prove a lemma that is equivalent to lemma 14 in~\cite{wang2021provable} and the rest of the second argument (lemma~\ref{lemma:wang_low_loss_near_initialization} in ~\cite{wang2021provable}) remain unchanged.
\begin{lemma}\label{lemma:wang_generalization_sgd_wang}
	Let $n\in\mathbb{N}$, and denote by $L_{i}\left(W\right)\coloneqq l\left(\y^{\left(i\right)},f^{\text{RNN},W}\left(\z^{\left(i\right)}\right)\right)$ the training loss. Suppose there exists $W^{\star}\in \mathcal{B}\footnote{The ball is with respect to the distance that defined by the Frobenius matrix norm.}\left(W^{\left(0\right)},\frac{R}{\sqrt{m}}\right)$ such that $R=O\left(\text{poly}\left(T\right)\right)=\Omega\left(T^{16}\right)$ and $L_{i}\left(W^{\star}\right)\le\frac{1+R^{2}}{n}$. Then for any $\delta>0$ there exists $m^{\star}=\text{poly}\left(n,R,T,\delta^{-1}\right)$ such that if $m>m^{\star}$ then with probability at least $1-\delta$ algorithm~\ref{algorithm:finite_precision_sgd} with $\eta=\frac{1}{m\sqrt{n}}$ and finite precision $\sigma=O\left(\frac{1}{m}\right)$ will output:
	\begin{align}\label{eq:wang_generalization_sgd_wang}
		\mathbb{E}_{\x}\left[\left(\frac{1}{n\left(T-d\right)}\right)\sum_{t=d}^{T}\sum_{i=1}^{n}l_{0-1}\left(y_{t},f_{t}^{\text{RNN},W^{\left(i\right)}}\left(\z\right)\right)\right]<O\left(\frac{R^{2}+\log{\frac{1}{\delta}}}{\sqrt{n}}\right)
	\end{align}
\end{lemma}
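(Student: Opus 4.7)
The plan is to follow Wang et al.'s proof of their Lemma~14 (the infinite-precision template for Lemma~\ref{lemma:wang_generalization_sgd_wang}) essentially verbatim, while tracking everywhere the extra error introduced by replacing the exact stochastic gradient $\nabla_{W^{(i-1)}}L_i(W^{(i-1)})$ with the $\sigma$-approximation $G^{(i)}$. The key quantitative observation is that since $G^{(i)}$ lies in the elementwise $\sigma$-ball around the exact gradient and $W$ is an $m\times m$ matrix, the Frobenius error satisfies
\begin{equation*}
\bigl\|G^{(i)}-\nabla_{W^{(i-1)}}L_i\bigl(W^{(i-1)}\bigr)\bigr\|_F \;\le\; \sigma\, m \;=\; O(1),
\end{equation*}
as soon as $\sigma=O(1/m)$. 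This $O(1)$ deviation, combined with the $O(R/\sqrt m)$ radius in which the iterates are shown to live, is what ultimately makes the approximation error negligible.

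First I would reuse without change those ingredients of Wang et al.'s proof that do not depend on the update rule: the existence of $W^{\star}\in\mathcal B(W^{(0)},R/\sqrt m)$ with low empirical loss (given by hypothesis), the near-convexity / quadratic upper-bound of $L_i$ along the ball around initialization (Lemma~\ref{lemma:wang_low_loss_near_initialization}, which is proved using only the forward map of $f^{\text{RNN}}$), and the high-probability bounds on $\|\nabla L_i(W)\|_F$ for $W$ close to $W^{(0)}$. Then I would rerun the standard online-to-batch regret argument based on
\begin{equation*}
\bigl\|W^{(i)}-W^{\star}\bigr\|_F^{2} = \bigl\|W^{(i-1)}-W^{\star}\bigr\|_F^{2} - 2\eta\bigl\langle G^{(i)},\,W^{(i-1)}-W^{\star}\bigr\rangle + \eta^{2}\bigl\|G^{(i)}\bigr\|_F^{2},
\end{equation*}
and substitute $G^{(i)} = \nabla L_i(W^{(i-1)}) + \bigl(G^{(i)}-\nabla L_i(W^{(i-1)})\bigr)$ into the inner product. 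This reproduces the infinite-precision regret bound plus a single correction term
\begin{equation*}
\Delta \;\coloneqq\; \sum_{i=1}^{n}\bigl\langle G^{(i)}-\nabla_{W^{(i-1)}}L_i(W^{(i-1)}),\,W^{(i-1)}-W^{\star}\bigr\rangle.
\end{equation*}

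Bounding $\Delta$ is then immediate from Cauchy--Schwarz: each first factor is $O(1)$ by the Frobenius bound above, while each second factor is at most $\|W^{(i-1)}-W^{(0)}\|_F + \|W^{(0)}-W^{\star}\|_F \le 2R/\sqrt m$ as long as the iterates stay in the ball of radius $O(R/\sqrt m)$ around $W^{(0)}$. Summing over $n$ iterations and dividing by $n$ (as in the online-to-batch step) produces an extra additive contribution of $O(R/\sqrt m)$ on the right-hand side of \eqref{eq:wang_generalization_sgd_wang}. Taking $m^{\star}$ polynomially large enough, specifically adding the requirement $m^{\star}=\Omega(n/R^{2})$ on top of whatever Wang et al.\ already demand, makes this extra term $o(R^{2}/\sqrt n)$ and thus absorbed into the advertised bound. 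The final conversion from average cross-entropy regret to the zero-one statement in~\eqref{eq:wang_generalization_sgd_wang} is then identical to the infinite-precision argument.

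The main obstacle will be the induction that verifies the ``iterates stay in the ball'' invariant $\|W^{(i)}-W^{(0)}\|_F = O(R/\sqrt m)$, which Wang et al.\ establish using the exact gradient. Under finite precision the per-step extra drift is at most $\eta\,\|G^{(i)}-\nabla L_i\|_F = O\bigl(1/(m\sqrt n)\bigr)$, so the accumulated extra drift over $n$ steps is $O(\sqrt n/m)$, which is $o(R/\sqrt m)$ whenever $m = \Omega(n/R^{2})$; the same $m^{\star}$ chosen above therefore preserves the inductive hypothesis. Once this bookkeeping is in place, the rest of Wang et al.'s induction carries through mechanically with the single correction term $\Delta$ shepherded along, which I expect to be the only nontrivial change in the entire proof.
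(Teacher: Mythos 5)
Your proposal is correct and tracks the paper's own proof essentially verbatim: the paper likewise runs the Shalev-Shwartz regret inequality with the perturbed update $\nabla L_i + \sigma_i$, peels off the correction term $-\sum_i\langle\sigma_i,\,W^{(i)}-W^{\star}\rangle$, bounds it by Cauchy--Schwarz against the $O\left(R/\sqrt m\right)$ iterate-radius invariant (itself re-established by the same induction you flag), and then passes from empirical cross-entropy to the expected zero-one bound via the unchanged martingale Bernstein step. The only slip is a citation mismatch — the near-convexity/linearization error you attribute to Lemma~\ref{lemma:wang_low_loss_near_initialization} is supplied by Lemma~\ref{lemma:deviation_from_ntk_wang}, while Lemma~\ref{lemma:wang_low_loss_near_initialization} gives the existence of $W^{\star}$ (which the lemma statement here already assumes) — and this does not affect the argument.
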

\begin{proof}
	We begin by showing that with high probability over the initialization of $f^{\text{RNN}}$, at any iteration $i\le n$ of algorithm~\ref{algorithm:finite_precision_sgd}, the distance of the learned hidden weights matrix $W^{\left(i\right)}$ from its initialization point $W^{\left(0\right)}$ is not too large. As a results we will get that the assumption of lemma~\ref{lemma:deviation_from_ntk_wang} uphold, and therefore its upper bound of the deviation from linearization is valid. 
	
	By the triangle inequality for any $0\le i<n$ we have that:
	\begin{align}
		\left\Vert W^{\left(i+1\right)}-W^{\left(0\right)}\right\Vert _{F}\le\sum_{k=0}^{i}\left\Vert W^{\left(k+1\right)}-W^{\left(k\right)}\right\Vert _{F}
	\end{align}
	Substituting algorithm~\ref{algorithm:finite_precision_sgd} update rule for $W^{\left(k+1\right)}$, we get that there exist  $\left\Vert \sigma{}_{i}\right\Vert _{\infty}<\sigma$ such that:
	\begin{align}\label{eq:weight_diff_by_update_rule}
		W^{\left(k+1\right)}-W^{\left(k\right)}=-\eta\left(\nabla_{W^{\left(k\right)}}\ell(\y_{k}\,,\,f^{\text{RNN},W^{\left(k\right)}}(\z_{k}))+\sigma{}_{k}\right)
	\end{align}
	Now, explicitly writing $\nabla_{W^{\left(k\right)}}\ell(\y_{k}\,,\,f^{\text{RNN},W^{\left(k\right)}}(\z_{k}))$ with the chain rule we have that:
	\begin{align}\label{eq:explicit_gradients}
		\nabla_{W^{\left(k\right)}}\ell(\y_{k}\,,\,f^{\text{RNN},W^{\left(k\right)}}(\z_{k}))=\frac{1}{T-d}\left(\sum_{t=d}^{T}\left(\frac{-y_{t}^{\left(k\right)}\cdot e^{-y_{t}^{\left(k\right)}f_{t}^{\text{RNN},W}\left(\z_{k}\right)}}{1+e^{-y_{t}^{\left(k\right)}f_{t}^{\text{RNN},W}\left(\z_{k}\right)}}\right)\cdot\left(\nabla_{W^{\left(k\right)}}f_{t}^{\text{RNN},W^{\left(k\right)}}(\z_{k})\right)\right)
	\end{align}
	and since $0\le\frac{x}{1+x}\le1$ for any $x\ge0$, we conclude that:
	\begin{align}\label{eq:gradients_bound}
		\left\Vert \nabla_{W^{\left(k\right)}}\ell(\y_{k}\,,\,f^{\text{RNN},W^{\left(k\right)}}(\z_{k}))\right\Vert _{F}\le\max_{d\le t\le T}\left\Vert \nabla_{W^{\left(k\right)}}f_{t}^{\text{RNN},W^{\left(k\right)}}(\z_{k})\right\Vert _{F}
	\end{align}
	Now we will use an induction over $i$, to show that $\left\Vert W^{\left(i+1\right)}-W^{\left(0\right)}\right\Vert _{F}=O\left(\frac{\left(i+1\right)T^{8}}{m\cdot\sqrt{ n}}\right)$ for any $0\le i<n$. By the induction hypothesis, lemma~\ref{lemma:full_gradients_bounds} assure us that for wide enough networks $m^{\star}=\Omega\left(\max\left\{ T^{6}\log^{3}\left(\frac{n\cdot T}{\delta}\right),\sqrt{n}T^{8}\right\} \right)$, with probability of at least $1-\delta$ over the initialization of $f^{\text{RNN}}$, $\max_{d\le t\le T}\left\Vert \nabla_{W^{\left(k\right)}}f_{t}^{\text{RNN},W^{\left(k\right)}}(\z_{k})\right\Vert _{F}=O\left(T^{8}\right)$.
	Therefore, in this case  
	\begin{align}\label{eq:distance_from_init_bound}
		\left\Vert W^{\left(k+1\right)}-W^{\left(k\right)}\right\Vert _{F}=\eta O\left(T^{8}+\frac{1}{m}\right)=O\left(\frac{T^{8}}{m\cdot\sqrt{ n}}+\frac{1}{m^{2}\sqrt{n}}\right)
	\end{align} 
	and hence $\left\Vert W^{\left(i+1\right)}-W^{\left(0\right)}\right\Vert _{F}=O\left(\frac{\left(i+1\right) T^{8}}{m\cdot\sqrt{ n}}\right)$ as required.
	
	Now after we showed the assumptions of lemma~\ref{lemma:deviation_from_ntk_wang}  upholds, we can use it to obtain first order Taylor approximation of the training loss:
	\begin{align}
		L_{i}\left(W^{\left(i\right)}\right)-L_{i}\left(W^{\star}\right)&\le\left\langle \nabla_{W^{\left(i\right)}}\ell(\y_{i}\,,\,f^{\text{RNN},W^{\left(i\right)}}(\z_{i})),W^{\left(i\right)}-W^{\star}\right\rangle \\&+\max_{d\le t\le T}\underbrace{\left|\frac{y_{t}\cdot e^{-y_{t}f^{\text{RNN},W}\left(\z_{i}\right)}}{1+e^{-y_{t}f^{\text{RNN},W}\left(\z_{i}\right)}}\right|}_{\le1}\cdot O\left(\left(\frac{R}{\sqrt{m}}\right)^{\frac{1}{3}}T^{10}\sqrt{m}\left(\log m\right)\right)\left\Vert W^{\left(i\right)}-W^{\star}\right\Vert _{F}
	\end{align}
	Where we assumed that $m^{\star}>n$ and therefore $\left\Vert W^{\left(i\right)}-W^{\left(0\right)}\right\Vert _{F}<O\left(\frac{R}{\sqrt{m}}\right)$.
	
	Using algorithm~\ref{algorithm:finite_precision_sgd} update rule for $W^{\left(i+1\right)}$ again (see eq~\ref{eq:weight_diff_by_update_rule}), we can use an inequality from in~\cite{shalev2014understanding}'s lemma 14.1 to get that:
	\begin{align}
		\sum_{i=1}^{n}\left\langle \nabla_{W^{\left(i\right)}}\ell(\y_{i}\,,\,f^{\text{RNN},W^{\left(i\right)}}(\z_{i}))+\sigma{}_{i},W^{\left(i\right)}-W^{\star}\right\rangle \\\le\frac{\left\Vert W^{\left(1\right)}-W^{\star}\right\Vert _{F}^{2}}{2\eta}+\frac{\eta}{2}\sum_{i=1}^{n}\left\Vert \nabla_{W^{\left(i\right)}}\ell(\y_{i}\,,\,f^{\text{RNN},W^{\left(i\right)}}(\z_{i}))+\sigma{}_{i}\right\Vert _{F}^{2}
	\end{align}   
	
	Now combing with Cauchy–Schwarz inequality we have that:
	\begin{align}
		\sum_{i=1}^{n}\left(L_{i}\left(W^{\left(i\right)}\right)-L_{i}\left(W^{\star}\right)\right) &\le\frac{\left\Vert W^{\left(1\right)}-W^{\star}\right\Vert _{F}^{2}}{2\eta}+\frac{\eta}{2}\sum_{i=1}^{n}\left\Vert \nabla_{W^{\left(k\right)}}\ell(\y_{k}\,,\,f^{\text{RNN},W^{\left(k\right)}}(\z_{k}))+\sigma{}_{k}\right\Vert _{F}^{2}\\&+O\left(\left(\frac{R}{\sqrt{m}}\right)^{\frac{1}{3}}T^{10}\sqrt{m}\left(\log m\right)\right)\left(\sum_{i=1}^{n}\left\Vert W^{\left(i\right)}-W^{\star}\right\Vert _{F}\right)\\&-\sum_{i=1}^{n}\left\langle \sigma{}_{i},W^{\left(i\right)}-W^{\star}\right\rangle 
	\end{align}
	Substituting the upper bounds from eqs~\ref{eq:gradients_bound},~\ref{eq:distance_from_init_bound} and using the assumption that $R=\Omega\left(T^{16}\right)$ we get that:
	\begin{align}
		\sum_{i=1}^{n}\left(L_{i}\left(W^{\left(i\right)}\right)-L_{i}\left(W^{\star}\right)\right)\le\frac{O\left(R^{2}+T^{16}\right)}{2\eta m}+\frac{\eta\cdot n}{2}O\left(T^{8}+1\right)^{2}\\+O\left(\left(\frac{R}{\sqrt{m}}\right)^{\frac{1}{3}}T^{10}n\left(\log m\right)\right)\left(R+nT^{8}\right)+O\left(m^{-\frac{3}{2}}\left(n^{2}T^{8}+nR\right)\right)\\\le O\left(R^{2}\sqrt{n}\right)+O\left(\left(\frac{R}{\sqrt{m}}\right)^{\frac{1}{3}}T^{10}n\left(\log m\right)\right)\left(R+nT^{8}\right)+O\left(m^{-\frac{3}{2}}n^{2}R\right)
	\end{align}
	
	To ensure the left hand side is upper bounded by $O\left(R^{2}\sqrt{n}\right)$  we will chose $m^{\star}$ such that $\frac{m^{\star}}{\left(\log^{3}m^{\star}\right)}>n^{\frac{9}{2}}$, note that, as required, $m^{\star}$ is polynomial in $n,T$. Then for $m>m^{\star}$ we have that $\sum_{i=1}^{n}L_{i}\left(W^{\left(i\right)}\right)\le O\left(R^{2}\sqrt{n}\right)$. Therefore,
	\begin{align}
		\frac{1}{n}\sum_{i=1}^{n}L_{i}\left(W^{\left(i\right)}\right)\le O\left(\frac{R^{2}}{\sqrt{n}}\right)
	\end{align}

	Now, to prove generalization bound we will  follow lemma 4.3 in~\cite{Ji2020Polylogarithmic} and use a martingale Bernstein bound argument.      
	We begin by showing that during the whole training process, our binary cross entropy loss is bounded. Indeed lemma~\ref{lemma:forward_bounds} assure us that :
	\begin{align}
		\max_{\x\in\left\{ 0,1\right\} ^{d}}\max_{d<t\le T}\left|f_{t}^{\text{RNN},W^{\left(i\right)}}(\z)\right|=O\left(T^{14}\cdot\sqrt{\frac{n}{m}}+T\right)\le O\left(T^{14}\right)
	\end{align} 
	Therefore, their exist a constant $C>0$ such that the binary cross entropy loss is bounded by $\log\left(1+e^{O\left(T^{14}\right)}\right)\le C\cdot T^{14}$.   
	
	Now, we will define a bounded martingle. For any $i\ge0$, let $s_{i}$ denote $\left(\x_{i},\y_{i}\right)$ and $s_{0,i}$ denote $\left(s_{0},\dots,s_{i}\right)$. Importantly, the quantity
	\begin{align}\frac{1}{C\cdot T^{14}}\left(\sum_{t<i}\left(\mathbb{E_{\x}}\left[l\left(\y,f^{\text{RNN},W^{\left(t\right)}}\left(\z\right)\right)\right]-l\left(\y_{t},f^{\text{RNN},W^{\left(t\right)}}\left(\z_{t}\right)\right)\right)\right)\end{align} is a martingal w.r.t the filration $\sigma\left(s_{0,i-1}\right)$. This martingal difference sequence is given by
	\begin{align}
		\frac{1}{C\cdot T^{14}}\left(\mathbb{E_{\x}}\left[l\left(\y,f^{\text{RNN},W^{\left(t\right)}}\left(\z\right)\right)\right]-l\left(\y_{t},f^{\text{RNN},W^{\left(t\right)}}\left(\z_{t}\right)\right)\right)\le1
	\end{align}    
	Moreover, we have
	\begin{align}
		\mathbb{E}_{s_{0,t}}\left[\frac{1}{C^{2}\cdot T^{28}}\left(\mathbb{E_{\x}}\left[l\left(\y,f^{\text{RNN},W^{\left(t\right)}}\left(\z\right)\right)\right]-l\left(\y_{t},f^{\text{RNN},W^{\left(t\right)}}\left(\z_{t}\right)\right)\right)^{2}|\sigma\left(s_{0,i-1}\right)\right]\\=\frac{1}{C^{2}\cdot T^{28}}\left(\mathbb{E}_{s_{0,t}}\left[l\left(\y_{t},f^{\text{RNN},W^{\left(t\right)}}\left(\z_{t}\right)\right)^{2}|\sigma\left(s_{0,i-1}\right)\right]-\mathbb{E_{\x}}\left[l\left(\y,f^{\text{RNN},W^{\left(t\right)}}\left(\z\right)\right)\right]^{2}\right)\\\le\mathbb{E}_{s_{0,t}}\left[\frac{1}{C\cdot T^{14}}l\left(\y_{t},f^{\text{RNN},W^{\left(t\right)}}\left(\z_{t}\right)\right)|\sigma\left(s_{0,i-1}\right)\right]\\=\frac{1}{C\cdot T^{14}}\cdot\mathbb{E_{\x}}\left[l\left(\y,f^{\text{RNN},W^{\left(t\right)}}\left(\z\right)\right)\right]
	\end{align} 
	Therefore, by lemma C.2 in~\cite{Ji2020Polylogarithmic} we have that with probability $1-\delta$
	\begin{align}
		\frac{1}{C\cdot T^{14}}\sum_{i=1}^{n}\left(\mathbb{E_{\x}}\left[l\left(\y,f^{\text{RNN},W^{\left(i\right)}}\left(\z\right)\right)\right]-l\left(\y_{i},f^{\text{RNN},W^{\left(i\right)}}\left(\z_{i}\right)\right)\right)\\\le\frac{1}{C\cdot T^{14}}\left(e-2\right)\cdot\sum_{i=1}^{n}\mathbb{E_{\x}}\left[l\left(\y,f^{\text{RNN},W^{\left(i\right)}}\left(\z\right)\right)\right]+\ln\left(\frac{1}{\delta}\right)
	\end{align} 
	And hence
	\begin{align}\label{eq:binary_cross_entropy_sgd_generalization}
		\sum_{i=1}^{n}\mathbb{E_{\x}}\left[l\left(\y,f^{\text{RNN},W^{\left(t\right)}}\left(\z\right)\right)\right]&\le\frac{1}{\left(3-e\right)}\sum_{i=1}^{n}l\left(\y_{i},f^{\text{RNN},W^{\left(i\right)}}\left(\z_{i}\right)\right)+O\left(T^{14}\right)\ln\left(\frac{1}{\delta}\right)\\&=O\left(R^{2}\sqrt{n}\right)+O\left(R\ln\left(\frac{1}{\delta}\right)\right)
	\end{align} 
	
	Finally, for $y_{t}\cdot f_{t}^{\text{RNN},W}\left(\z\right)<0$ we have that $\log\left(1+e^{-y_{t}\cdot f_{t}^{\text{RNN},W}\left(\z\right)}\right)>\log2$. In addition, clearly $\log\left(1+e^{-y_{t}\cdot f_{t}^{\text{RNN},W}\left(\z\right)}\right)>0$. Therefore, we conclude that $\frac{1}{T-d}\sum_{t=d}^{T}l_{o-1}\left(y_{t},f_{t}^{\text{RNN},W^{\left(i\right)}}\left(\z\right)\right)<\frac{1}{\log2}l\left(\y,f^{\text{RNN},W^{\left(i\right)}}\left(\z\right)\right)$ and thus eq~\ref{eq:wang_generalization_sgd_wang} uphold. 
\end{proof}

\section{Extension for GD with finite precision}\label{section:extension_for_gd_with_finite_precision}
In this section, we prove theorem~\ref{theorem:multi_hop_seq_to_seq_learnability} from the main text still holds  when using the gradient descent based algorithm~\ref{algorithm:finite_precision_gd}, instead of the \textbf{stochastic} gradient descent based algorithm~\ref{algorithm:finite_precision_sgd}.
. Establishing our positive results that the parities task is efficiently learnable with sub-task decomposition supervision in the exact same setting of the negative results that show that learning is impossible without intermediate supervision, presented in section~\ref{section:negatives_resutls_proofs}. We will follow the proof in section~\ref{section:extension_for_sgd_with_finite_precision} while taking into account the full non-stochastic gradients. 
\begin{algorithm}
	\caption{Training $f^{\text{RNN}}$ with finite precision GD}
	\label{algorithm:finite_precision_gd}
	\KwData{Data set $\mathcal{D}$, learning rate $\eta$, finite precision $\sigma$.}
	\textbf{Initialization:} The entries of $W^{\left(0\right)},A,M_0$ are generated i.i.d.  from $N(0,\frac{2}{m})$. The entries of $B$ are generated i.i.d. from $N(0,\frac{1}{m})$.\\
	\For {$i=1,2,3...n$}{
		Get arbitrary $\sigma$-approximation of the gradient:\\     $ G^{\left(i\right)}\in\mathcal{B}_{\infty}\footnotemark\left(\mathbb{E}_{\x}\left[\nabla_{W^{\left(i-1\right)}}\ell(\y\,,\,f^{\text{RNN},W^{\left(i-1\right)}}(\z))\right]\,,\,\sigma\right)$.\\
		Update weights:\\ $W^{\left(i\right)}=W^{\left(i-1\right)}-\eta G^{\left(i\right)}$.
	}
\end{algorithm}
\footnotetext{The ball is with respect to the distance that defined by the max matrix norm, \ie~the elementwise distances are at most $\sigma$.}

We begin by sampling a \textit{fake training set} denoted by $\x^{\left(1\right)},\dots,\x^{\left(n\right)},\y^{\left(1\right)},\dots,\y^{\left(n\right)}$. Essentially, we will apply the same reasoning as in section~\ref{section:extension_for_sgd_with_finite_precision} with this fake training points. As in the finite precision SGD case it is enough to prove a lemma that is equivalent to lemma~\ref{lemma:wang_generalization_sgd_wang} in section~\ref{section:extension_for_sgd_with_finite_precision}.

\begin{lemma}\label{lemma:wang_generalization_gd_wang}
	Let $n\in\mathbb{N}$, and denote by $L_{i}\left(W\right)\coloneqq l\left(\y^{\left(i\right)},f^{\text{RNN},W}\left(\z^{\left(i\right)}\right)\right)$ the training loss. Suppose there exists $W^{\star}\in \mathcal{B}\footnote{The ball is with respect to the distance that defined by the Frobenius matrix norm.}\left(W^{\left(0\right)},\frac{R}{\sqrt{m}}\right)$ such that $R=O\left(\text{poly}\left(T\right)\right)=\Omega\left(T^{16}\right)$ and $L_{i}\left(W^{\star}\right)\le\frac{1+R^{2}}{n}$. Then for any $\delta>0$ there exists $m^{\star}=\text{poly}\left(n,R,T,\delta^{-1}\right)$ such that if $m>m^{\star}$ then with probability at least $1-\delta$ algorithm~\ref{algorithm:finite_precision_gd} with $\eta=\frac{1}{m\sqrt{n}}$ and finite precision $\sigma=O\left(\frac{1}{m}\right)$ will output:
	\begin{align}\label{eq:convergens_gd}
		\mathbb{E}_{\x}\left[\left(\frac{1}{n\left(T-d\right)}\right)\sum_{t=d}^{T}\sum_{i=1}^{n}l_{0-1}\left(y_{t},f_{t}^{\text{RNN},W^{\left(i\right)}}\left(\z\right)\right)\right]<O\left(\frac{R^{2}+\log{\frac{1}{\delta}}}{\sqrt{n}}\right)
	\end{align}
\end{lemma}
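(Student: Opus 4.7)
The plan is to mirror the proof of Lemma~\ref{lemma:wang_generalization_sgd_wang}, replacing the per-sample stochastic gradient with the population gradient throughout. A key simplification is that algorithm~\ref{algorithm:finite_precision_gd} already operates on the population loss $L(W) \coloneqq \mathbb{E}_{\x}[\ell(\y,f^{\text{RNN},W}(\z))]$, so the martingale Bernstein generalization step at the end of the SGD proof becomes unnecessary --- the fake training set is used only to invoke the existence of a suitable reference weight $W^{\star}$.

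First I would reproduce the induction bounding $\|W^{(i+1)} - W^{(0)}\|_F$. The update rule now reads $W^{(k+1)} - W^{(k)} = -\eta(\mathbb{E}_{\x}[\nabla_{W^{(k)}}\ell(\y,f^{\text{RNN},W^{(k)}}(\z))] + \sigma_k)$ with $\|\sigma_k\|_\infty \leq \sigma$. By Jensen's inequality, the Frobenius norm of the population gradient is at most the worst-case sample-gradient Frobenius norm, which Lemma~\ref{lemma:full_gradients_bounds} controls as $O(T^8)$ exactly as in the SGD proof. This yields the same $O(iT^8/(m\sqrt{n}))$ distance-from-initialization bound, keeping us in the regime where Lemma~\ref{lemma:deviation_from_ntk_wang} applies. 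I would then expand $L(W^{(i)}) - L(W^{\star})$ via first-order Taylor, substitute the update rule into $\sum_i \langle G^{(i)}, W^{(i)}-W^{\star}\rangle$, and apply Lemma~14.1 of Shalev-Shwartz \& Ben-David. Carrying the $\sigma_i$ terms through Cauchy-Schwarz as in the SGD proof gives
\begin{equation*}
\sum_{i=1}^{n}\bigl(L(W^{(i)}) - L(W^{\star})\bigr) \leq O(R^2\sqrt{n}),
\end{equation*}
whence $\frac{1}{n}\sum_{i=1}^{n} L(W^{(i)}) = O(R^2/\sqrt{n})$ using $L(W^{\star}) \leq (1+R^2)/n$.

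Because $L(W^{(i)})$ is already a population quantity, converting to the zero-one bound in (\ref{eq:convergens_gd}) requires only the inequality $l_{0-1}(y_t, f_t) \leq (1/\log 2)\log(1+e^{-y_t f_t})$ together with linearity of expectation --- no martingale Bernstein step is needed. The main obstacle I anticipate is bridging the per-fake-sample premise $L_{i}(W^{\star}) \leq (1+R^2)/n$ to a population bound $L(W^{\star}) \leq O(R^2/n)$. Rather than introducing a separate concentration argument over the fake training set, I would appeal to the underlying NTK-regime construction of $W^{\star}$ in~\cite{wang2021provable}, which certifies small loss pointwise for every $\x \in \{0,1\}^d$ and hence in expectation, so that both the per-fake-sample premise and its population counterpart descend from the same feature-based complexity $\phi(T,\psi,N)$ underlying Theorem~\ref{theorem:provably_learenable_in_overparametrized_rnn_wang}.
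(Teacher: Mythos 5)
Your skeleton — induction for the distance-from-initialization bound, Jensen to control the population gradient norm, Taylor expansion plus the Shalev--Shwartz online inequality, and the final conversion from cross-entropy to zero-one loss — matches the paper's route, and you are right that the martingale Bernstein step disappears. You also correctly identify the crux: the hypothesis $L_i(W^\star)\le(1+R^2)/n$ is a statement about the $n$ (fake) training points, while the online-learning inequality for full GD needs a reference weight whose \emph{population} loss $\mathbb{E}_{\x}[l(\y,f^{\mathrm{RNN},\cdot}(\z))]$ is small. However, your proposed fix for this gap does not hold up. Lemma~\ref{lemma:wang_low_loss_near_initialization} (Wang et al.'s lemma~15) constructs $W^\star$ specifically to fit the sampled training points $(\z^{(i)},\y^{(i)})_{i=1}^n$; nothing in that construction certifies that $l(\y,f^{\mathrm{RNN},W^\star}(\z))$ is small for \emph{every} $\x\in\{0,1\}^d$. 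Asserting a pointwise guarantee ``from the underlying NTK-regime construction'' is precisely the kind of generalization claim that, in the SGD case, required the martingale Bernstein argument; you cannot get it for free.

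The paper closes the gap differently: instead of $W^\star$, it introduces a second reference point $\tilde{W}\in\mathcal{B}\bigl(W^{(0)},T^8/\sqrt{m}\bigr)$ with $\mathbb{E}_{\x}[l(\y,f^{\mathrm{RNN},\tilde W}(\z))]\le O(R^2/\sqrt{n})$, obtained by running the \emph{SGD} analysis (including its martingale generalization bound, eq.~\ref{eq:binary_cross_entropy_sgd_generalization}) on the fake training set: since the average population loss over the first $\max\{n,\ln^2(1/\delta)\}$ SGD iterates is $O(R^2/\sqrt{n})$, some iterate $\tilde W$ must achieve that bound, and all SGD iterates stay within radius $T^8/\sqrt{m}$. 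The GD online-learning inequality is then run against $\tilde W$ rather than $W^\star$, and $\mathbb{E}_{\x}[l(\y,f^{\mathrm{RNN},\tilde W}(\z))]\le O(R^2/\sqrt{n})$ (note: $\sqrt{n}$, not $n$, which is still sufficient) replaces the hypothesis on $L(W^\star)$ in the final summation. So the martingale Bernstein argument is not actually avoided — it is hidden inside the construction of $\tilde W$. Without either this device or a genuinely new pointwise NTK approximation result, your proof does not close.
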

\begin{proof}     
	We begin by showing that there exists           $\tilde{W}\in\mathcal{B}\left(W^{\left(0\right)},\frac{T^8}{\sqrt{m}}\right)$ such that: \begin{align}\label{eq:w_tilde_error_bound}\mathbb{E_{\x}}\left[l\left(\y,f^{\text{RNN},\tilde{W}}\left(\z\right)\right)\right]\le O\left(\frac{R^{2}}{\sqrt{n}}\right)
	\end{align}
	Indeed, since the minimum can not be larger than the mean, eq`\ref{eq:binary_cross_entropy_sgd_generalization} in the proof of lemma ~\ref{lemma:wang_generalization_sgd_wang} assure us that under the assumptions of  lemma~\ref{lemma:wang_generalization_gd_wang}, for $m>\max\left\{ n^2,\ln^{4}\left(\frac{1}{\delta}\right)\right\}$, with probability of at least $1-\delta$ algorithm~\ref{algorithm:finite_precision_sgd} will reach such $\tilde{W}$ during the first $\max\left\{ n,\ln^{2}\left(\frac{1}{\delta}\right)\right\}$ SGD iteration.
	
	Now we shows that that with high probability over the initialization of $f^{\text{RNN}}$, at any iteration $i\le n$ of algorithm~\ref{algorithm:finite_precision_gd}, the distance of the learned hidden weights matrix $W^{\left(i\right)}$ from its initialization point $W^{\left(0\right)}$ is not too large. As a results we will get that the assumption of lemma~\ref{lemma:deviation_from_ntk_wang} uphold, and therefore its upper bound of the deviation from linearization is valid. 
	
	By the triangle inequality for any $0\le i<n$ we have that:
	\begin{align}
		\left\Vert W^{\left(i+1\right)}-W^{\left(0\right)}\right\Vert _{F}\le\sum_{k=0}^{i}\left\Vert W^{\left(k+1\right)}-W^{\left(k\right)}\right\Vert _{F}
	\end{align}
	Substituting algorithm~\ref{algorithm:finite_precision_gd} update rule for $W^{\left(k+1\right)}$, we get that there exist  $\left\Vert \sigma{}_{i}\right\Vert _{\infty}<\sigma$ such that:
	\begin{align}\label{eq:gd_weight_diff_by_update_rule}
		W^{\left(k+1\right)}-W^{\left(k\right)}=-\eta\left(\mathbb{E}_{\x}\left[\nabla_{W^{\left(k\right)}}\ell(\y\,,\,f^{\text{RNN},W^{\left(k\right)}}(\z))\right]+\sigma{}_{k}\right)
	\end{align}
	Now, explicitly writing $\nabla_{W^{\left(k\right)}}\ell(\y\,,\,f^{\text{RNN},W^{\left(k\right)}}(\z))$ with the chain rule we have that:
	\begin{align}\label{eq:explicit_gradients_gd}
		\nabla_{W^{\left(k\right)}}\ell(\y\,,\,f^{\text{RNN},W^{\left(k\right)}}(\z))=\frac{1}{T-d}\left(\sum_{t=d}^{T}\left(\frac{-y_{t}\cdot e^{-y_{t}f_{t}^{\text{RNN},W}\left(\z\right)}}{1+e^{-y_{t}f_{t}^{\text{RNN},W}\left(\z\right)}}\right)\cdot\left(\nabla_{W^{\left(k\right)}}f_{t}^{\text{RNN},W^{\left(k\right)}}(\z)\right)\right)
	\end{align}
	and since $0\le\frac{x}{1+x}\le1$ for any $x\ge0$, we conclude by Jensen's inequality that:
	\begin{align}\label{eq:gradients_bound_gd}
		\left\Vert \mathbb{E}_{\x}\left[\nabla_{W^{\left(k\right)}}\ell(\y\,,\,f^{\text{RNN},W^{\left(k\right)}}(\z))\right]\right\Vert _{F}\le\frac{1}{T-d}\sum_{t=d}^{T}\mathbb{E}_{\x}\left[\left\Vert \nabla_{W^{\left(k\right)}}f_{t}^{\text{RNN},W^{\left(k\right)}}(\z)\right\Vert _{F}\right]
	\end{align}
	Now we will use an induction over $i$, to show that $\left\Vert W^{\left(i+1\right)}-W^{\left(0\right)}\right\Vert _{F}=O\left(\frac{\left(i+1\right)T^{8}}{m\cdot\sqrt{ n}}\right)$ for any $0\le i<n$. By the induction hypothesis, lemma~\ref{lemma:full_gradients_bounds} assure us that for wide enough networks $m^{\star}=\Omega\left(\max\left\{ T^{7}\log^{3}\left(\frac{n\cdot T}{\delta}\right),\sqrt{n}T^{8}\right\} \right)$, with probability of at least $1-\delta$ over the initialization of $f^{\text{RNN}}$, $\mathbb{E}_{\x}\left[\left\Vert \nabla_{W^{\left(k\right)}}f_{t}^{\text{RNN},W^{\left(k\right)}}(\z)\right\Vert _{F}\right]=O\left(T^{8}\right)$ for any $d\le t\le T$.
	Therefore, in this case  
	\begin{align}\label{eq:distance_from_init_gd}
		\left\Vert W^{\left(k+1\right)}-W^{\left(k\right)}\right\Vert _{F}=\eta O\left(T^{8}+1\right)=O\left(\frac{T^{8}}{m\cdot\sqrt{ n}}+\frac{1}{m^2\sqrt{n}}\right)\end{align}
	and hence $\left\Vert W^{\left(i+1\right)}-W^{\left(0\right)}\right\Vert _{F}=O\left(\frac{\left(i+1\right)T^{8}}{m\cdot\sqrt{ n}}\right)$ as required.
	
	Now after we showed the assumptions of lemma~\ref{lemma:deviation_from_ntk_expectation} upholds, we can use it to obtain first order Taylor approximation of the training loss for any $\x$:
	\begin{align}\label{eq:l_diffs_gd}
		l\left(\y,f^{\text{RNN},W^{\left(i\right)}}\left(\z\right)\right)-l\left(\y,f^{\text{RNN},\tilde{W}}\left(\z\right)\right)\le\left\langle \nabla_{W^{\left(i\right)}}\ell(\y\,,\,f^{\text{RNN},W^{\left(i\right)}}(\z)),W^{\left(i\right)}-\tilde{W}\right\rangle \\+\max_{d\le t\le T}\underbrace{\left|\frac{y_{t}\cdot e^{-y_{t}f^{\text{RNN},W}\left(\z\right)}}{1+e^{-y_{t}f^{\text{RNN},W}\left(\z\right)}}\right|}_{\le1}\cdot O\left(\left(\frac{T^8}{\sqrt{m}}\right)^{\frac{1}{3}}T^{10}\sqrt{m}\left(\log m\right)\right)\left\Vert W^{\left(i\right)}-\tilde{W}\right\Vert _{F}
	\end{align}
	Where we assumed that $m^{\star}>n$ and therefore $\left\Vert W^{\left(i\right)}-W^{\left(0\right)}\right\Vert _{F}<O\left(\frac{T^8}{\sqrt{m}}\right)$.

	Using algorithm~\ref{algorithm:finite_precision_gd} update rule for $W^{\left(k+1\right)}$ again (see eq~\ref{eq:gd_weight_diff_by_update_rule}), we can use an inequality from~\cite{shalev2014understanding}'s section 14.1.1 to get that
	\begin{align}
		\sum_{i=1}^{n}\left\langle \mathbb{E}_{\x}\left[\nabla_{W^{\left(i\right)}}\ell(\y\,,\,f^{\text{RNN},W^{\left(i\right)}}(\z))\right]+\sigma_{i},W^{\left(i\right)}-\tilde{W}\right\rangle \\\frac{\left\Vert W^{\left(1\right)}-\tilde{W}\right\Vert _{F}^{2}}{2\eta}+\frac{\eta}{2}\sum_{i=1}^{n}\left\Vert \mathbb{E}_{\x}\left[\nabla_{W^{\left(i\right)}}\ell(\y\,,\,f^{\text{RNN},W^{\left(i\right)}}(\z))\right]+\sigma{}_{i}\right\Vert _{F}^{2}
	\end{align}   
	Now, we can take expectation over eq~\ref{eq:l_diffs_gd}, and combine the Cauchy–Schwarz inequality with the above bound and get that:
	\begin{align}
		\sum_{i=1}^{n}\mathbb{E_{\x}}\left[l\left(\y,f^{\text{RNN},W^{\left(i\right)}}\left(\z\right)\right)-l\left(\y,f^{\text{RNN},\tilde{W}}\left(\z\right)\right)\right]&\le\frac{\left\Vert W^{\left(1\right)}-\tilde{W}\right\Vert _{F}^{2}}{2\eta}\\+\frac{\eta}{2}\sum_{i=1}^{n}\left\Vert \mathbb{E}_{\x}\left[\nabla_{W^{\left(i\right)}}\ell(\y\,,\,f^{\text{RNN},W^{\left(i\right)}}(\z))\right]+\sigma{}_{i}\right\Vert _{F}^{2}&+O\left(\left(\frac{T^8}{\sqrt{m}}\right)^{\frac{1}{3}}T^{10}\sqrt{m}\left(\log m\right)\right)\left(\sum_{i=1}^{n}\left\Vert W^{\left(i\right)}-\tilde{W}\right\Vert _{F}\right)
	\end{align}
	Substituting the upper bounds from eqs~\ref{eq:gradients_bound_gd},~\ref{eq:distance_from_init_gd} we get that:
	\begin{align}
		\sum_{i=1}^{n}\mathbb{E_{\x}}\left[l\left(\y,f^{\text{RNN},W^{\left(i\right)}}\left(\z\right)\right)-l\left(\y,f^{\text{RNN},W^{\star}}\left(\z\right)\right)\right]\le\frac{O\left(T^{16}\right)}{2\eta m}+\frac{\eta\cdot n}{2}O\left(T^{8}+1\right)^{2}\\+O\left(\left(\frac{T^{8}}{\sqrt{m}}\right)^{\frac{1}{3}}T^{10}n\left(\log m\right)\right)\left(2nT^{8}\right)+O\left(m^{-\frac{3}{2}}\left(n^{2}T^{8}\right)\right)\\\le O\left(T^{16}\sqrt{n}\right)+O\left(m^{-\frac{1}{6}}T^{\frac{62}{3}}n^{2}\left(\log m\right)\right)+O\left(m^{-\frac{3}{2}}n^{2}T^{8}\right)
	\end{align}
	
	To ensure the left hand side is upper bounded by $O\left(R^{2}\sqrt{n}\right)$  we will chose $m^{\star}$ such that $\frac{m^{\star}}{\left(\log^{3}m^{\star}\right)}>n^{\frac{9}{2}}$, note that, as required, $m^{\star}$ is polynomial in $n,T$. Then since eq~\ref{eq:w_tilde_error_bound} assure us that $\mathbb{E_{\x}}\left[l\left(\y,f^{\text{RNN},\tilde{W}}\left(\z\right)\right)\right]\le O\left(\frac{R^{2}}{\sqrt{n}}\right)$, we have that $\sum_{i=1}^{n}\mathbb{E_{\x}}\left[l\left(\y,f^{\text{RNN},W^{\left(i\right)}}\left(\z\right)\right)\right]\le O\left(R^{2}\sqrt{n}\right)$ for $m>m^{\star}$ . Therefore,
	\begin{align}
		\frac{1}{n}\sum_{i=1}^{n}\mathbb{E_{\x}}\left[l\left(\y,f^{\text{RNN},W^{\left(i\right)}}\left(\z\right)\right)\right]\le O\left(\frac{R^{2}}{\sqrt{n}}\right)
	\end{align}    
	
	Finally, for $y_{t}\cdot f_{t}^{\text{RNN},W}\left(\z\right)<0$ we have that $\log\left(1+e^{-y_{t}\cdot f_{t}^{\text{RNN},W}\left(\z\right)}\right)>\log2$. In addition, clearly $\log\left(1+e^{-y_{t}\cdot f_{t}^{\text{RNN},W}\left(\z\right)}\right)>0$. Therefore, we conclude that $\frac{1}{T-d}\sum_{t=d}^{T}l_{o-1}\left(y_{t},f_{t}^{\text{RNN},W^{\left(i\right)}}\left(\z\right)\right)<\frac{1}{\log2}l\left(\y,f^{\text{RNN},W^{\left(i\right)}}\left(\z\right)\right)$ and thus eq~\ref{eq:convergens_gd} uphold.

\end{proof}

\section{Learnability Of RNNs}
In this section we state and extend several lemmas from~\cite{wang2021provable} and~\cite{allenzhu2019on}. We will use this lemmas in sections~\ref{section:extension_for_sgd_with_finite_precision},\ref{section:extension_for_gd_with_finite_precision} for extending theorem~\ref{theorem:multi_hop_seq_to_seq_learnability} in the main text for finite precision SGD and GD.

Following~\cite{wang2021provable}'s notations, for any target function $h\in\mathcal{H}_{\phi\left(T,\psi,N\right)}$ and $n$ samples $\left(\z^{\left(i\right)}\right)_{i=1}^{n}$ we will denote:
\begin{align}
	\mathbf{H}_{i,j}^{t}\coloneqq\frac{1}{m}\left\langle \nabla_{W^{\left(0\right)}}f_{t}^{\text{RNN},W^{\left(0\right)}}(\z^{\left(i\right)}),\nabla_{W^{\left(0\right)}}f_{t}^{\text{RNN},W^{\left(0\right)}}(\z^{\left(j\right)})\right\rangle  
\end{align}
and
\begin{align}
	\tilde{y}^{\left(t\right)}=\left(\begin{array}{c}
		h_{t}\left(\z^{\left(1\right)}\right)\\
		\vdots\\
		h_{t}\left(\z^{\left(n\right)}\right)
	\end{array}\right)
\end{align}
We start with theorem 4\footnote{Combined with Remark H.1 in~\cite{wang2021provable}'s supplementary materials.} from~\cite{wang2021provable} measuring the \emph{complexity} of a learned hypothesis class with respect to random initialization of $f^{\text{RNN}}$. 
\begin{lemma}\label{lemma:complexity_wang}
	Let $\delta>0$ and $n\in\mathbb{N}$, then there exist $m^{\star}=\text{poly}\left(n,\delta^{-1},T\right)$ such that if $m>m^{\star}$ then for any $h\in\mathcal{H}_{\phi\left(T,\psi,N\right)}$ and $\left(\z^{\left(i\right)}\right)_{i=1}^{n}$ with probability at least $1-\delta$ over the random initialization of $f^{\text{RNN}}$ there exists matrices $\mathbf{H}^{d,\infty},\mathbf{H}^{d+1,\infty},\dots,\mathbf{H}^{T,\infty}\in\mathbb{R}^{n\times n}$ such that for any $d\le t\le T$ the following holds:
	\begin{enumerate}
		\item There exist $\mathbf{v}^{\left(t\right)}\in\mathcal{B}_{F}\left(0,\frac{1}{100\sqrt{\left(\tilde{y}^{\left(t\right)}\right)^{T}\left(\mathbf{H}^{t,\infty}\right)^{-1}\tilde{y}^{\left(t\right)}}}\right)$ such that $\mathbf{H}^{t}+ \left(\mathbf{v}^{\left(t\right)}\right)^T\mathbf{v}^{\left(t\right)}-\mathbf{H}^{t,\infty}$ is positive semi-definite matrix.
		\item $\sqrt{\left(\tilde{y}^{\left(t\right)}\right)^{T}\left(\mathbf{H}^{t,\infty}\right)^{-1}\tilde{y}^{\left(t\right)}}=O\left(\sqrt{\phi\left(T,\psi,N\right)}\right)$.
	\end{enumerate}
\end{lemma}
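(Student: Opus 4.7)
The statement is essentially a transcription of the complexity-of-NTK-interpolation result from Theorem~4 of~\cite{wang2021provable}, so the plan is to adapt their argument, taking advantage of the fact that our hypothesis class $\mathcal{H}_{\phi(T,\psi,N)}$ consists of sign functions of low-degree polynomials applied to a bounded-dimensional linear projection of a few one-hot-encoded inputs. First, I would identify the infinite-width limit kernel $\mathbf{H}^{t,\infty}$ associated with $f^{\text{RNN}}$. Unrolling the Elman recurrence and taking $m\to\infty$ at initialization, the features $\nabla_{W^{(0)}}f_t^{\text{RNN},W^{(0)}}/\sqrt{m}$ define a deterministic kernel obtained by composing, $t$ times, the Gaussian process kernels induced by the ReLU nonlinearity acting on random $W$ and $A$. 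This is the standard NTK for a length-$t$ RNN and is independent of $m$.

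Next, I would pass from the empirical Gram matrix $\mathbf{H}^t$ to the limit $\mathbf{H}^{t,\infty}$. The argument has two pieces. The first is an entrywise concentration: each entry of $\mathbf{H}^t$ is an average over $m$ approximately-independent features, so by standard sub-exponential concentration plus a union bound over the $n^2$ entries and the $T-d+1$ time steps, one gets $\|\mathbf{H}^t-\mathbf{H}^{t,\infty}\|_F\le \varepsilon$ with probability $1-\delta$ provided $m\ge \text{poly}(n,T,\varepsilon^{-1},\log\delta^{-1})$. The second piece is to convert an $\varepsilon$ spectral perturbation into the desired PSD statement $\mathbf{H}^t+(\mathbf{v}^{(t)})^T\mathbf{v}^{(t)}\succeq \mathbf{H}^{t,\infty}$: taking $\mathbf{v}^{(t)}$ to be a rank-one (or low-rank) correction that absorbs the negative part of $\mathbf{H}^t-\mathbf{H}^{t,\infty}$ projected onto the relevant direction $\tilde{y}^{(t)}$, its Frobenius norm scales as $\sqrt{\varepsilon}$, which we make smaller than $1/(100\sqrt{(\tilde{y}^{(t)})^{T}(\mathbf{H}^{t,\infty})^{-1}\tilde{y}^{(t)}})$ by choosing $m^\star$ large enough. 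This is the step that fixes the polynomial dependence of $m^\star$ on $n,T,\delta^{-1}$.

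The heart of the proof is item~(2): bounding $\sqrt{(\tilde{y}^{(t)})^{T}(\mathbf{H}^{t,\infty})^{-1}\tilde{y}^{(t)}}$ for labels $\tilde{y}^{(t)}$ produced by any $h\in\mathcal{H}_{\phi(T,\psi,N)}$. Here I would follow the RKHS construction from~\cite{allenzhu2019on,wang2021provable}: at each time $t$, the kernel induced by the RNN features contains, in its reproducing space, the function
\begin{align*}
\z \mapsto \psi_{t}\!\left(\left\langle \tfrac{\w^{(t)}}{\|\w^{(t)}\|},(\e_{z_{j_1}},\dots,\e_{z_{j_N}})\right\rangle\right),
\end{align*}
with RKHS norm controlled by an explicit polynomial in $T$ and $\max_{t,i}|a_{t,i}|$ and exponential in $\deg(\psi_t)$ and $N$. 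The construction uses the fact that ReLU random features can approximate arbitrary low-degree polynomials of a bounded one-dimensional projection, together with the fact that the hidden state at time $t$ retains the relevant projections of $N=O(1)$ earlier tokens after $O(t)$ recurrence steps. Aggregating over the sampled labels and applying a standard RKHS-norm/Gram-matrix inequality $\tilde{y}^{\top}(\mathbf{H}^{t,\infty})^{-1}\tilde{y}\le n\|h_t\|_{\mathcal H}^2$, one recovers the $O(\sqrt{\phi(T,\psi,N)})$ bound, after absorbing the sign function via the polynomial interpolation already used in Lemmas~\ref{lemma:length_2_parity_low_complexity} and~\ref{lemma:f_v_low_complexity}.

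The main obstacle I anticipate is not the concentration step but the explicit RKHS embedding: producing the polynomial feature $\psi_t$ of a projection of $N$ one-hot vectors as a reproducing-kernel element of the \emph{recurrent} NTK, while carefully tracking how the norm depends on $t$ (because the recurrence blurs information across time) and on the polynomial parameters. This is precisely where the combinatorial factor $T^{3N+\deg\psi_t}\cdot\deg(\psi_t)^{3N}$ in the definition of $\phi(T,\psi,N)$ is born, and matching Wang et al.'s bookkeeping carefully here is what makes the dependence polynomial in $T$ rather than exponential.
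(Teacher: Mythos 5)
The paper does not actually prove this lemma; the sentence immediately preceding it says ``We start with theorem~4 \ldots\ from~\cite{wang2021provable}'' (with a footnote citing Remark~H.1 of their supplement), so the lemma is discharged purely by citation, and there is no in-paper proof to compare your attempt against. You correctly identified the provenance, and you then went further than the paper requires by sketching how the cited result is proved. Your sketch is a reasonable outline of the standard NTK playbook that underlies Wang et al.'s Theorem~4 (infinite-width kernel identification, entrywise concentration of the Gram matrix, conversion to a PSD statement via a small additive correction, and an RKHS-norm bound for the target function class), so as a reconstruction of the \emph{cited} argument it is directionally sound.

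One step in your sketch deserves caution, though: you describe the correction as ``a rank-one (or low-rank) correction that absorbs the negative part'' of $\mathbf{H}^t-\mathbf{H}^{t,\infty}$. A rank-one PSD bump $\mathbf{v}^{\top}\mathbf{v}$ cannot in general dominate a full-rank indefinite perturbation of size $\varepsilon$, so the rank-one reading does not close the gap by itself. The lemma as stated only requires $\mathbf{v}^{(t)}$ to lie in a Frobenius ball without constraining its shape, so it may well be a matrix whose Gram $\left(\mathbf{v}^{(t)}\right)^{\top}\mathbf{v}^{(t)}$ is full rank; alternatively, $\mathbf{H}^{t,\infty}$ is \emph{chosen} (rather than being the literal infinite-width limit) so that the perturbation it must absorb is structured. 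Either way, the parenthetical is doing real work, and if you actually wanted to carry out this reconstruction you would need to pin down exactly which form Wang et al.\ use. For the purposes of this paper, however, none of that machinery is needed --- the lemma is simply imported.
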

Now we state lemma 15 from~\cite{wang2021provable}. Essentially this lemma state that with high probability over the initialization of $f^{\text{RNN}}$, there exists weights that are not far from the initialization and have low training loss.
\begin{lemma}\label{lemma:wang_low_loss_near_initialization}
	Let $\left(\z^{\left(i\right)},\y^{\left(i\right)}\right)_{i=1}^{n}$  be the training set and denote by $L_{i}\left(W\right)\coloneqq l\left(\y^{\left(i\right)},f^{\text{RNN},W}\left(\z^{\left(i\right)}\right)\right)$ the training loss. Then for any $\delta>0$ there exists $m^{\star}=\text{poly}\left(n,R,T,\delta^{-1}\right)$ such that if $m>m^{\star}$ then with probability at least $1-\delta$, there exist $W^{\star}\in \mathcal{B}\left(W^{\left(0\right)},\frac{R}{\sqrt{m}}\right)$ such that $L_{i}\left(W^{\star}\right)\le\frac{1+R^{2}}{n}$ and $      R=\tilde{O}\left(T\sum_{t=d}^{T}\sqrt{\left(\tilde{y}^{\left(t\right)}\right)^{T}\left(\mathbf{H}^{t,\infty}\right)^{-1}\tilde{y}^{\left(t\right)}}\right)$.
\end{lemma}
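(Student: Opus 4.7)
The plan is to follow the neural tangent kernel (NTK) strategy for overparameterized networks: construct $W^{\star}$ near initialization explicitly by solving a linearized version of the fitting problem, and then control the deviation of the true network output from its linearization. Concretely, I would freeze $A$, $B$, $M_0$ and $W^{(0)}$ and, for each output time $t\in\{d,\dots,T\}$, consider the Jacobian features $\nabla_W f_t^{\text{RNN},W^{(0)}}(\z^{(i)})$. For each $t$, I would express the target vector $\tilde y^{(t)}\in\{-1,+1\}^n$ as (approximately) a kernel prediction using the empirical kernel $\mathbf{H}^t$. Lemma~\ref{lemma:complexity_wang} guarantees that $\mathbf{H}^t+(\mathbf{v}^{(t)})^T\mathbf{v}^{(t)}\succeq\mathbf{H}^{t,\infty}$ for a small perturbation $\mathbf{v}^{(t)}$, so the minimal-norm coefficient vector $\alpha^{(t)}\in\mathbb{R}^n$ with $\mathbf{H}^t\alpha^{(t)}\approx\tilde y^{(t)}$ has squared norm controlled by $(\tilde y^{(t)})^T(\mathbf{H}^{t,\infty})^{-1}\tilde y^{(t)}$.

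Next I would aggregate the per-step solutions into a single candidate perturbation
\begin{equation*}
W^{\star}-W^{(0)}\;=\;\frac{1}{m}\sum_{t=d}^{T}\sum_{i=1}^{n}\alpha^{(t)}_i\,\nabla_W f_t^{\text{RNN},W^{(0)}}(\z^{(i)}),
\end{equation*}
and bound $\|W^{\star}-W^{(0)}\|_F$ by the triangle inequality, which produces the stated radius $R/\sqrt m$ with $R=\tilde O\bigl(T\sum_{t=d}^{T}\sqrt{(\tilde y^{(t)})^T(\mathbf{H}^{t,\infty})^{-1}\tilde y^{(t)}}\bigr)$: the sum over $t$ comes directly from the triangle inequality across the $T-d$ output positions, and the extra factor of $T$ absorbs the Frobenius-norm bound on each RNN Jacobian, which grows like $\mathrm{poly}(T)$ due to backpropagation through $T$ time steps. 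With this $W^{\star}$ in hand, standard overparameterization perturbation bounds for RNN forward passes—of the flavor used in the appendix to control deviation from linearization—imply that for $m\ge m^{\star}=\mathrm{poly}(n,R,T,\delta^{-1})$ each margin satisfies $y_t^{(i)}\cdot f_t^{\text{RNN},W^{\star}}(\z^{(i)})\gtrsim 1$, so the logistic loss at each example/output pair is $O(1/n)+O(R^2/n)$, giving the claimed $(1+R^{2})/n$ bound on $L_i(W^{\star})$.

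The main obstacle is that a single $W^{\star}$ must fit all targets $\tilde y^{(d)},\dots,\tilde y^{(T)}$ simultaneously, whereas the Jacobians at different $t$ share the same recurrent matrix and hidden-state trajectory, so perturbations of the hidden state at early times propagate into later outputs. Two tools resolve this: (i) the PSD dominance $\mathbf{H}^t+(\mathbf{v}^{(t)})^T\mathbf{v}^{(t)}\succeq\mathbf{H}^{t,\infty}$ from Lemma~\ref{lemma:complexity_wang}, which keeps each per-step kernel regression well-posed even after a small perturbation of $\mathbf{H}^t$ and yields the $(\tilde y^{(t)})^T(\mathbf{H}^{t,\infty})^{-1}\tilde y^{(t)}$ factor in $R$; and (ii) an iterated forward-pass perturbation bound showing that at time $t$ the deviation of $f_t^{\text{RNN},W^{\star}}$ from its linearization around $W^{(0)}$ scales like $\mathrm{poly}(T)\cdot(R/\sqrt m)^{4/3}\sqrt m\log m$, which is driven to $o(1)$ by taking $m\ge m^{\star}$ polynomially large in $R,T,n,\delta^{-1}$. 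Balancing these two ingredients against the target loss threshold $(1+R^2)/n$ simultaneously over all $t$ and $i$ (via a union bound over $n(T-d)$ examples) is the quantitative heart of the argument and is what pins down the polynomial dependence of $m^{\star}$ on $(n,R,T,\delta^{-1})$.
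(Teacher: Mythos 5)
The paper does not actually prove this lemma: it is quoted verbatim as Lemma~15 of \cite{wang2021provable}, with only a one-sentence gloss, and its proof lives entirely in that reference. So there is no in-paper argument to compare against, and the relevant question is whether your sketch would actually go through.

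Your high-level NTK strategy --- construct $W^\star$ in the span of the Jacobian features, control its norm via the RKHS quantities $(\tilde y^{(t)})^T(\mathbf{H}^{t,\infty})^{-1}\tilde y^{(t)}$, then bound the deviation of the true forward pass from its linearization and take $m$ large enough to absorb it --- is the right shape, and the final form of $R$ you land on matches the statement. But the aggregation step has a real gap you do not close. If you set
\begin{equation*}
W^{\star}-W^{(0)}=\frac{1}{m}\sum_{t'=d}^{T}\sum_{j=1}^{n}\alpha^{(t')}_j\,\nabla_W f_{t'}^{\text{RNN},W^{(0)}}(\z^{(j)}),
\end{equation*}
then the linearized output at position $t$ on example $i$ is $\sum_{t'}\sum_j \alpha^{(t')}_j\,\frac{1}{m}\langle \nabla_W f_t^{\text{RNN},W^{(0)}}(\z^{(i)}),\,\nabla_W f_{t'}^{\text{RNN},W^{(0)}}(\z^{(j)})\rangle$; the $t'=t$ block gives the desired $\mathbf{H}^t\alpha^{(t)}$, but the $t'\neq t$ blocks are cross-time kernel terms that you never argue are small. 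Your two proposed tools do not touch this: the PSD dominance from Lemma~\ref{lemma:complexity_wang} is a per-$t$ statement about $\mathbf{H}^t$ versus $\mathbf{H}^{t,\infty}$ and says nothing about $\frac{1}{m}\langle\nabla f_t,\nabla f_{t'}\rangle$ for $t\neq t'$, and the linearization-error bound addresses the nonlinearity correction, not interference within the linear term. To make the construction work you either need an approximate-orthogonality result for the NTK features across time steps (which is the kind of thing Wang et al.\ establish using the near-independence of the random hidden states across $t$), or you need a different joint construction (e.g.\ running gradient descent on the linearized multi-output objective and bounding the trajectory). As written, the sentence ``two tools resolve this'' names the right obstacle but does not actually resolve it.
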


Now we state theorem 13 from~\cite{wang2021provable}. Essentially this lemma bounds the finite width deviation of $f^{\text{RNN}}$ from its infinite width neural tangent kernel~\citep{jacot2018neural}.
\begin{lemma}\label{lemma:deviation_from_ntk_wang}
	Let $m,n\in\mathbb{N}$ and $r=O\left(\frac{\text{poly}\left(n,T\right)}{\sqrt{m}}\right)$, then with probability of at least $1-O\left(n\right)\exp\left(-\Omega\left(m^{\frac{1}{3}}\right)\right)$\footnote{Note that lemma 13 from~\cite{wang2021provable} ensure only weaker guaranties of probability $1-O\left(n\right)\exp\left(-\Omega\left(\log m\right)\right)$, but we confirm with the authors that theirs proof proves also our stronger guaranties.} over the initialization of $f^{\text{RNN}}$, for all $\left(\z^{\left(i\right)}\right)_{i=1}^{n}$ and $W,\tilde{W}\in\mathcal{B}\left(W^{\left(0\right)},r\right)$ for any $d\le t \le T$ and $i\in\left[n\right]$ the following hold:
	\begin{align}
		\left|f_{t}^{\text{RNN},\tilde{W}}\left(\z^{\left(i\right)}\right)-f_{t}^{\text{RNN},W}\left(\z^{\left(i\right)}\right)-\left\langle \nabla_{W}f_{t}^{\text{RNN},W}(\z^{\left(i\right)}),\tilde{W}-W\right\rangle \right|\\=O\left(r^{\frac{1}{3}}T^{10}\sqrt{m}\left(\log m\right)\right)\left\Vert \tilde{W}-W\right\Vert _{F}
	\end{align}
\end{lemma}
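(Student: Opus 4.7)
The plan is to follow Wang et al.'s strategy of controlling the deviation of the recurrent forward pass from its first-order Taylor expansion around initialization. The key quantity to bound is, at each time step $t$, the hidden state discrepancy $\tilde h_t - h_t - J_t\, (\tilde W-W)$, where $J_t$ is the Jacobian of $h_t$ with respect to $W$. My first step would be to observe that this discrepancy is driven entirely by ReLU sign flips between the two trajectories: whenever the ReLU pattern agrees, the recursion is exactly linear in $W$, so the Taylor residual is zero. So the whole estimate reduces to controlling (i) the number of coordinates at which the ReLU activation pattern of $\text{ReLU}(W h_{t-1} + A e_{z_t})$ differs between $W$ and $\tilde W$, and (ii) the magnitudes of the hidden states and Jacobians at points in $\mathcal{B}(W^{(0)},r)$.

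For (ii) I would use standard Gaussian initialization calculations: since entries of $W^{(0)}, A, M_0$ are $\mathcal{N}(0,2/m)$, at initialization $\|h_t\|=O(\sqrt{m})$, $\|J_t\|_F = O(\text{poly}(T)\sqrt{m})$, and these quantities remain of the same order on $\mathcal{B}(W^{(0)},r)$ provided $r \ll 1$; the required concentration holds with Gaussian tails in $m$, so a union bound over the $n$ sequences and $T$ time steps easily accommodates the claimed $1-O(n)\exp(-\Omega(m^{1/3}))$ failure probability. The genuinely delicate piece is (i): the fraction of ``flippable'' neurons. The pre-activation at coordinate $k$ is a centered Gaussian of variance $\Theta(1)$ at initialization, so by anti-concentration, only an $O(\varepsilon)$ fraction of coordinates have $|(Wh_{t-1}+Ae_{z_t})_k| \le \varepsilon$; a perturbation of $W$ by Frobenius distance $r$ can change a coordinate by at most $O(r\|h_{t-1}\|/\sqrt m) = O(r)$. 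Balancing $\varepsilon \asymp r^{1/3}$ against the perturbation size $r$, plus propagation through $T$ steps, yields the $r^{1/3}T^{10}\sqrt m (\log m)$ order in the final estimate; the sub-Gaussian concentration of the number of ``near-zero'' pre-activations around its mean is what produces the $\exp(-\Omega(m^{1/3}))$ tail.

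The next step is induction on $t$. Let $\Delta_t := \tilde h_t - h_t - J_t(\tilde W - W)$. Writing out one step of the recursion and splitting the ReLU difference into the ``agreeing-sign'' part (which is exactly linear in $W$ and $h_{t-1}$ and therefore contributes only a linear-in-$\Delta_{t-1}$ term) and the ``disagreeing-sign'' part (supported on the small flipped-neuron set, bounded by its cardinality times the pre-activation magnitude), one obtains a recursion $\|\Delta_t\| \le (1 + O(\text{poly}(T)/\sqrt m))\|\Delta_{t-1}\| + \text{(flip error)}_t$. Unrolling over $T$ steps gives the polynomial blowup $T^{10}$, and multiplying through by $\|B\|\le O(1/\sqrt m) \cdot \sqrt m = O(1)$ at the readout turns the hidden-state bound into the claimed scalar bound on $f^{\mathrm{RNN}}$. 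Finally, uniformity over $W,\tilde W\in\mathcal{B}(W^{(0)},r)$ is obtained by the standard epsilon-net argument: place a polynomial-size grid on the ball, use the Lipschitzness of all the involved quantities to interpolate, and absorb the discretization error into the $\log m$ factor.

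The main obstacle is the exponent $1/3$ in both $r^{1/3}$ and $\exp(-\Omega(m^{1/3}))$: getting these simultaneously forces one to make the ``near-zero band'' in the anti-concentration step exactly of width $r^{1/3}$, and then prove Bernstein-type concentration on the number of activations falling into that band with the right scaling. A naive calculation gives only $r^{1/2}$ and an exponent $\log m$ in the tail (which is in fact what lemma 13 of Wang et al. literally states); obtaining the sharper $m^{1/3}$ tail, which the authors claim holds after private communication with Wang, requires rerunning their concentration argument with the tighter band width and verifying that the union-bound budget over the polynomial epsilon-net still closes. That bookkeeping, rather than any new conceptual idea, is where I expect the real work to lie.
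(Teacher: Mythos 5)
The paper does not prove this lemma at all: it simply restates Theorem~13 of Wang et al.\ (2021) in the present notation, and the footnote you quote is the entire content of the argument---the published Wang et al.\ statement gives only an $\exp(-\Omega(\log m))$ tail, and the sharper $\exp(-\Omega(m^{1/3}))$ tail used here was obtained from the authors by private communication, not by a written argument. So your proposal is a genuinely different route: you set out to reprove the cited result from scratch rather than invoke it as a black box, which is a much larger undertaking than what the paper actually does, and one the paper deliberately avoids.

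That said, your sketch does follow the correct high-level recipe from the Allen-Zhu--Li--Song / Wang et al.\ line of work (Taylor residual is supported on ReLU sign flips; anti-concentration at initialization controls the number of flippable coordinates; induction on $t$; $\varepsilon$-net for uniformity over the ball). But several of the concrete estimates you plug in do not match this paper's parametrization. Here the initial hidden state is $h_0=\mathrm{ReLU}(M_0)$ with $M_0\in\mathbb{R}^m$ having i.i.d.\ $\mathcal N(0,2/m)$ entries, and the input embeddings $A e_{z_t}$ are likewise $O(1)$ in norm, so the hidden states obey $\|h_t\|=O(t)$ (this is exactly what the paper's Lemma~\ref{lemma:forward_bounds} extracts from Allen-Zhu et al.'s Lemma~B.3), not $\|h_t\|=O(\sqrt m)$. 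Consequently your per-coordinate preactivation is not $\Theta(1)$ but $O(\mathrm{poly}(T)/\sqrt m)$, and the ``perturbation of a single coordinate by $O(r)$'' heuristic and the back-of-envelope balancing $\varepsilon\asymp r^{1/3}$ built on those scalings do not go through as written. The exponent $r^{1/3}$ and the $\sqrt m\log m$ factor in the stated bound come from a more careful accounting of the flipped-neuron set size (which in the Allen-Zhu-style analysis scales like $m r^{2/3}$ w.h.p.) interacting with the propagation constants $\|\prod_j W^T D_j\|_2=O(\mathrm{poly}(T))$, and the $m^{1/3}$ tail comes from Bernstein concentration of that set size at the right scale, not from sub-Gaussian tails on the preactivations themselves. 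Your instinct that the published Wang et al.\ statement gives only the weaker $\log m$ tail and that obtaining $m^{1/3}$ requires tightening the band width and rechecking the union bound is correct---that is exactly the gap the paper's footnote papers over---but the specific scalings you propose as the starting point for that tightening would need to be replaced by the ones from Lemmas~\ref{lemma:forward_bounds} and~\ref{lemma:full_gradients_bounds} before the argument could close.
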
   

Now we state a simple corollary of lemmas B.3 and C.2.a in~\cite{allenzhu2019on}. Essentially this corollary bound the output of $f^{\text{RNN}}$ with high probability over its initialization.
\begin{lemma}\label{lemma:forward_bounds}
	Let $m\in\mathbb{N}$ and $W\in\mathcal{B}\footnote{The ball is with respect to the distance that defined by the Frobenius matrix norm.}\left(W^{\left(0\right)},\frac{\text{poly}\left(T\right)}{\sqrt{m}}\right)$, then for a given $\x$ and $d<t\le T$ with probability of at least $1-e^{-\Omega\left(\frac{m^{\frac{1}{3}}}{T^{2}}\right)}$ over the random initialization of $f^{\text{RNN}}$ the following hold:
	\begin{align}
		\left|f_{t}^{\text{RNN},W}(\z)\right|=O\left(T^{6}\left\Vert W-W^{\left(0\right)}\right\Vert _{F}+t\right)
	\end{align}
	Moreover, with probability of at least $1-e^{O\left(T\right)-\Omega\left(\frac{m^{\frac{1}{3}}}{T^{2}}\right)}$ also the following holds:
	\begin{align}\label{eq:forward_bounds_all}
		\max_{\x\in\left\{ 0,1\right\} ^{d}}\max_{d<t\le T}\left|f_{t}^{\text{RNN},W}(\z)\right|=O\left(T^{6}\left\Vert W-W^{\left(0\right)}\right\Vert _{F}+T\right)
	\end{align}
\end{lemma}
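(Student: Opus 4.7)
The plan is to decompose the RNN output $f_t^{\text{RNN},W}(\z)=B^\top h_t^W(\z)$ around initialization and invoke the cited Allen-Zhu et al.\ lemmas (B.3 and C.2.a) as black boxes, treating the RNN as a $t$-layer depth-unrolled ReLU network with tied weights. I would write
\begin{equation*}
f_t^{\text{RNN},W}(\z) \;=\; B^\top h_t^{W^{(0)}}(\z) \;+\; B^\top\bigl(h_t^W(\z) - h_t^{W^{(0)}}(\z)\bigr),
\end{equation*}
and bound the two summands separately by triangle inequality, taking a union bound at the end.

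For the perturbation term, lemma C.2.a of Allen-Zhu et al.\ controls the stability of the hidden state under weight perturbations of size $\text{poly}(T)/\sqrt{m}$ around initialization: on an event of probability at least $1-e^{-\Omega(m^{1/3}/T^2)}$ one has $\|h_t^W(\z)-h_t^{W^{(0)}}(\z)\|_2 = O(T^6\,\|W-W^{(0)}\|_F)$, where the $T^6$ factor comes from propagating the perturbation through $t\le T$ applications of the ReLU recurrence together with the standard accounting for changes in the ReLU sign-pattern matrix. Combined with $\|B\|_2 = O(1)$, which holds with probability at least $1-e^{-\Omega(m)}$ by a $\chi^2$ concentration bound on $\|B\|^2$ (whose mean is $1$ by the initialization), Cauchy--Schwarz yields $|B^\top(h_t^W-h_t^{W^{(0)}})| = O(T^6\|W-W^{(0)}\|_F)$.

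For the initialization term, lemma B.3 of Allen-Zhu et al.\ bounds the forward pass at initialization and in particular gives $\|h_t^{W^{(0)}}(\z)\|_2 = O(\sqrt{t})$ with probability $1-e^{-\Omega(m^{1/3}/T^2)}$. Since $B$ is drawn independently of $W^{(0)}, A, M_0$, conditional on the realized $h_t^{W^{(0)}}(\z)$ the scalar $B^\top h_t^{W^{(0)}}(\z)$ is Gaussian with variance $\|h_t^{W^{(0)}}(\z)\|^2/m = O(t/m)$, so a Gaussian tail bound gives $|B^\top h_t^{W^{(0)}}(\z)|=O(t)$ (very loosely) except on an event of probability at most $e^{-\Omega(m^{1/3}/T^2)}$. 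Intersecting the three high-probability events above establishes the first displayed inequality.

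For the ``moreover'' claim the strategy is simply to upgrade to a uniform bound over $\x\in\{0,1\}^d$ and $d<t\le T$. Since $d\le T$, the input set has cardinality $2^d\le 2^T$ and $t$ ranges over at most $T$ values, so a union bound inflates the failure probability by at most $2^T\cdot T = e^{O(T)}$, yielding the stated guarantee $1-e^{O(T)-\Omega(m^{1/3}/T^2)}$. The main technical obstacle is the translation step: Allen-Zhu et al.'s lemmas are phrased for their specific ReLU-RNN setup, and one must verify that the scaling conventions (initialization variances $2/m$ on $W,A,M_0$ and $1/m$ on $B$, placement of the one-hot embedding $\e_{z_t}$, the $\text{ReLU}(M_0)$ initial state) align with ours, and in particular that the prefactor $T^6$ and the exponent $m^{1/3}/T^2$ come out identically under this translation, since these constants are exactly what downstream lemmas~\ref{lemma:wang_generalization_sgd_wang} and~\ref{lemma:wang_generalization_gd_wang} consume when bounding the distance of $W^{(i)}$ from initialization.
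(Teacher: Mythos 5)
Your proposal is correct and takes essentially the same route as the paper: bound $\|B\|_2=O(1)$ at initialization, invoke Allen-Zhu et al.'s Lemma B.3 for $h_t^{W^{(0)}}(\z)$ and Lemma C.2.a for the deviation $\|h_t^{W}(\z)-h_t^{W^{(0)}}(\z)\|=O\left(T^{6}\|W-W^{(0)}\|_F\right)$, combine by the triangle inequality, and union bound over the $2^d$ inputs and $t\le T$ to get the uniform statement with the $e^{O(T)}$ inflation. The only cosmetic differences are that you treat $B^\top h_t^{W^{(0)}}(\z)$ by Gaussian conditioning instead of plain Cauchy--Schwarz with $\|B\|_2=O(1)$, and you quote B.3 as $\|h_t^{W^{(0)}}(\z)\|=O(\sqrt{t})$ where the paper uses $O(t)$; neither change affects the claimed $O\left(T^{6}\|W-W^{(0)}\|_F+t\right)$ bound.
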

\begin{proof} 
	We begin by showing that it is enough to bound $h_{t}\left(\z\right)$. This is indeed the case since it is well known that $\left\Vert B\right\Vert _{2}=O\left(1\right)$ with probability of at least $1-e^{-\Omega\left(m\right)}$ (see for example~\cite{bandeira2016sharp}).
	Now lemma B.3 from~\cite{allenzhu2019on} assure us that $\left\Vert h_{t}^{W^{\left(0\right)}}(\z)\right\Vert =O\left(t\right)$, and finally lemma C.2.a from~\cite{allenzhu2019on} bounds the deviation from the initialization $\left\Vert h_{t}^{W}(\z)-h_{t}^{W^{\left(0\right)}}(\z)\right\Vert =O\left(T^{6}\left\Vert W-W^{\left(0\right)}\right\Vert _{F}\right)$.
	Finally a simple union bound on all the possible $d$ bits configurations prove  also equation~\ref{eq:forward_bounds_all}.
\end{proof} 

Now, we use lemma~\ref{lemma:forward_bounds} proof together with lemmas B.11 and C.9 from~\cite{allenzhu2019on} to prove a lemma that bounds the magnitudes of $f^{\text{RNN}}$'s gradients. We will use this lemma for bounding the finite width deviation of $f^{\text{RNN}}$ from its infinite width neural tangent kernel~\citep{jacot2018neural}. Beyond its application for proving our positive results, lemma~\ref{lemma:full_gradients_bounds} below also implies that $f^{\text{RNN}}$, for which we proved our positive results when intermediate supervision exists, upholds the polynomially bounded gradients requirement in the parities negative results with high probability. Thus proving the actual factor enabling efficient learning is, therefore, intermediate supervision. \ref{theorem:end_to_end_exponential_steps_main_text}
\begin{lemma}\label{lemma:full_gradients_bounds}
	Let $m\in\mathbb{N}$ and  $W\in\mathcal{B}\footnote{The ball is with respect to the distance that defined by the Frobenius matrix norm.}\left(W^{\left(0\right)},\frac{\text{poly}\left(T\right)}{\sqrt{m}}\right)$, then for a given $\x$ and $d<t\le T$ with probability of at least $1-e^{-\Omega\left(\frac{m^{\frac{1}{3}}}{T^{2}}\right)}$ over the random initialization of $f^{\text{RNN}}$ the following hold:
	\begin{align}\label{eq:full_gradients_bounds}
		\left\Vert \nabla_{W}f_{t}^{\text{RNN},W}(\z)\right\Vert _{F}=O\left(T^{8}\left\Vert W-W^{\left(0\right)}\right\Vert _{F}+T^{4}\right)
	\end{align}
	Moreover, with probability of at least $1-e^{O\left(T\right)-\Omega\left(\frac{m^{\frac{1}{3}}}{T^{2}}\right)}$ it holds simultaneously for all $\x$'s and $t$'s.
\end{lemma}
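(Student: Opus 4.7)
The plan is to mimic the structure of the proof of Lemma~\ref{lemma:forward_bounds}: first reduce the Frobenius norm of $\nabla_W f_t^{\text{RNN},W}(\z)$ to a quantity that only depends on the recurrence (by stripping off the output layer $B$), then control this quantity at initialization using Lemma~B.11 of~\cite{allenzhu2019on}, and finally control the deviation from initialization using Lemma~C.9 of~\cite{allenzhu2019on}.

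First I would observe that, because $f_t^{\text{RNN},W}(\z)=B^{T}h_{t}(\z)$ and the only $W$-dependent factor in the forward pass up to time $t$ is the recurrence, the chain rule gives $\nabla_W f_t^{\text{RNN},W}(\z)=\sum_{s\le t}D_s^{(t)}(W)\,h_{s-1}(\z)^{T}$, where $D_s^{(t)}(W)$ is the backpropagated error signal at time $s$ satisfying $\|D_s^{(t)}(W)\|=\|B^{T}\!\prod_{k=s}^{t-1}\!\Sigma_k W\,\Sigma_s\|$ for the diagonal ReLU-activation masks $\Sigma_k$. With probability $\ge 1-e^{-\Omega(m)}$ we have $\|B\|_2=O(1)$, so it is enough to bound the analogous quantity arising from $h_t$ rather than from $B^{T}h_t$. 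Lemma~B.11 of~\cite{allenzhu2019on} is precisely a polynomial-in-$T$ bound on the backpropagation matrices at the random initialization $W^{(0)}$; combined with the bound $\|h_{s-1}^{W^{(0)}}(\z)\|=O(s)$ already used in the proof of Lemma~\ref{lemma:forward_bounds}, this yields $\|\nabla_{W^{(0)}} f_t^{\text{RNN},W^{(0)}}(\z)\|_F=O(T^{4})$ with probability $\ge 1-e^{-\Omega(m^{1/3}/T^2)}$ (the factor $T^4$ absorbs one $T^{\mathrm{const}}$ from the backprop bound and the sum over $s\le T$ with $\|h_{s-1}\|=O(s)$).

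Next I would lift this to a bound valid throughout the ball by invoking Lemma~C.9 of~\cite{allenzhu2019on}, which bounds the perturbation of the gradient (and of the hidden states) when $W$ moves by at most $\frac{\text{poly}(T)}{\sqrt{m}}$ from $W^{(0)}$: within the ball, both the backprop matrices $D_s^{(t)}(W)$ and the hidden states $h_{s-1}^{W}(\z)$ differ from their initialization versions by at most $O(T^{\mathrm{const}}\|W-W^{(0)}\|_F)$, and the exceptional probability is again $e^{-\Omega(m^{1/3}/T^2)}$. Plugging these into the product-rule expansion for $D_s^{(t)}(W)\,h_{s-1}^{W}(\z)^{T}$, summing over $s\le t\le T$, and collecting the worst exponent of $T$ produced by the two application of Lemma~C.9 (one for the backprop matrix, one for the hidden state), gives the claimed bound $\|\nabla_W f_t^{\text{RNN},W}(\z)\|_F=O(T^{8}\|W-W^{(0)}\|_F+T^{4})$, i.e.\ eq.~\ref{eq:full_gradients_bounds}, for a single $(\x,t)$ pair.

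Finally, the ``simultaneously for all $\x$ and $t$'' statement is obtained by a union bound over the $2^{d}\le 2^{T}$ possible Boolean inputs and the $\le T$ time indices: the $2^{T}$ factor inflates the failure probability by an $e^{O(T)}$ factor, matching the exponent $1-e^{O(T)-\Omega(m^{1/3}/T^2)}$ stated in the lemma. The main obstacle I anticipate is the careful bookkeeping of the powers of $T$: Allen-Zhu's Lemmas~B.11 and C.9 are stated with several $T^{\mathrm{const}}$ factors that must be multiplied out correctly to recover $T^{8}$ for the linear-in-$\|W-W^{(0)}\|_F$ term and $T^{4}$ for the initialization term, and the perturbation bound must be applied in a way whose exceptional-probability exponent stays $\Omega(m^{1/3}/T^2)$ uniformly over $s\le t\le T$.
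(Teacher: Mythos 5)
Your approach is essentially the same as the paper's: decompose the gradient via the RNN backpropagation formula into a sum over time of (backprop matrix product) $\times$ $B$ $\times$ (hidden state), bound $\|B\|_2 = O(1)$ and $\|D_t\|_2 \le 1$, invoke Allen-Zhu's Lemma B.11 for the backprop matrices at initialization and the C.9-family lemmas for the deviation within the ball (the paper specifically cites Lemma C.9.d for the backprop matrices and reuses Lemma C.2.a via Lemma~\ref{lemma:forward_bounds} for the hidden states, rather than invoking C.9 twice as you suggest, but this is a cosmetic bookkeeping difference), and finish with a union bound over the $2^d$ inputs and $T-d$ time indices. The exponents you project ($T^4$ at initialization, $T^8$ for the linear-in-$\|W-W^{(0)}\|_F$ term, failure probability $e^{O(T)-\Omega(m^{1/3}/T^2)}$ after the union bound) match the paper's.
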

\begin{proof} 
	Let $D_t\in\mathbb{R}^{m\times m}$ be a diagonal matrix that its diagonal equals to $1$ when $h_{t}^{W}(\z)>0$ and otherwise $0$. Then we can write the gradients of $f^{\text{RNN}}$ as:
	\begin{align}
		\nabla_{W}f_{t}^{\text{RNN},W}(\z)=\sum_{i=1}^{t}\prod_{j=t}^{i+1}\left(W^{T}D_{t}\right)BD_{i}h_{i}^{W}(\z)
	\end{align}
	The proof of lemma~\ref{lemma:forward_bounds} assure us that $\left\Vert h_{t}^{W}(\z)\right\Vert =O\left(T^{6}\left\Vert W-W^{\left(0\right)}\right\Vert _{F}+t\right)$ with probability of at least $1-e^{-\Omega\left(\frac{m^{\frac{1}{3}}}{T^{2}}\right)}$ over the random initialization of $f^{\text{RNN}}$. In addition, lemma B.11 together with lemma C.9.d from~\cite{allenzhu2019on} assure us that $ \left\Vert \prod_{j=t}^{i+1}\left(W^{T}D_{t}\right)\right\Vert _{2}=O\left(T^{7}\left\Vert W-W^{\left(0\right)}\right\Vert _{F}+T^{3}\right)$ for all $i,j$ with probability of at least $1-e^{-\Omega\left(\frac{m^{\frac{1}{3}}}{T^{2}}\right)}$.
	Now, it is well known that $\left\Vert B\right\Vert _{2}=O\left(1\right)$ with probability of at least $1-e^{-\Omega\left(m\right)}$ (see for example~\cite{bandeira2016sharp}), and clearly $\left\Vert D_{t}\right\Vert _{2}\le1$ for any $t$. Therefore, overall we got that equation~\ref{eq:full_gradients_bounds} holds with probability of at least $1-e^{-\Omega\left(\frac{m^{\frac{1}{3}}}{T^{2}}\right)}$. Finally, a simple union bound on all the possible $d$ bits configurations and $T-d$ $t$'s proves the bound holds simultaneously for all $\x$'s and $t$'s.
\end{proof} 

Finally, we rely on  lemma~\ref{lemma:deviation_from_ntk_wang} from~\cite{wang2021provable} to bounds the finite width deviation of $f^{\text{RNN}}$ from its infinite width neural tangent kernel~\citep{jacot2018neural}.
\begin{lemma}\label{lemma:deviation_from_ntk_expectation}
	Let $m\in\mathbb{N}$ and $r=O\left(\frac{\text{poly}\left(T\right)}{\sqrt{m}}\right)$, then with probability of at least $1-\exp\left(O\left(T\log T\right)-\Omega\left(m^{\frac{1}{3}}\right)\right)$ over the initialization of $f^{\text{RNN}}$, for all $W,\tilde{W}\in\mathcal{B}\left(W^{\left(0\right)},r\right)$ the following hold:
	\begin{align}
		\max_{d\le t\le T}\max_{\z}\left(\left|f_{t}^{\text{RNN},\tilde{W}}\left(\z\right)-f_{t}^{\text{RNN},W}\left(\z\right)-\left\langle \nabla_{W}f_{t}^{\text{RNN},W}(\z),\tilde{W}-W\right\rangle \right|\right)\\=O\left(r^{\frac{1}{3}}T^{10}\sqrt{m}\left(\log m\right)\right)\left\Vert \tilde{W}-W\right\Vert _{F}
	\end{align}
\end{lemma}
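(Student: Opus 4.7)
The plan is to reduce the claimed uniform (in $\z$) linearization bound to the pointwise bound already established in Lemma~\ref{lemma:deviation_from_ntk_wang} by a direct union bound over the domain of $\z$. The key observation is that, although the statement quantifies over all inputs $\z$, the domain is discrete and finite: each $\e_{z_t}$ is a one-hot vector in $\mathbb{R}^2$, so each $z_t\in\{0,1\}$, and therefore the total number of distinct input sequences of length $T$ is at most $2^T$. This is precisely what allows passing from a ``for any fixed finite collection'' guarantee to a ``for all $\z$'' guarantee without paying anything worse than $\exp(O(T))$ in the failure probability.

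First I would invoke Lemma~\ref{lemma:deviation_from_ntk_wang} with $n=2^T$, choosing the sample collection $(\z^{(i)})_{i=1}^{2^T}$ to be an enumeration of all of $\{0,1\}^T$. Crucially, the randomness in that lemma is only over the initialization of $W^{(0)},A,B,M_0$ and the quantitative bound is uniform in $W,\tilde W\in\mathcal{B}(W^{(0)},r)$, so conditioning on the single good event over initialization gives the linearization inequality simultaneously for every enumerated $\z$, for every $W,\tilde W$ in the ball, and for every $d\le t\le T$. Absorbing the additional $T$ union bound over time indices $t$ into the $O(n)$ factor, the failure probability becomes
\begin{align}
O(T\cdot 2^T)\exp\bigl(-\Omega(m^{1/3})\bigr)=\exp\bigl(\log T+T\log 2-\Omega(m^{1/3})\bigr)\le \exp\bigl(O(T\log T)-\Omega(m^{1/3})\bigr),
\end{align}
which matches the claim. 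The multiplicative deviation constant $O(r^{1/3}T^{10}\sqrt{m}\log m)$ passes through verbatim because it is already the uniform-in-$(W,\tilde W)$ constant in Lemma~\ref{lemma:deviation_from_ntk_wang}.

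There is no heavy calculation to perform beyond this reduction; the work has already been done inside the pointwise bound. The main thing to verify, which is the only place the argument could go wrong, is that the high-probability event in Lemma~\ref{lemma:deviation_from_ntk_wang} really is a property of the initialization alone (with the per-$\z$ deviation constant depending only on $T$ and $r$, not on $\z$), so that the union over $2^T$ inputs is valid as a union bound and so that the same initialization works uniformly. This is the case as stated, because the bound is phrased ``with probability $1-O(n)\exp(-\Omega(m^{1/3}))$ over the initialization, for all $(\z^{(i)})_{i=1}^n$ and all $W,\tilde W$ in the ball'', so all that is needed is to plug in $n=2^T$ and simplify. Hence the lemma follows immediately, and no separate concentration argument specific to the uniform-in-$\z$ case is required.
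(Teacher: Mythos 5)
Your proposal is correct and is essentially the paper's own proof: the paper likewise deduces the lemma by a direct union bound applying Lemma~\ref{lemma:deviation_from_ntk_wang} over the finite discrete input domain (it unions over the $2^d$ bit configurations of $\x$ and the $T-d$ time indices, while you enumerate all $2^T$ sequences $\z$, an immaterial difference since both give failure probability $\exp\left(O\left(T\log T\right)-\Omega\left(m^{\frac{1}{3}}\right)\right)$). Your added check that the good event depends only on the initialization, uniformly in $W,\tilde{W}$ and $\z$, is exactly the point that makes the union bound valid.
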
   
\begin{proof}
	Simple union bound on all the possible $d$ bits configurations and $T-d$ $t$'s proves the bound in lemma~\ref{lemma:deviation_from_ntk_wang} holds simultaneously for all $\x$'s and $t$'s.
\end{proof}

\section{Bit-Subset Parity End-To-End Negative Result Proofs}\label{section:negatives_resutls_proofs}

In this section, we present a theorem stating that without intermediate supervision, for any neural network with gradients that are polynomially bounded, one must use an exponential number of GD steps to achieve non negligible accuracy in the above-presented task of bit subset parity.
\begin{theorem}\label{theorem:end_to_end_exponential_steps_main_text}
Let $f_\theta$ be any neural-network with $\mathbb{E}_{\x}\left[\left\Vert \frac{\partial}{\partial\theta}f_{\theta}\left(\x\right)\right\Vert ^{2}\right]\,=\,O\left(\text{poly}\left(d\right)\right)$\footnote{For any $\theta$ that is reachable by $\mathcal{A}$ with $n$ iterations.} (polynomially bounded gradients), and let $\mathcal{A}$ be any iterative gradient-based\footnote{See footnote~\ref{footnote:negatives_only_for_gd}.} optimization algorithm that runs for $n=O\left(\text{poly}\left(d\right)\right)$ iterations, and at each iteration receives $\Omega\left(e^{-\nicefrac{d}{3}}\right)$-approximation of~$\mathbb{E}_{\x}\left[\nabla l\left(y,f_{\theta}\left(\x\right)\right)\right]$. Then, with probability of at least $1-O\left(e^{-\nicefrac{d}{3}}\right)$ over the target parity function, the loss of $\mathcal{A}$ will be \textbf{higher} than $\frac{1}{2}-O(e^{-d})$.
\end{theorem}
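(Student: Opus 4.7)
The plan is a Fourier-analytic statistical-query lower bound adapted to the bounded-noise gradient oracle. The key observation is that, under the uniform distribution on $\{0,1\}^d$, the parity $\chi_\mathcal{T}(\x) := (-1)^{\sum_{j\in\mathcal{T}} x_j}$ is a character of the Boolean cube, so for standard losses (squared, logistic, etc.) the dependence of $\mathbb{E}_\x[\nabla_\theta l(\chi_\mathcal{T}(\x),f_\theta(\x))]$ on the target subset $\mathcal{T}$ is captured \emph{entirely} by a single Fourier coefficient $\widehat{\nabla_\theta f_\theta}(\mathcal{T}) = \mathbb{E}_\x[\chi_\mathcal{T}(\x)\,\nabla_\theta f_\theta(\x)]$. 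I would first write out this decomposition: the expected gradient splits as a target-independent term plus a term proportional to $\widehat{\nabla_\theta f_\theta}(\mathcal{T})$.

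\textbf{Bounding the target-sensitive signal.} Next I would apply Parseval's identity on the Boolean cube, which together with the assumption $\mathbb{E}_\x[\|\nabla_\theta f_\theta\|^2] = O(\mathrm{poly}(d))$ yields $\sum_{\mathcal{S}\subseteq[d]} \|\widehat{\nabla_\theta f_\theta}(\mathcal{S})\|^2 \le \mathrm{poly}(d)$. Averaging over the $\binom{d}{d/2} = \Theta(2^d/\sqrt d)$ size-$d/2$ subsets, the typical squared coefficient is of order $\mathrm{poly}(d)\cdot 2^{-d}$, so Markov's inequality implies that for all but an $O(\mathrm{poly}(d)\,e^{-d/3})$ fraction of $\mathcal{T}$ the target-sensitive term has norm below the $\Omega(e^{-d/3})$ accuracy slack of the oracle. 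The same Parseval argument, applied to $\mathrm{sign}(f_\theta)$ at the end, will upgrade ``no correlation'' into ``zero-one loss near $1/2$.''

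\textbf{Coupling trajectories.} The adaptive nature of $\theta^{(i)}$ prevents a direct application of the per-$\theta$ Markov bound, so I would run a coupling argument. Alongside the true trajectory I would define a ``null'' trajectory $\tilde\theta^{(i)}$ driven by the same initialization and the same oracle randomness, but with the target-sensitive Fourier component of the gradient \emph{erased}. Because this erased component has norm at most $O(e^{-d/3})$ on the Markov-good event, and the oracle is permitted to add noise of magnitude $\Omega(e^{-d/3})$, the adversary can realize its noise choice so that $\theta^{(i)} = \tilde\theta^{(i)}$ at every step. Crucially, $\tilde\theta^{(i)}$ is by construction a function of the initialization and oracle noise only, hence independent of $\mathcal{T}$; one may then invoke the Markov bound at each of the $n = \mathrm{poly}(d)$ null iterates and union-bound, losing only a polynomial factor in $d$ while preserving an overall $O(e^{-d/3})$ failure probability.

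\textbf{Conclusion and main obstacle.} Conditioned on the coupling succeeding, $\theta^{(n)}$ equals the target-independent $\tilde\theta^{(n)}$. A second Parseval/Markov application on $\mathrm{sign}(f_{\tilde\theta^{(n)}})$ shows that, for all but a further exponentially small fraction of $\mathcal{T}$, the correlation $\mathbb{E}_\x[\chi_\mathcal{T}(\x)\cdot\mathrm{sign}(f_{\tilde\theta^{(n)}}(\x))]$ is exponentially small, and the standard identity $\mathrm{loss}_{0\text{-}1} = \tfrac12 - \tfrac12\,\mathbb{E}_\x[\chi_\mathcal{T}\cdot\mathrm{sign}(f)]$ then yields the $\tfrac12 - O(e^{-d})$ conclusion. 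I expect the coupling step to be the main obstacle: one must check that the adversary's $\Omega(e^{-d/3})$ slack is simultaneously sufficient to absorb the perturbation at \emph{every} reachable $\tilde\theta^{(i)}$, and that the Markov-good event really can be applied to the \emph{null} iterate (which requires that the adversarial noise be constructed non-anticipatively from the null trajectory, not from the true one). Once this coupling is in place, the rest of the argument is essentially two applications of Parseval.
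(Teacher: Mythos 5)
Your proposal is, at its core, the same argument the paper gives, just reconstructed from scratch rather than delegated to citations. The paper's proof of this theorem has exactly your three-part skeleton: (i) a variance bound on the target-sensitive part of the gradient, derived from the pairwise orthogonality of parities — this is Lemma~\ref{lemma:parities_dont_have_correlation} plus Lemma~\ref{lemma:parities_low_variance}, which in turn invokes Theorem~1 of \cite{shalev2017failures}, and is mathematically identical to your Parseval-plus-Markov bound, since Parseval on the Boolean cube \emph{is} the statement that the character set is orthonormal; (ii) a coupling/oblivious-trajectory argument showing that an algorithm seeing only $\epsilon$-approximate gradients with $\epsilon^3 \gtrsim \sup_\w \mathrm{Var}$ cannot distinguish targets — the paper cites Theorem~4 of \cite{shamir2018distribution} for this (Lemma~\ref{lemma:low_variance_gradient_implies_traget_function_independence}), whereas you sketch the adversary-erases-the-signal coupling directly; and (iii) a final ``random guessing is bad'' step, which the paper proves as Lemma~\ref{lemma:parities_constant_error} via exactly the character-orthogonality computation you describe when you write $\mathrm{loss}_{0\text{-}1} = \tfrac12 - \tfrac12\,\mathbb{E}_\x[\chi_\mathcal{T}\cdot\mathrm{sign}(f)]$. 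You correctly identify the coupling as the nontrivial step; that is precisely the content the paper imports from Shamir. Two small quantitative cautions: first, your claimed ``$O(\mathrm{poly}(d)\,e^{-d/3})$ fraction'' after the Markov step does not quite follow as stated, since $\binom{d}{d/2} = \Theta(2^d/\sqrt d)$ and $2^{-d}e^{2d/3}$ decays with a much smaller exponent than $e^{-d/3}$ (indeed the paper itself glosses the same point by writing $|\mathcal{H}| = \Omega(e^d)$ when really $|\mathcal{H}| = \Theta(2^d/\sqrt d)$); the conclusion $e^{-\Omega(d)}$ still holds, but the explicit $e^{-d/3}$ constant requires matching the variance exponent to the oracle tolerance more carefully. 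Second, the non-anticipativity constraint you flag — that the adversarial perturbation must be a function of the null trajectory rather than the true one — is indeed the part of Shamir's proof that makes the coupling work for adaptive/randomized algorithms, and you are right that the proof would be incomplete without it.
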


This theorem follows directly from~\cite{shalev2017failures} and~\cite{shamir2018distribution} and from the fact that random guessing has high zero-one loss, see full proof below.   
Importantly, lemma~\ref{lemma:full_gradients_bounds} shows that $f^{\text{RNN}}$, for which we proved our positive results when intermediate supervision exists, upholds the polynomially bounded gradients requirement in theorem~\ref{theorem:end_to_end_exponential_steps_main_text} above with high probability. Moreover, section~\ref{section:extension_for_gd_with_finite_precision} shows that our positive results holds also for an finite-precision gradient descent optimization algorithm, for with theorem~\ref{theorem:end_to_end_exponential_steps_main_text} proves the negative results. So overall both the positive and negative proof are on the exact same setup and hence corollary~\ref{corollary:gap_in_parities}follows.

We will use abuse of notations and sometimes identify a predictor $h\in\mathcal{H}$ with the corresponding vector in $\left\{0,1\right\}^d$ that his $i$'th  coordinate equals to $1$ if $i$ is one of the indices in the subset and $0$ otherwise. We start by describing a measure from~\cite{shalev2017failures} that quantifies the amount of “signal” on the underlying target function contained in the gradient.
Consider the stochastic optimization problem associated with learning a target function $h$. 
\begin{align}\label{equation:F_h_optimization}
\min_{\w}&F_{h}\left(\w\right)\\\nonumber
\textrm{where:}~~F_{h}\left(\w\right)&\coloneqq\mathbb{E}_{\x}\left[l\left(p_{\w}\left(x\right),h\left(\x\right)\right)\right]
\end{align} 
where $l$ is a loss function, $\x$ are the stochastic inputs, and $p_{\w}$ is
some predictor parametrized by a parameter vector $\w$ (e.g. a neural network of a certain architecture). We assume that $F$ is differentiable. We measure the \emph{variance} of the gradient of $F$ with respect to $h$, when $h$ is drawn uniformly at random from a collection of candidate target functions $\mathcal{H}$:
\begin{align}
\text{Var}\left(\mathcal{H},F,\w\right)\coloneqq\mathbb{E}_{h}\left[\left\Vert \nabla F_{h}\left(\w\right)-\mathbb{E}_{\tilde{h}}\left[\nabla F_{\tilde{h}}\left(\w\right)\right]\right\Vert ^{2}\right]
\end{align}

To measure meaningful learnability, we will define the expected loss of \textit{random guessing} from $\alpha\in\left(0,1\right)$ fraction of the hypothesis class $\mathcal{H}$ as:
\begin{align}
E^{\mathcal{H},\alpha,l}\coloneqq\min_{\substack{h\in\mathcal{H}\\
		\tilde{\mathcal{H}}\subseteq H,\left|\tilde{\mathcal{H}}\right|\ge\alpha\cdot\left|\mathcal{H}\right|
	}
}\mathbb{E}_{\tilde{h}\sim\mathcal{U}\left(\tilde{\mathcal{H}}\right)}\left[\mathbb{E}_{\x}\left[l\left(h\left(\x\right),\tilde{h}\left(\x\right)\right)\right]\right]
\end{align}
As $\alpha$ approaches $1$, this measure reflects the expected loss to be attained by randomly assigning a hypothesis from $\mathcal{H}$.   We will use this measure in the following lemma, 
which is a direct corollary of theorem 4 in~\cite{shamir2018distribution}. Essentially, this addresses any iterative  algorithm (possibly randomized), which relies on an $\epsilon$-approximate gradient
oracle to optimize $F_h$ in eq~\ref{equation:F_h_optimization}. 
The lemma states that if the number of iterations is not larger than $\text{Var}\left(\mathcal{H},F,\w\right)^{\nicefrac{-1}{3}}$ then with high probability, the algorithm will return the same predictor independent of $h$.
\begin{lemma}\label{lemma:low_variance_gradient_implies_traget_function_independence}
Define $\epsilon=\sqrt[3]{\sup_{\w}\text{Var}\left(\mathcal{H},F,\w\right)}$ and let $\mathcal{A}$ be any iterative gradient-based\footnote{See footnote~\ref{footnote:negatives_only_for_gd}.} optimization algorithm that runs for $n$ iterations, and at each iteration receives $\Omega\left(\epsilon\right)$-approximation of $\nabla F_h\left(w\right)$. Then the following holds:
\begin{align}
	P_{h\sim\mathcal{U}\left(\mathcal{H}\right)}\left[\mathbb{E}_{\x}\left[l\left(h\left(\x\right),p_{\mathcal{A}}\left(\x\right)\right)\right]\ge E^{\mathcal{H},1-n\epsilon,l}\right]\ge1-n\epsilon
\end{align}
\end{lemma}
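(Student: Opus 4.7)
The plan is to reduce the lemma to the mean-oracle indistinguishability argument underlying Theorem~4 of~\cite{shamir2018distribution}. First I would introduce the \emph{target-independent} oracle $\mathcal{O}_{\text{mean}}(\w)\coloneqq\mathbb{E}_{\tilde h\sim\mathcal{U}(\mathcal{H})}\!\left[\nabla F_{\tilde h}(\w)\right]$, which by construction does not depend on the sampled~$h$, and observe that by the definition $\epsilon=\sqrt[3]{\sup_{\w}\mathrm{Var}(\mathcal{H},F,\w)}$ and Markov's inequality,
\[
P_{h}\!\left[\,\|\nabla F_{h}(\w)-\mathcal{O}_{\text{mean}}(\w)\|\ge\epsilon\,\right]\;\le\;\frac{\mathrm{Var}(\mathcal{H},F,\w)}{\epsilon^{2}}\;\le\;\epsilon
\]
at every fixed $\w$. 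Hence, at each individual query point, the true gradient $\nabla F_{h}(\w)$ agrees with $\mathcal{O}_{\text{mean}}(\w)$ to within $\epsilon$ on a $1-\epsilon$ fraction of targets.

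Next I would apply a union bound over the (at most) $n$ query points visited by the iterative algorithm $\mathcal{A}$. Let $\tilde{\mathcal{H}}\subseteq\mathcal{H}$ denote the set of targets for which $\|\nabla F_{h}(\w_{i})-\mathcal{O}_{\text{mean}}(\w_{i})\|\le\epsilon$ holds simultaneously at every iteration $i\in[n]$; then $|\tilde{\mathcal{H}}|\ge(1-n\epsilon)|\mathcal{H}|$. Because $\mathcal{A}$ is only guaranteed to receive an $\Omega(\epsilon)$-approximation of the gradient, for every $h\in\tilde{\mathcal{H}}$ the value $\mathcal{O}_{\text{mean}}(\w_{i})$ is itself an admissible oracle response in place of $\nabla F_{h}(\w_{i})$. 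Coupling the algorithm's internal randomness across the two executions, the joint law of the query sequence and of the returned predictor under the true oracle equals its law under $\mathcal{O}_{\text{mean}}$; in particular, conditional on $h\in\tilde{\mathcal{H}}$, the distribution of $p_{\mathcal{A}}$ does not depend on $h$.

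Writing $p^{\star}$ for this common (possibly random) output, the pair $(p^{\star},\tilde{\mathcal{H}})$ is a feasible choice in the minimisation defining $E^{\mathcal{H},1-n\epsilon,l}$, so
\[
\mathbb{E}_{h\sim\mathcal{U}(\tilde{\mathcal{H}})}\!\left[\mathbb{E}_{\x}\!\left[l(h(\x),p^{\star}(\x))\right]\right]\;\ge\;E^{\mathcal{H},1-n\epsilon,l}.
\]
Since the expected loss on the left is an average over $\tilde{\mathcal{H}}$, and $p_{\mathcal{A}}$ is distributed as $p^{\star}$ for every $h\in\tilde{\mathcal{H}}$, the event $\{\mathbb{E}_{\x}[l(h(\x),p_{\mathcal{A}}(\x))]\ge E^{\mathcal{H},1-n\epsilon,l}\}$ contains the event $\{h\in\tilde{\mathcal{H}}\}$ up to absorbing the algorithm's internal randomness, which occurs with probability at least $1-n\epsilon$ over $h\sim\mathcal{U}(\mathcal{H})$.

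The hard part will be the indistinguishability step: the adversarial $\Omega(\epsilon)$-oracle assumption has to be reinterpreted as saying that every vector within an $\Omega(\epsilon)$-ball of the true gradient is an admissible response, and one must check that the $\epsilon$-closeness of $\mathcal{O}_{\text{mean}}$ to $\nabla F_{h}$ (established via Markov above) lies inside that ball with room to spare, for \emph{every} iteration and along a coupled run of $\mathcal{A}$. Because this coupling must propagate through the algorithm's possibly adaptive and randomized query strategy, the cleanest route is to cite Theorem~4 of~\cite{shamir2018distribution} as a black box rather than redo the induction; my role in the proof is merely to verify that the hypothesis of that theorem — bounded gradient variance combined with an $\Omega(\epsilon)$-accurate oracle — is met under our definition of $\epsilon$.
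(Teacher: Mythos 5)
Your overall strategy -- verify that the gradient variance and the $\Omega(\epsilon)$ oracle-accuracy hypotheses hold and then cite Theorem~4 of \cite{shamir2018distribution} as a black box -- is exactly what the paper does: the paper's ``proof'' of this lemma is simply the one-line remark that it is ``a direct corollary of theorem 4 in~\cite{shamir2018distribution}.'' So your approach and the paper's approach coincide at the level where a proof is actually being given.

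That said, the intuitive reconstruction you offer as a sketch contains two steps that do not go through as written, and you should be careful not to mistake them for a self-contained argument. First, you treat $(p^{\star},\tilde{\mathcal{H}})$ as ``a feasible choice in the minimisation defining $E^{\mathcal{H},1-n\epsilon,l}$,'' but that minimisation ranges only over predictors $h\in\mathcal{H}$, while the algorithm's output $p_{\mathcal{A}}$ (a trained network) need not lie in $\mathcal{H}$; so the inequality $\mathbb{E}_{h\sim\mathcal{U}(\tilde{\mathcal{H}})}\!\left[\mathbb{E}_{\x}\!\left[l(h(\x),p^{\star}(\x))\right]\right]\ge E^{\mathcal{H},1-n\epsilon,l}$ does not follow from feasibility. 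Second, and more seriously, even granting that inequality, it is an \emph{average} lower bound over $h\in\tilde{\mathcal{H}}$, whereas the lemma asserts a \emph{pointwise} lower bound on the loss for each $h$ in an event of measure at least $1-n\epsilon$. An average being at least $E$ is perfectly consistent with a constant fraction of $h\in\tilde{\mathcal{H}}$ having loss below $E$, so the claimed containment $\{h\in\tilde{\mathcal{H}}\}\subseteq\{\mathbb{E}_{\x}[l(h(\x),p_{\mathcal{A}}(\x))]\ge E^{\mathcal{H},1-n\epsilon,l}\}$ is not established by your argument. These are precisely the places where you must rely on the actual statement of Shamir's Theorem~4 (and on the structure of $E^{\mathcal{H},\alpha,l}$) rather than on the heuristic Markov-plus-union-bound picture. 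Since you already flag that the coupling step should be delegated to the cited theorem, the fix is simply to delegate the conclusion as well, rather than attempting to re-derive it from the average loss bound.
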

Lemma~\ref{lemma:low_variance_gradient_implies_traget_function_independence} above assures us that in order to prove negative learning results regarding some task, it is enough to show it has both exponentially low variance of gradient and high random guessing error. Now we will show both of these for the task of learning bit subset parity presented in section~\ref{section:learning_parities_with_sqq_to_seq}.
Following~\cite{shalev2017failures}, we start by showing that different parities functions are uncorrelated, and hence this task has exponentially low gradient variance:
\begin{lemma}\label{lemma:parities_dont_have_correlation}
For any $\left(i_{j}\right)_{j=1}^{\frac{d}{2}}\neq\left(\tilde{i}_{j}\right)_{j=1}^{\frac{d}{2}}$ we have that:
\begin{align}
	\mathbb{E}_{\x}\left[\left(-1\right)^{\sum_{j=1}^{\frac{d}{2}}x_{i_{j}}}\left(-1\right)^{\sum_{j=1}^{\frac{d}{2}}x_{\tilde{i}_{j}}}\right]=0
\end{align}
\end{lemma}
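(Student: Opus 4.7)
The plan is to use the standard orthogonality of characters (parity functions) on the Boolean hypercube. The key observation is that the product of two parity functions is again a parity function, but taken over the symmetric difference of the two underlying index sets, and this symmetric difference is non-empty precisely because the index tuples differ.

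First, I would merge the two exponents. Since $(-1)^{a}(-1)^{b} = (-1)^{a+b}$ for integers $a,b$, one can write
\begin{equation}
	\left(-1\right)^{\sum_{j=1}^{d/2}x_{i_{j}}}\left(-1\right)^{\sum_{j=1}^{d/2}x_{\tilde{i}_{j}}}=\left(-1\right)^{\sum_{k\in I}x_{k}+\sum_{k\in \tilde I}x_{k}},
\end{equation}
where $I=\{i_1,\dots,i_{d/2}\}$ and $\tilde I=\{\tilde i_1,\dots,\tilde i_{d/2}\}$. Grouping indices that appear in both sets, and using that $(-1)^{2 x_k}=1$, the exponent reduces modulo $2$ to a sum over the symmetric difference $S\coloneqq I\triangle\tilde I$. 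Thus the product equals $(-1)^{\sum_{k\in S} x_k}$.

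Next, I would argue that $S$ is non-empty. By assumption the tuples $(i_j)$ and $(\tilde i_j)$ are distinct; since the indices within each tuple are unique, this means the sets $I$ and $\tilde I$ differ, so $S\neq\emptyset$. Then, because $\x$ is uniform on $\{0,1\}^d$, the coordinates $\{x_k\}_{k\in S}$ are mutually independent and each uniform on $\{0,1\}$, so the expectation factors as
\begin{equation}
	\mathbb{E}_{\x}\!\left[(-1)^{\sum_{k\in S}x_{k}}\right]=\prod_{k\in S}\mathbb{E}\!\left[(-1)^{x_{k}}\right].
\end{equation}
Finally, each factor vanishes, since $\mathbb{E}[(-1)^{x_k}]=\tfrac{1}{2}\cdot 1+\tfrac{1}{2}\cdot(-1)=0$, and a product containing a zero factor is zero, completing the proof.

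There is essentially no main obstacle here; this is a one-line character-orthogonality argument. The only subtlety worth flagging is the step that collapses the combined exponent down to the symmetric difference via $(-1)^{2x_k}=1$, and the mild bookkeeping that the index tuples being distinct as ordered sequences of distinct entries is equivalent to the underlying sets being distinct, so $|S|\geq 1$.
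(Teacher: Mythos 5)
Your proof is correct and is essentially the same character-orthogonality argument the paper uses: both rewrite the product of parities as a single parity, factor the expectation across independent coordinates, and exhibit a vanishing factor coming from a coordinate where the two index sets disagree. The only cosmetic difference is that you reduce to the symmetric difference modulo $2$ before factoring, whereas the paper factors over all $d$ coordinates and observes that each factor is $\tfrac{(-1)^{h_j+\tilde h_j}+1}{2}$, which is $1$ off the symmetric difference and $0$ on it.
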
 
\begin{proof}
Denote by $h\neq\tilde{h}\in\left\{0,1\right\}^d$ the vectors that corresponding to the $\left(i_{j}\right)_{j=1}^{\frac{d}{2}},\left(\tilde{i}_{j}\right)_{j=1}^{\frac{d}{2}}$. 
Then 
\begin{align}
	\mathbb{E}_{\x}\left[\left(-1\right)^{\sum_{j=1}^{\frac{d}{2}}x_{i_{j}}}\left(-1\right)^{\sum_{j=1}^{\frac{d}{2}}x_{\tilde{i}_{j}}}\right]&=\mathbb{E}_{\x}\left[\left(-1\right)^{\left\langle \x,h\right\rangle }\left(-1\right)^{\left\langle \x,\tilde{h}\right\rangle }\right]\\&=\mathbb{E}_{\x}\left[\left(-1\right)^{\left\langle \x,h+\tilde{h}\right\rangle }\right]\\&=\mathbb{E}_{\x}\left[\prod_{j=1}^{d}\left(-1\right)^{x_{j}\left(h_{j}+\tilde{h}_{j}\right)}\right]
\end{align}
Since the coordinates are independent we can swap the order of the expectation and the product:
\begin{align}
	&=\prod_{j=1}^{d}\mathbb{E}_{\x}\left[\left(-1\right)^{x_{j}\left(h_{j}+\tilde{h}_{j}\right)}\right]=\prod_{j=1}^{d}\frac{\left(-1\right)^{h_{j}+\tilde{h}_{j}}+\left(-1\right)^{0}}{2}\\&=\prod_{j=1}^{d}\frac{\left(-1\right)^{h_{j}+\tilde{h}_{j}}+1}{2}
\end{align}
Finally, since $h\neq\tilde{h}$ there exists some $j\in\left[d\right]$ such that $h_j\neq\tilde{h}_j$ and therefore $h_{j}+\tilde{h}_{j}=1$ and $\prod_{j=1}^{d}\frac{\left(-1\right)^{h_{j}+\tilde{h}_{j}}+1}{2}=0$.
\end{proof}

Now following~\cite{shalev2017failures}, we show that the zero correlation between different parities implies exponentially low gradient variance.
\begin{lemma}\label{lemma:parities_low_variance}
Assuming that $\mathbb{E}_{\x}\left[\left\Vert \frac{\partial}{\partial\w}p_{\w}\left(\x\right)\right\Vert ^{2}\right]\,=\,O\left(\text{poly}\left(d\right)\right)$ ,  then $\sup_{\w}\text{Var}\left(\mathcal{H},F,\w\right)=O\left(\left|\mathcal{H}\right|^{-1}\right)$ where $\mathcal{H}$ denote the hypothesis class of $d$ dimensional parities described in section~\ref{section:learning_parities_with_sqq_to_seq}.
\end{lemma}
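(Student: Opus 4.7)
The plan is to follow the Fourier-style orthogonality argument of~\cite{shalev2017failures}, leveraging lemma~\ref{lemma:parities_dont_have_correlation} which establishes that distinct parities form an orthonormal system in $L^{2}(\{0,1\}^{d})$ under the uniform measure. First, by the chain rule the $k$-th coordinate of the gradient is $\partial_{w_k}F_h(\w)=\mathbb{E}_\x[\partial_p l(p_\w(\x),h(\x))\,\partial_{w_k}p_\w(\x)]$. Since $h(\x)\in\{-1,1\}$, any function of $h(\x)$ is affine in it, so we may write $\partial_p l(p_\w(\x),h(\x))=A_\w(\x)+B_\w(\x)\,h(\x)$ for some functions $A_\w,B_\w$ depending on $p_\w$ but not on $h$. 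For instance, binary cross-entropy yields $\partial_p l=\sigma(p_\w(\x))-\tfrac12(1+h(\x))$, so $B_\w\equiv-\tfrac12$. The $A_\w$ term contributes equally to $\nabla F_h$ for every $h$ and hence drops out when we subtract the mean $\mathbb{E}_{\tilde h}[\nabla F_{\tilde h}(\w)]$.

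Using $\mathrm{Var}_h(X)\le\mathbb{E}_h[X^{2}]$, the $k$-th diagonal variance contribution is bounded by $\mathbb{E}_h\bigl[|\langle h,\,B_\w\cdot\partial_{w_k}p_\w\rangle_{L^{2}}|^{2}\bigr]$. Bessel's inequality applied to the orthonormal system $\mathcal{H}$ yields, for each $k$,
\begin{equation*}
\sum_{h\in\mathcal{H}}\bigl|\langle h,\,B_\w\cdot\partial_{w_k}p_\w\rangle_{L^{2}}\bigr|^{2}\;\le\;\|B_\w\cdot\partial_{w_k}p_\w\|_{L^{2}}^{2}\;\le\;\|B_\w\|_\infty^{2}\,\mathbb{E}_\x\bigl[(\partial_{w_k}p_\w(\x))^{2}\bigr].
\end{equation*}
Dividing by $|\mathcal{H}|$ to convert the sum into an expectation over $h\sim\mathcal{U}(\mathcal{H})$ and summing over coordinates $k$ then gives
\begin{equation*}
\mathrm{Var}(\mathcal{H},F,\w)\;\le\;\frac{\|B_\w\|_\infty^{2}}{|\mathcal{H}|}\,\mathbb{E}_\x\bigl[\|\nabla_\w p_\w(\x)\|^{2}\bigr]\;=\;O\!\bigl(|\mathcal{H}|^{-1}\bigr),
\end{equation*}
where the polynomially-bounded-gradient hypothesis and the boundedness of $B_\w$ contribute only polynomial factors absorbed into the asymptotic notation. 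The bound is uniform in $\w$, so taking the supremum poses no additional difficulty.

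The main obstacle is purely conceptual: recognizing that $\partial_p l(\cdot,h(\x))$ is necessarily affine in $h(\x)\in\{-1,1\}$, which is precisely what lets the orthogonality of lemma~\ref{lemma:parities_dont_have_correlation} be cashed in through a one-line Bessel estimate. After this affine decomposition is in hand, the remainder is a routine Parseval-type calculation and no further subtleties arise.
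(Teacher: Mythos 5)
Your argument is correct and follows the same route as the paper, which simply invokes Theorem~1 of \cite{shalev2017failures} as a black box together with the orthogonality of distinct parities established in Lemma~\ref{lemma:parities_dont_have_correlation}. You have, in effect, reconstructed the proof of that cited theorem from scratch --- decomposing the loss derivative affinely in the binary label $h(\x)\in\{-1,1\}$ and applying Bessel's inequality coordinatewise to the orthonormal parity family --- so the content is identical to the paper's proof, merely unpacked.
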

\begin{proof}
We use theorem 1 from~\cite{shalev2017failures}. Essentially, this theorem shows that if the functions in the hypothesis class are uncorrelated, then the variance is upper bounded by $\mathbb{E}_{\x}\left[\left\Vert \frac{\partial}{\partial\w}p_{\w}\left(\x\right)\right\Vert ^{2}\right]$ times the inverse of the hypothesis class size.

In our case, lemma~\ref{lemma:parities_dont_have_correlation} above shows that:
\begin{align}           \left(i_{j}\right)_{j=1}^{\frac{d}{2}}\neq\left(\tilde{i}_{j}\right)_{j=1}^{\frac{d}{2}}\implies\mathbb{E}_{\x}\left[\left(-1\right)^{\sum_{j=1}^{\frac{d}{2}}x_{i_{j}}}\left(-1\right)^{\sum_{j=1}^{\frac{d}{2}}x_{\tilde{i}_{j}}}\right]=0
\end{align}
And therefore we have that
\begin{align} \text{Var}\left(\mathcal{H},F,\w\right)\le\left|\mathcal{H}\right|^{-1}\mathbb{E}_{\x}\left[\left\Vert \frac{\partial}{\partial\w}p_{\w}\left(\x\right)\right\Vert ^{2}\right].
\end{align}.
\end{proof}

Having low gradient variance alone does not imply that learning is impossible, it only implies that all hypotheses perform similarly. To show the negative result for our parity problem, we must also show that the random guessing error is high. Lemma~\ref{lemma:parities_constant_error} establishes this, showing that the task of learning bit-subset parity has non-vanishing random guessing error of $\Theta\left(1\right)$ with respect to the dimension $d$, and linear with respect to the fraction of the hypothesis class from which the guessing occurs.
\begin{lemma}\label{lemma:parities_constant_error}
Let $\mathcal{H}$ denote the hypothesis class of $d$ dimensional parities described in section~\ref{section:learning_parities_with_sqq_to_seq}, and $l_{0-1}$ denote the zero one loss. then the following holds:
\begin{align}
	\forall\alpha\in\left(0,1\right)\quad E^{\mathcal{H},\alpha,l_{0-1}}=\frac{1}{2}\left(1-\frac{1}{\alpha\left|\mathcal{H}\right|}\right)
\end{align}
\end{lemma}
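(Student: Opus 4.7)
The plan is to reduce the claim to a direct combinatorial calculation by first exploiting the $\pm 1$ structure of parity outputs. Since every $h\in\mathcal{H}$ takes values in $\{-1,+1\}$, the zero-one loss satisfies $l_{0-1}(h(\x),\tilde{h}(\x)) = (1-h(\x)\tilde{h}(\x))/2$, so the inner $\x$-expectation factors through the pairwise correlation $\mathbb{E}_{\x}[h(\x)\tilde{h}(\x)]$. Lemma~\ref{lemma:parities_dont_have_correlation} does the heavy lifting here: that correlation is $0$ for $\tilde{h}\neq h$ and trivially $1$ for $\tilde{h}=h$, which collapses the $\x$-expectation to the two-valued function
\[
\mathbb{E}_{\x}\left[l_{0-1}(h(\x),\tilde{h}(\x))\right] \;=\; \tfrac{1}{2}\,\mathbbm{1}[\tilde{h}\neq h].
\]

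Next I would plug this into the definition of $E^{\mathcal{H},\alpha,l_{0-1}}$. Uniform averaging of $\tfrac{1}{2}\mathbbm{1}[\tilde{h}\neq h]$ over $\tilde{h}\in\tilde{\mathcal{H}}$ simply counts the fraction of $\tilde{h}\in\tilde{\mathcal{H}}$ that differ from the chosen $h$, so
\[
\mathbb{E}_{\tilde{h}\sim\mathcal{U}(\tilde{\mathcal{H}})}\!\left[\mathbb{E}_{\x}[l_{0-1}(h(\x),\tilde{h}(\x))]\right] \;=\; \frac{1}{2}\left(1 - \frac{\mathbbm{1}[h\in\tilde{\mathcal{H}}]}{|\tilde{\mathcal{H}}|}\right).
\]
The final step is to minimise this expression jointly over $h\in\mathcal{H}$ and $\tilde{\mathcal{H}}\subseteq\mathcal{H}$ subject to $|\tilde{\mathcal{H}}|\ge\alpha|\mathcal{H}|$. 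One should clearly force $h\in\tilde{\mathcal{H}}$ (otherwise the indicator is zero and the value becomes the worst possible $1/2$) and then make $|\tilde{\mathcal{H}}|$ as small as the constraint allows, i.e.\ $|\tilde{\mathcal{H}}|=\alpha|\mathcal{H}|$. Substituting yields the claimed value $\tfrac{1}{2}(1-\tfrac{1}{\alpha|\mathcal{H}|})$.

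I do not expect any genuine obstacle here: once lemma~\ref{lemma:parities_dont_have_correlation} is available, the rest is a one-line averaging argument over a $\{0,\tfrac{1}{2}\}$-valued function plus a trivial optimisation. The only bookkeeping subtleties are (i) the conversion between the $\{0,1\}$ input alphabet used by the sequence-to-sequence model and the $\pm 1$ output alphabet in which the identity $l_{0-1}(a,b)=(1-ab)/2$ is natural (already flagged by the paper's footnote after the definition of $y_t$), and (ii) the integrality of $\alpha|\mathcal{H}|$, which one should read as the continuous relaxation of the exact answer $\tfrac{1}{2}(1-\tfrac{1}{\lceil\alpha|\mathcal{H}|\rceil})$. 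Neither affects the asymptotic conclusion that random guessing from any constant-fraction subclass of $\mathcal{H}$ incurs zero-one loss bounded away from $0$ by a constant, which is what the downstream corollary~\ref{corollary:gap_in_parities} requires.
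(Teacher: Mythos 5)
Your proof is correct and takes essentially the same route as the paper; the one organisational difference is that you invoke Lemma~\ref{lemma:parities_dont_have_correlation} to replace the $\x$-expectation by $\tfrac{1}{2}\mathbbm{1}[\tilde{h}\neq h]$ in one step, whereas the paper re-derives that fact inline (factoring $\mathbb{E}_{\x}[(-1)^{\langle\x,h+\tilde{h}\rangle}]$ over the independent coordinates and obtaining $1_{h=\tilde{h}}$), stopping short of spelling out the final minimisation over $h$ and $\tilde{\mathcal{H}}$ that you make explicit. Your remark about the $\lceil\alpha|\mathcal{H}|\rceil$ integrality issue is a valid small caveat that the paper glosses over; it does not affect the downstream use of the lemma.
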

\begin{proof}
Let $h\in\mathcal{H}$, we begin by writing the zero-one loss with elementary algebraic operation:
\begin{align}
	\mathbb{E}_{\tilde{h}\sim\mathcal{U}\left(\mathcal{\tilde{H}}\right)}\left[\mathbb{E}_{\x}\left[l_{0-1}\left(h\left(\x\right),\tilde{h}\left(\x\right)\right)\right]\right]&=\mathbb{E}_{\tilde{h}\sim\mathcal{U}\left(\mathcal{\tilde{H}}\subseteq\left\{ 0,1\right\} ^{d}\right)}\left[\mathbb{E}_{\x}1_{\left(-1\right)^{\left\langle \x,h\right\rangle }\neq\left(-1\right)^{\left\langle \x,\tilde{h}\right\rangle }}\right]\\&=\mathbb{E}_{\tilde{h}\sim\mathcal{U}\left(\mathcal{\tilde{H}}\right)}\left[\mathbb{E}_{\x}\left(\frac{1-\left(-1\right)^{\left\langle \x,h\right\rangle }\cdot\left(-1\right)^{\left\langle \x,\tilde{h}\right\rangle }}{2}\right)\right]\\&=\frac{1}{2}\left(1-\mathbb{E}_{\tilde{h}\sim\mathcal{U}\left(\mathcal{\tilde{H}}\right)}\mathbb{E}_{\x}\left(\left(-1\right)^{\left\langle \x,h+\tilde{h}\right\rangle }\right)\right)
\end{align}  
Since the coordinates are independent we can swap the order of the expectation and the product:
\begin{align}&=\frac{1}{2}\left(1-\mathbb{E}_{\tilde{h}\sim\mathcal{U}\left(\mathcal{\tilde{H}}\right)}\left(\prod_{i=1}^{d}\mathbb{E}_{x_{i}}\left(-1\right)^{x_{i}\left(h_{i}+\tilde{h}_{i}\right)}\right)\right)\\&=\frac{1}{2}\left(1-\frac{1}{2^{d}}\mathbb{E}_{\tilde{h}\sim\mathcal{U}\left(\mathcal{\tilde{H}}\right)}\left[\prod_{i=1}^{d}\left[\left(-1\right)^{h_{i}+\tilde{h}_{i}}+\left(-1\right)^0\right]\right]\right)
\end{align}  
Now, in cases where $h\neq\tilde{h}$ there exists some $j\in\left[d\right]$ such that $h_j\neq\tilde{h}_j$ . Therefore, in such cases $h_{j}+\tilde{h}_{j}=1$ which implies that $\prod_{j=1}^{d}\frac{\left(-1\right)^{h_{j}+\tilde{h}_{j}}+1}{2}=0$ and we get that:
\begin{align}
	&=\frac{1}{2}\left(1-\frac{1}{2^{d}}\mathbb{E}_{\tilde{h}\sim\mathcal{U}\left(\mathcal{\tilde{H}}\right)}\left[2^{d}\cdot1_{h=\tilde{h}}\right]\right)\\&=\frac{1}{2}\left(1-\mathbb{E}_{\tilde{h}\sim\mathcal{U}\left(\mathcal{\tilde{H}}\right)}\left[1_{h=\tilde{h}}\right]\right)
\end{align}
Now clearly
\begin{align} \mathbb{E}_{\tilde{h}\sim\mathcal{U}\left(\mathcal{\tilde{H}}\right)}\left[1_{h=\tilde{h}}\right]=\begin{cases}
		\frac{1}{\left|\mathcal{\tilde{H}}\right|} & h\in\mathcal{\tilde{H}}\\
		0 & h\notin\mathcal{\tilde{H}}
	\end{cases}
\end{align}  
And therefore
\begin{align}
	\mathbb{E}_{\tilde{h}\sim\mathcal{U}\left(\mathcal{\tilde{H}}\right)}\left[\mathbb{E}_{\x}\left[l_{0-1}\left(h\left(\x\right),\tilde{h}\left(\x\right)\right)\right]\right]=\frac{1}{2}-\frac{1}{2}\cdot\begin{cases}
		\frac{1}{\left|\mathcal{\tilde{H}}\right|} & h\in\mathcal{\tilde{H}}\\
		0 & h\notin\mathcal{\tilde{H}}
	\end{cases}
\end{align}
\end{proof}

Finally, by combining lemmas~\ref{lemma:parities_low_variance},\ref{lemma:parities_constant_error} with lemma~\ref{lemma:low_variance_gradient_implies_traget_function_independence} we prove theorem~\ref{theorem:end_to_end_exponential_steps_main_text}.
\begin{proof}
By lemma~\ref{lemma:parities_low_variance} we know that $\sup_{\w}\text{Var}\left(\mathcal{H},F,\w\right)=O\left(\left|\mathcal{H}\right|^{-1}\right)$.
Now since $\binom{n}{k}>\left(\frac{n}{k}\right)^{k}$ for any $k<n$, we have that $\left|\mathcal{H}\right|=\binom{d}{\frac{d}{2}}=\Omega\left(e^{d}\right)$~\ie~we have exponentially low variance. Therefore, lemma~\ref{lemma:low_variance_gradient_implies_traget_function_independence} assure us that with probability of at least $1-n\cdot O\left(e^{-\nicefrac{d}{3}}\right)$ the loss of $\mathcal{A}$ will be higher than $E^{\mathcal{H},1-n\cdot O\left(\left|\mathcal{H}\right|^{-\frac{1}{3}}\right),l_{0-1}}$. Finally, lemma~\ref{lemma:parities_constant_error} implies that:
\begin{align}
	E^{\mathcal{H},1-n\cdot O\left(\left|\mathcal{H}\right|^{-\frac{1}{3}}\right),l_{0-1}}&=\frac{1}{2}\left(1-\frac{1}{\left|\mathcal{H}\right|-\max\left\{O\left(\left|\mathcal{H}\right|^{\frac{2}{3}}\right)\cdot n,\left|\mathcal{H}\right|-1\right\} }\right)
\end{align}
\end{proof}

\section{Bit-Subset Parity Experiments Full Details}\label{section:learning_parities_experiments_appendix}
Section~\ref{section:learning_parities_experiments} in the main text empirically demonstrates that there exists a large gap when using the commonly used Transformer architecture for learning bit subset parity with and without intermediate supervision. In this section, we present the full details of the experiments.

\begin{table}

\caption{Learning Bit-Subset Parity with Transformers results. We say that a model is better than random when its validation accuracy are higher than $60\%$. In addition, for each task we also evaluated the test accuracy after $100k$ training iteration. The reported test accuracies are of the models with the best hyper-parameters according to the validation loss. Each value after the $\pm$ sign indicates the two standard deviations over three different random seeds.}
\label{table:iteration_for_non_random_accuracy}
\centering
\vspace{+3mm}
\begin{tabular}{lllll}
	
	\toprule
	& \multicolumn{2}{c}{With Intermediate Supervision}   & \multicolumn{2}{c}{Without Intermediate Supervision}                  \\
	\cmidrule(r){2-3} 
	\cmidrule(r){4-5}
	Task Size     & Test Accuracy & \vtop{\hbox{\strut Iterations until better}\hbox{\strut than random}}    & Test Accuracy & \vtop{\hbox{\strut Iterations until better}\hbox{\strut than random}} \\
	\midrule
	$8$ Bits      & $100\% \pm 0.0$   & $79\pm21.16$    & $100\%\pm0.0$ & $185\pm 79.89$   \\
	$10$ Bits     &           &         & $100\%\pm 0.0$ & $843\pm 424.22$    \\
	$12$ Bits     &           &         & $100\%\pm 0.0$ & $4.049$K $\pm 1.132$K   \\
	$14$ Bits     &           &         & $100\%\pm 0.0$ & $31.16$K $\pm 7.23$K     \\
	$16$ Bits     &  $100\% \pm 0.0$ & $333\pm 67.002$    & $51.82\% \pm 1.06\%$ & $211.66$K $\pm 57.9$K    \\
	$18$ Bits     &           &         &    $50.16\% \pm 2.65\%$             & $1.534$M $\pm 1.094$M  \\
	$32$ Bits     & $100\% \pm 0.0$  & $1.48$K $\pm 61.101$  & $49.87\%\pm0.72\%$ & $>100$K\\
	$64$ Bits     &$ 100\% \pm 0.0$    & $11$K$\pm 1.154$K &  $49.93\%\pm 0.43\%$ & $>100$K\\
	$128$ Bits     & $99.96\%\pm 0.06\%$   & $58.33$K $\pm5.20$K & $50.71\%\pm 1.26\%$ & $>100$K \\
	$256$ Bits     & $48.24\%\pm 2.06\%$   & $ 720.16$K $\pm83.17$K & $50.34\%\pm 0.73\%$ & $>100$K \\
	\bottomrule
\end{tabular}
\end{table}

\begin{table}
\caption{Hyper-Parameters and the random seed that examined in the experiment of learning bit-subset parity with Transformers.}  
\label{table:hyperparametrs}
\centering
\begin{tabular}{ll}
	\toprule
	Learning Rate & $10^{-6}$, $10^{-5}$, $10^{-4}$ \\
	Weight Decay  & $0$, $10^{-6}$, $10^{-4}$, $10^{-2}$ \\
	Dropout       & $0$ \\
	Batch Size    & $32$ \\
	Warmup Steps  & $1k$\footnotemark\\
	Total Steps   & $100k$\footnotemark \\
	Number of Layers         & $12$ \\
	Hidden Size         & $768$ \\
	FFN Inner Hidden Size & $3072$ \\
	Initializer Range & $0.02$ \\
	Adam $\beta_1$ & $0.9$ \\
	Adam $\beta_2$ & $0.999$ \\
	Adam $\epsilon$ & $10^{-8}$ \\
	Validation Size      & $\min(1024, 12.5\% \text{ of the data})$ \\
	Test Size            & $\min(1024, 12.5\% \text{ of the data})$ \\
	Dataset Seed & $27$, $67$, $93$ \\
	\bottomrule
\end{tabular}
\end{table}
\footnotetext{For models that converged after less than $5K$ steps we disabled the learning rate warmup.}
\footnotetext{For models that did not converged after $100K$ steps we continue the training until convergence. Note that even for thus models Table~\ref{table:iteration_for_non_random_accuracy} reports the test accuracy at step $100k$.}

Following the bit-subset parity definition in section~\ref{section:learning_parities_with_sqq_to_seq}, we randomly sampled a subset of $\nicefrac{d}{2}$  predefined unique indices and then we randomly sampled non-overlapping training, validation and test datasets. For reproducibility purposes, Table~\ref{table:hyperparametrs} reports the random seeds that used for this sampling. Then, we used the standard implementation of GPT-2 from the transformers framework~\citep{wolf-etal-2020-transformers} and trained a BERT-Base size Transformer decoder model with a vocabulary size of $2$. Even though the small vocabulary size of our model may indicate that it is too shallow for its size~\citep{pmlr-v139-wies21a}, we still choose to use the standard Transformer-Base configuration. See full architecture and optimization details in Table~\ref{table:hyperparametrs}.
While the evaluation of a model without intermediate supervision is straightforward, we can simply use $argmax$ over the last token logits. This is no longer the case when intermediate supervision is used since at evaluation time the model must create its own intermediate steps.
As a result, in this case, we need a decoding algorithm that iteratively predicates the intermediate steps. Since we found in preliminary experiments that greedy decoding tends to be more efficient for longer sequences than random sampling, we choose to use it.

To complement Figure~\ref{figure:iteration_for_non_random_accuracy} in the main text, Table~\ref{table:iteration_for_non_random_accuracy} shows the first iteration\footnote{Validation accuracy was evaluated at $1$K-step intervals, except for models that converged in less than $4$K steps for which we run the evaluation every $10$ steps.} for which the model is better than random as well as the test accuracy after $100k$ training iterations. Though ``better than random" could be defined in a variety of ways. Figure~\ref{figure:grooking}~illustrates a typical learning trajectory and demonstrate the insignificant differences in definitions due to the observed grokking phenomenon~\citep{power2021grokking}.   
For each task we repeated the task $3$ times with different dataset seeds, and perform grid search over the hyper-parameters in Table~\ref{table:hyperparametrs}. Table~\ref{table:iteration_for_non_random_accuracy} reports the mean and standard deviation test accuracy of the best hyper-parameters according to the  validation binary cross-entropy loss, where we break ties by choosing the model that converged faster. As excepted, without intermediate supervision, the Transformer models can not learn beyond random guessing tasks with more than $18$ bits. In contrast, with intermediate supervision, the test accuracy is almost perfect even at $128$ bits.

\begin{figure}[h]
\begin{center}
	\centerline{\includegraphics[width=.5\columnwidth]{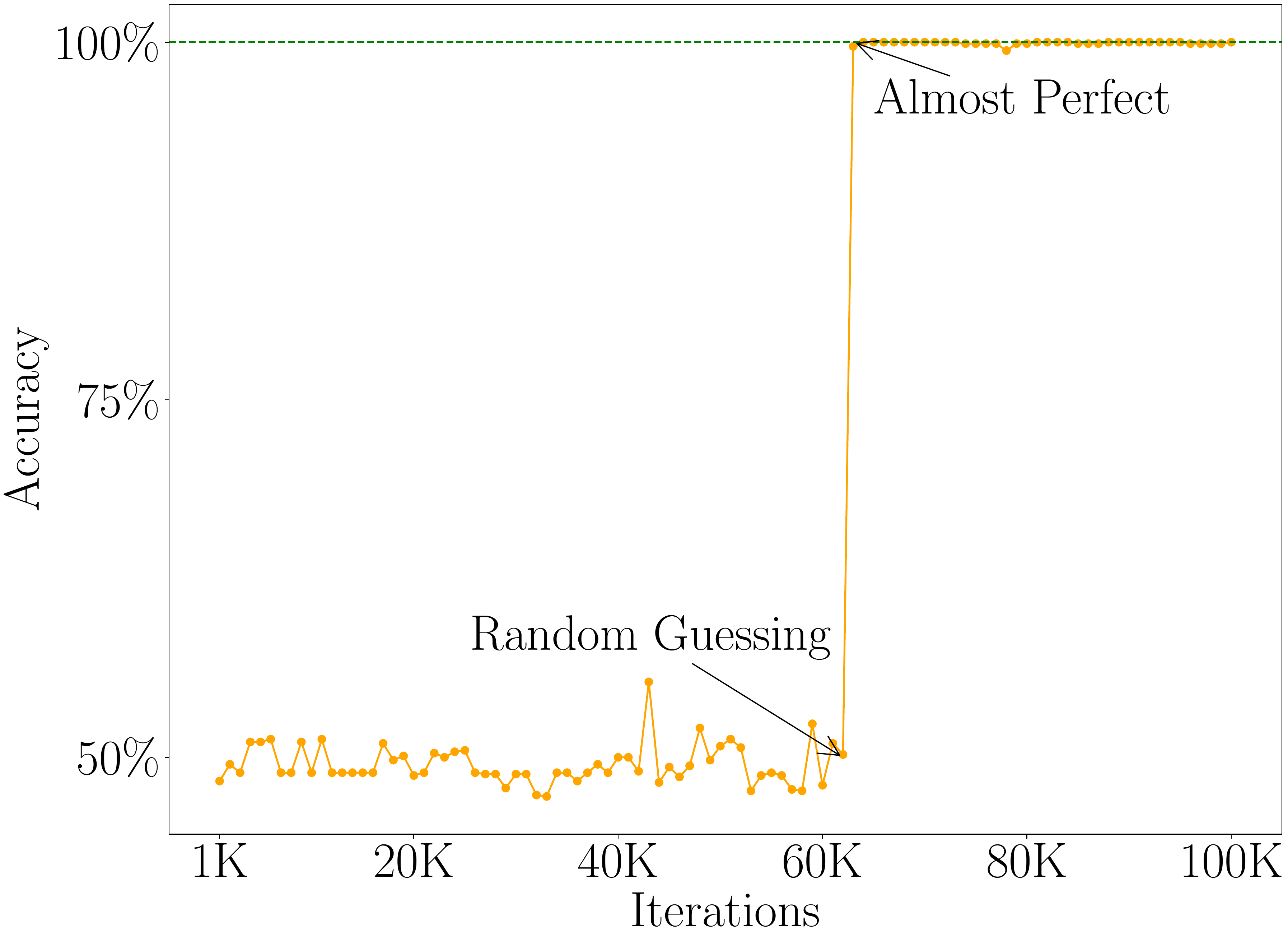}}
\end{center}

\caption{Illustration of a typical bit-subset parity learning trajectory. In these tasks we observed a grokking phenomenon~\citep{power2021grokking} where very soon after the validation accuracy became higher than random level it also became almost flawless (accuracy $>95\%$). Therefore, successful learning is not sensitive to the exact cutoff.
	\label{figure:grooking}}
\end{figure}

\end{document}